\newcommand{\U}{\mathsf{U}}
\newcommand{\size}{\mathrm{size}}
\newcommand{\dc}{{d_c}}
\newcommand{\FNO}{\bm{\mathsf{FNO}}}
\renewcommand{\tilde}{\widetilde}
\renewcommand{\hat}{\widehat}
\DeclareMathOperator*{\argmin}{argmin}
\newcommand{\embeds}{{\hookrightarrow}}
\renewcommand{\bar}{\overline}
\newcommand{\slot}{{\,\cdot\,}}
\newcommand{\supp}{\mathrm{supp}}
\newcommand{\R}{\mathbb{R}}
\newcommand{\C}{\mathbb{C}}
\newcommand{\E}{\mathbb{E}}
\newcommand{\N}{\mathbb{N}}
\newcommand{\Z}{\mathbb{Z}}
\newcommand{\Prob}{\mathrm{Prob}}
\newcommand{\Lip}{\mathrm{Lip}}
\newcommand{\cA}{\mathsf{\bm{A}}}
\newcommand{\cB}{\mathcal{B}}
\newcommand{\cD}{\mathcal{D}}
\newcommand{\cE}{\mathcal{E}}
\newcommand{\cF}{\mathcal{F}}
\newcommand{\cG}{\mathcal{G}}
\newcommand{\cH}{\mathcal{H}}
\newcommand{\cK}{\mathcal{K}}
\newcommand{\cL}{\mathcal{L}}
\newcommand{\cN}{\mathcal{N}}
\newcommand{\cP}{\mathcal{P}}
\newcommand{\cU}{\mathcal{U}}
\newcommand{\cV}{\mathcal{V}}
\newcommand{\cW}{\mathcal{W}}
\newcommand{\cX}{\mathcal{X}}
\newcommand{\cY}{\mathcal{Y}}
\newcommand{\cZ}{\mathcal{Z}}
\newcommand{\hL}{\widehat{\cL}}
\renewcommand{\hat}{\widehat}
\newcommand{\set}[2]{{\left\{ #1 \,\middle|\, #2 \right\}}}
\newcommand{\Enc}{\mathcal{E}}
\newcommand{\D}{\mathcal{D}}
\declaretheoremstyle[
  headfont=\normalfont\bfseries\itshape,
  numbered=unless unique,
  bodyfont=\normalfont,
  spaceabove=1em plus 0.75em minus 0.25em,
  spacebelow=1em plus 0.75em minus 0.25em,
  qed={},
]{deflt}
\theoremstyle{deflt}
\newtheorem{theorem}{Theorem}[section]
\newtheorem{remark}[theorem]{Remark}
\newtheorem{lemma}[theorem]{Lemma}
\newtheorem{proposition}[theorem]{Proposition}
\newtheorem{corollary}[theorem]{Corollary}
\numberwithin{equation}{section}
\numberwithin{theorem}{section}
\title{Data Complexity Estimates \\ for Operator Learning}
\author{Nikola B. Kovachki, Samuel Lanthaler, Hrushikesh Mhaskar}
\date{\today}
\begin{document}

\maketitle

\begin{abstract}
Operator learning has emerged as a new paradigm for the data-driven approximation of nonlinear operators. Despite its empirical success, the theoretical underpinnings governing the conditions for efficient operator learning remain incomplete. The present work develops theory to study the data complexity of operator learning, complementing existing research on the parametric complexity. We investigate the fundamental question: How many input/output samples are needed in operator learning to achieve a desired accuracy $\epsilon$? This question is addressed from the point of view of $n$-widths, and this work makes two key contributions. The first contribution is to derive lower bounds on $n$-widths for general classes of Lipschitz and Fr\'echet differentiable operators. These bounds rigorously demonstrate a ``curse of data-complexity'', revealing that learning on such general classes requires a sample size exponential in the inverse of the desired accuracy $\epsilon$. The second contribution of this work is to show that ``parametric efficiency'' implies ``data efficiency''; using the Fourier neural operator (FNO) as a case study, we show rigorously that on a narrower class of operators, efficiently approximated by FNO in terms of the number of tunable parameters, efficient operator learning is attainable in data complexity as well. Specifically, we show that if only an algebraically increasing number of tunable parameters is needed to reach a desired approximation accuracy, then an algebraically bounded number of data samples is also sufficient to achieve the same accuracy.
\end{abstract}

\section{Introduction}

In recent years, operator learning has emerged as a new paradigm for the data-driven approximation of operators arising in engineering and the physical sciences. Popular operator learning frameworks build on deep neural networks, suitably generalized to infinite dimensions. They approximate nonlinear operators, mapping input functions to output functions. Such operators abound in scientific computing and are often associated with the solution operator of an underlying partial differential equation (PDE), e.g. mapping initial conditions, boundary data, or coefficient fields to the solution of the PDE. In these applications, operator learning frameworks have shown great potential as surrogate models, achieving significant gains in computational efficiency over traditional numerical methods. Since they are data-driven, these methods can also be used to approximate operators for which a mathematical description in terms of a PDE is incomplete or not available.

While there is an increasing body of empirical evidence demonstrating the practical efficacy of operator learning frameworks, our theoretical understanding of these methods is still far from complete. There are several papers studying operator learning from an approximation theoretic perspective, aiming to bound the total number of parameters (model size) required to approximate a given operator, or general classes of operators, to a desired approximation accuracy. For a recent review, we refer to \cite{kls2024hna}. We highlight work on holomorphic operators, which represent a general class of operators for which \emph{efficient} quantitative error and complexity estimates can be derived. Approximation theoretic results for holomorphic operators build on the influential work by Cohen, DeVore and Schwab \cite{cohen2010convergence,cohen2011analytic}, including further developments in \cite{schwab2019deep,opschoor2022exponential,sz2023,herrmann2022neural,marcati2023exponential,lanthaler2021error,sz2023}. Under suitable conditions, these results allow one to prove that ReLU deep operator networks (DeepONet) approximating holomorphic operators can achieve convergence rates that decay algebraically in the number of tunable parameters of the architecture. Algebraic convergence rates have also been obtained for operator Barron spaces, which are discussed in the recent paper \cite{korolev2021two}, and a related context of operator reproducing kernel Hilbert spaces (RKHS) \cite{nelsen2021random,lanthalernelsen2023error}.

The previously mentioned results are restricted to either holomorphic operators or operators belonging to a Barron/RKHS space. Another general and natural class of non-linear operators are general Lipschitz continuous operators. Upper error and complexity bounds for the approximation of such operators have been considered e.g. in \cite{pca,liu2022deep,franco2023approximation,galimberti2023designing,schwab2023deep}. Recently, lower complexity bounds, identifying limitations of operator learning have also been obtained \cite{lanthaler2023curse,lanthaler2023operator}. In particular, it has been shown that operator learning on general classes of Lipschitz continuous and Fr\'echet differentiable operators generally suffers from a ``curse of parametric complexity''; These results show that there exist Fr\'echet differentiable operators of any given smoothness $k$, for which the approximation to a desired uniform accuracy $\epsilon$ requires models with a number of parameters that is exponential, $\gtrsim \exp(c\epsilon^{-\gamma})$ in the inverse of the accuracy. This exponential ``curse'' appears as a scaling limit of the well-known curse of dimensionality encountered in high-dimensional approximation problems and applies to popular operator learning frameworks such as DeepONets, the Fourier neural operator, or PCA-Net \cite{lu2019deeponet, li2020fourier, pca}. In this context, we mention relevant work \cite{schwab2023deep}, where ``hyperexpressive'' (non-standard) neural networks are shown to overcome this curse when the complexity is measured by the number of tunable (real-valued) parameters. However, even with these non-standard architectures, the curse re-appears when accounting for the number of bits required to represent the real parameters to sufficient precision on computational hardware \cite{lanthaler2024lipschitzoperators}.

The complementary question of the data-complexity of operator learning has received considerably less attention in the literature. The sample complexity of operator learning for holomorphic operators has been studied in \cite{bachmayr2017kolmogorov,adcock2023a,adcock2023b}. It has been shown that, for this class of operators, approximation to accuracy $\epsilon$ can be achieved with a number of samples that only scales algebraically in $\epsilon^{-1}$. 

For general classes of Lipschitz continuous operators, we mention the early work by \cite{mhaskar1997neural}. This work derives matching upper and lower bounds on the continuous non-linear $n$-widths of spaces of non-linear Lipschitz functionals defined on $L^2$-spaces, showing that such $n$-widths only decay logarithmically, as $\log(n)^{-\lambda}$ for fixed $\lambda$ depending on the specific assumptions.

Another paper of note is \cite{liu2022deep}. In this work, non-asymptotic upper bounds for the generalization error of empirical risk minimizers on suitable classes of operator networks are derived, achieving logarithmic decay rates that qualitatively match the $n$-width bounds of \cite{mhaskar1997neural}. In a related direction, the authors of \cite{kim2022bounding} provide estimates on the Rademacher complexity of FNO, allowing generalization error estimates to be derived. We finally mention the recent work \cite{mukherjee2023size}, where a connection is made between the number of available samples and the required size of the DeepONet reconstruction dimension.

The general curse of parametric complexity identified in \cite{lanthaler2023curse,lanthaler2024lipschitzoperators}, and the $n$-width estimates derived in the specific setting of \cite{mhaskar1997neural} suggest that operator learning on general classes of operators might suffer from an analogous ``curse of sample complexity'', potentially requiring an exponential number of samples to achieve a desired approximation accuracy. A general theory which would imply such a curse of data-complexity has so far been outstanding. Furthermore, the empirically observed efficiency of operator learning for concrete problems raises the question how such a theoretical exponential lower bound on the sample complexity of operator learning can be reconciled with practical observations.

The results of the present paper shed new light on these fundamental questions. We develop lower and upper bounds on the sample complexity of operator learning. Our main contributions are outlined as follows.

\paragraph{Lower bounds.} We prove that operator learning on general classes of operators, such as Lipschitz operators or continuously differentiable operators possessing $k$ bounded Fr\'echet derivatives, suffers from a curse of sample complexity; in the following prototypical settings we prove that approximation of such operators requires an exponential number of samples in a minimax sense that is independent of the reconstruction algorithm. We demonstrate such bounds for:
\begin{enumerate}
\item uniform approximation of operators over common compact sets of input functions, defined by a smoothness constraint,
\item approximation in a Bochner $L^p$-norm with respect to input functions drawn from a Gaussian measure with algebraically decaying covariance spectrum.
\end{enumerate}

\paragraph{Upper bounds.} Taking the Fourier neural operator (FNO) as a case study, we prove that on a narrower class of operators, efficiently approximated by FNO in terms of the required model size, the data complexity of operator learning is similarly benign; specifically, we introduce suitable spaces of operators $\cA^\gamma$ inspired by related work in finite dimensions \cite{grohs2023proof,berner2022learning,abdeljawad2023sampling}, and show that a number of samples that grows at most algebraically in the inverse of the desired approximation error $\epsilon^{-1}$ suffices for approximation of operators in $\cA^\gamma$. These results hold in the root mean square sense with respect to a probability measure $\mu$ of input functions.

\subsection{Outline}

In section \ref{sec:lower}, we provide a detailed overview of our lower data complexity bounds, including a sketch of their proofs. Complete details of the proofs of these lower data complexity bounds can be found in Appendix \ref{sec:bds_finite} for the finite-dimensional setting and Appendix \ref{sec:lower_bounds_infd} in the infinite-dimensional setting. A summary of our upper bounds for FNO are presented in section \ref{sec:upper}, with further details in Appendix \ref{sec:data-efficient}. Conclusions are presented in section \ref{sec:conclusion}. A short summary of notation followed in this paper, including a review of the notion of Fr\'echet differentiability studied in the first part, is included in Appendix \ref{sec:notation}.

\section{Lower data complexity bounds}
\label{sec:lower}

In the present section, we provide an overview of our results on lower data complexity bounds. To promote readability, we will only outline the proofs of these results, and refer the reader to the appendices for detailed derivations. Before stating our main results, we recall two notions of $n$-widths, which provide a measure of the complexity of a space of functions/operators, and on which our theoretical lower bounds are based. 

\subsection{Two notions of $n$-widths}
We recall two different notions of $n$-widths which will be the primary objects of study throughout the first part of present work. Let \((X,\|\cdot\|_X)\) be a normed vector space and let \(K \subset X\)
be a subset. Define the \textit{encoder} $\Enc_n : X \to \R^n$ to be any mapping such that $\Enc_n |_K$ is continuous in the topology of $X$. Let the \textit{decoder} $\D_n : \R^n \to X$ be any mapping. Following \cite{devore1989optimal}, we define the \textit{continuous, nonlinear \(n\)-width} of \(K\) in \(X\) by
\[d_n(K)_X \coloneqq\inf_{\Enc_n, \D_n} \sup_{f \in K} \|f - (\D_n \circ \Enc_n)(f)\|_X.\]

Suppose now that $X$ is a real-valued, Banach function space, in the sense of \cite[Definition 1.3]{bennett1988interpolation}, over a topological measure space $(\Omega, \mu)$. Suppose that there exists a unique, linear injection $\iota : K \to C (\Omega)$ such that $f = \iota f$ $\mu$-almost everywhere. Let $Z_n = \{z_1, \dots, z_n\} \subset \Omega$ be any set of points. We may then define, for any $f \in K$, the functional $\delta_{Z_n} : K \to \R^n$ by $\delta_{Z_n} (f) = \big( (\iota f) (z_1), \dots (\iota f) (z_n) \big )$. We define the \textit{sampling, nonlinear $n$-width} of $K$ in $X$ by
\[s_n(K)_X \coloneqq \inf_{Z_n, \cD_n} \sup_{f \in K} \|f - (\D_n \circ \delta_{Z_n})(f)\|_X.\]
Note that, in general, $\delta_{Z_n}$ will \textit{not} be continuous in the topology of $X$. A special case when continuity holds is $X = C(\Omega)$. Another such case is when $Y \hookrightarrow X$ is another Banach space which compactly embeds into $X$ and for which $\delta_{Z_n} : Y \to \R^n$ is continuous. Then for any bounded set $K \subset Y$, Lemma~\ref{lemma:cont_linear_map} in Appendix~\ref{sec:cont_lin_func} shows that the map $\delta_{Z_n}|_K$ is continuous in $X$. In both of these cases, we obtain
\[s_n(K)_{X} \geq d_n(K)_{X}.\]
A particularly important case which we will often consider is $X = L^p_\mu (\Omega)$ the Lebesgue space and $Y = W^{k,q}_\mu (\Omega)$ the Sobolev space with $K = \mathcal{U} (Y)$ the unit ball of $Y$. In this setting, whenever $Y$ embeds continuously in $C(\Omega)$ and compactly into $X$, the above lower bound holds.

\subsection{Bounds for function spaces}

In this section, we consider Banach function spaces defined over the measure space $\Omega = [0,1]^d$ equipped with the Lebesgue measure or $\Omega = \R^d$ equipped with a Gaussian measure. The finite-dimensional results of this section prepare our later extension to the infinite-dimensional context of operator learning considered in section \ref{sec:oplearn}.
Here, we establish lower bounds for the continuous, non-linear $n$-width $d_n$ and the sampling $n$-width $s_n$ of the 
unit ball of $C^k(\Omega)$ or $W^{k,q}(\Omega)$ in $L^p (\Omega)$ and of the unit ball of $C^k(\Omega)$ in $C(\Omega)$.
For $d_n$, with $\Omega = [0,1]^d$, such results have first been derived in \cite{devore1989optimal}; for our application to operator learning, it will be crucial to carefully track and control the dependency of these estimates on the dimension $d$. Direct application of the proof techniques in \cite{devore1989optimal}
appears to lead to an exponential dependence of these constants on $d$. This is too pessimistic for our purposes, where, informally speaking, we consider the limit $d \to \infty$.
We therefore present modified proofs for the results of \cite{devore1989optimal} which give algebraic dependence on $d$.

\subsubsection{Lower $n$-width bounds for $\Omega = [0,1]^d$}
\label{subsubsec:lower_compact}

The following theorem summarizes our sampling width estimate for the unit balls in the function spaces $W^{k,q}(\Omega)$ and $C^k(\Omega)$, with approximation error measured in the $L^p$-norm. We note that the sampling width is defined via point-evaluation; to apply this notion to spaces $W^{k,q}(\Omega)$, we impose that $k>d/q$ if $q > 1$ or $k \geq d$ if $q=1$, which ensures point-evaluation is well-defined by the Sobolev embedding $W^{k,q}(\Omega) \embeds C(\Omega)$ \cite{adams2003sobolev}. Note that here and throughout the rest of this work, we denote by $\cU (E)$, the unit ball of any normed space $E$; see Appendix \ref{sec:notation} for further details on notation.

\begin{theorem}
    \label{thm:sampling_width}
    Let $\Omega = [0,1]^d$. Let $K = \cU \big (W^{k,q} (\Omega) \big )$ for any $1 \leq q \leq \infty$ with $k > d/q$ if $q>1$ or $k\ge d$ if $q=1$, or let $K = \cU \big ( C^k (\Omega) \big )$ with $q = \infty$ and $k \in \N$. Let $1 \leq p \leq \infty$ then there exists a constant $Q = Q(k,q) > 0$ such that, for any $n \in \N$,
    \[s_n (K)_{L^p} \geq Q \left ( 1 + \frac{d}{kp} \right )^{-k} d^{-k/q} n^{-k/d}.\]
\end{theorem}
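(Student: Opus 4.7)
My plan is a localization-plus-pigeonhole hypothesis construction in the spirit of DeVore--Howard--Micchelli, with the template bump chosen to depend explicitly on $d$, $k$, $p$ so that the dimensional factors can be controlled algebraically rather than exponentially. I fix an integer $\ell$ with $2n \le \ell^d \le 2^{d+1}n$, set $h = 1/\ell$, and partition $\Omega = [0,1]^d$ into $N = \ell^d$ congruent, disjoint sub-cubes $Q_1, \dots, Q_N$ of side length $h \sim (2n)^{-1/d}$. For any sample set $Y_n \subset \Omega$, pigeonhole produces an index set $S$ of size $m \ge N - n \ge n$ such that $Q_i \cap Y_n = \emptyset$ for every $i \in S$.

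Choose a template bump $\phi_* \in C^\infty_c((0,h)^d)$ (to be optimized below), let $\phi_i$ be its translate into $Q_i$, and for every sign vector $\epsilon \in \{\pm 1\}^S$ set $f_\epsilon = c \sum_{i\in S}\epsilon_i \phi_i$. Disjointness of supports yields the scaling identities
\[
\|f_\epsilon\|_{L^p} = c m^{1/p}\|\phi_*\|_{L^p}, \qquad \|f_\epsilon\|_{W^{k,q}} = c m^{1/q}\|\phi_*\|_{W^{k,q}},
\]
so the choice $c = m^{-1/q}/\|\phi_*\|_{W^{k,q}}$ places every $f_\epsilon$ in the unit ball $K$, while every $f_\epsilon$ vanishes at every sample point. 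Any decoder $\D_n$ therefore returns a single element $g$, and the two-point triangle inequality applied to the pair $(\epsilon, -\epsilon)$ gives
\[
s_n(K)_{L^p} \ge \|f_\epsilon\|_{L^p} = m^{1/p - 1/q}\, \frac{\|\phi_*\|_{L^p}}{\|\phi_*\|_{W^{k,q}}}.
\]
Passing to a unit-cube template via $\phi_*(x) = \phi_0(x/h)$ gives $\|\phi_*\|_{L^p} = h^{d/p}\|\phi_0\|_{L^p}$ and $\|\phi_*\|_{W^{k,q}} \le h^{d/q - k}\|\phi_0\|_{W^{k,q}}$ (the latter because $h \le 1$ makes the $|\alpha| = k$ term dominate the $W^{k,q}$ norm). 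Substituting $m = n$ and $h \sim (2n)^{-1/d}$, the $n$-dependence collapses exactly to $n^{-k/d}$ up to a universal constant, leaving the dimension-dependent ratio $\|\phi_0\|_{L^p}/\|\phi_0\|_{W^{k,q}}$ to control.

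The main technical obstacle is the construction of a template $\phi_0 \in C^\infty_c((0,1)^d)$ achieving $\|\phi_0\|_{L^p}/\|\phi_0\|_{W^{k,q}} \ge Q(k,q)(1 + d/(kp))^{-k} d^{-k/q}$. A naive tensor product $\phi_0 = \psi^{\otimes d}$ with a fixed univariate bump $\psi$ fails, because $\|\psi\|_{L^p}^d$ typically decays exponentially against the Sobolev norm. The plan is to choose $\psi$ equal to $1$ on a central interval of length $1 - 2\delta$ with a smooth cutoff over transitions of width $\delta = \delta(d, k, p)$, tuned so that $(1 - 2\delta)^{d/p}$ remains bounded below by a constant. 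This tuning forces $\delta \lesssim kp/d$ and, through the $\delta^{-k}$ blow-up of $\|\psi^{(k)}\|_{L^q}$, injects the factor $(1 + d/(kp))^{-k}$ into the ratio. The Sobolev norm $\|\psi^{\otimes d}\|_{W^{k,q}}^q = \sum_{|\alpha| \le k}\prod_{i=1}^d \|\psi^{(\alpha_i)}\|_{L^q}^q$ is then estimated by organizing the sum according to the support size of $\alpha$ and invoking a Vandermonde-type identity, which supplies the $d^{-k/q}$ factor with a constant depending only on $k$ and $q$. For the $C^k$ version of the theorem one takes $q = \infty$: the disjoint-support identity reduces to $\|f_\epsilon\|_{C^k} = \|\phi_*\|_{C^k}$, eliminating the factor $m^{-1/q}$ from the choice of $c$, and the template analysis carries through with the same scheme.
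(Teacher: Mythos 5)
Your construction is essentially the paper's: a partition of $[0,1]^d$ into $\sim 2n$ sub-cubes, a tensor-product bump whose per-coordinate transition width $\delta$ is tuned $\sim kp/d$ so the flat-region mass $(1-2\delta)^{d/p}$ stays bounded below while the derivative blow-up $\delta^{-k}$ supplies $(1+d/(kp))^k$, the $\sim d^k$ count of multi-indices supplying $d^{k/q}$ exactly as in the paper's bound \eqref{eq:combinatorial_sum}, and a two-function sign-flip argument exploiting that at least $n$ cubes miss any fixed sample set. One detail to be careful about: rather than ``substituting $m = n$,'' keep $m = |S| \ge N - n \ge N/2$, which gives $m h^d = m/N \in [1/2,1]$ and hence $(m h^d)^{1/p-1/q}$ bounded by absolute constants for all $p,q$; truncating $S$ to size exactly $n$ would allow $m h^d$ to be as small as $2^{-(d+1)}$ (e.g.\ when $(2n)^{1/d}$ is just above $1$, so $\ell = 2$ and $N = 2^d$), and for $p < q$ this would introduce an unwanted exponential-in-$d$ loss.
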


The following theorem summarizes the corresponding non-linear $n$-width estimates. These hold only under the restriction $p \ge q$, but are true for a more general class of continuous encoders than the encoding by sample evaluation considered in the previous Theorem \ref{thm:sampling_width}.

\begin{theorem}
    \label{thm:continous_width}
    Let $K = \cU \big (W^{k,q} (\Omega) \big )$ for any $1 \leq q \leq \infty$ or $K = \cU \big ( C^k (\Omega) \big )$ with $q = \infty$. Let $p \geq q$ then there exists a constant $Q = Q(k,q) > 0$ such that, for any $n \in \N$,
    \[d_n (K)_{L^p} \geq Q \left ( 1 + \frac{d}{kp} \right )^{-k} d^{-k/q} n^{-k/d}.\]
\end{theorem}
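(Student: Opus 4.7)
The plan is to adapt the classical Bernstein--Borsuk argument of \cite{devore1989optimal} to the continuous nonlinear $n$-width setting, paying particular attention to the dependence of the constants on the ambient dimension $d$.

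First, I would construct a finite-dimensional test subspace. Partition $\Omega=[0,1]^d$ into $N=m^d$ congruent sub-cubes $Q_1,\ldots,Q_N$ of side length $1/m$, where the integer $m\ge1$ is chosen at the end so that $N>n$. Fix a reference bump $\phi$ supported in $[-1/2,1/2]^d$ with $\phi\in W^{k,q}\cap L^p$, and set $\phi_j(x)=\phi(m(x-x_j))$ with $x_j$ the center of $Q_j$. The $N$-dimensional subspace $V_N=\Span\{\phi_1,\ldots,\phi_N\}$ is parameterized by $\beta\in\R^N$ via $f_\beta=\sum_j\beta_j\phi_j$. Because the $\phi_j$ have pairwise disjoint supports, the scaling $\partial^\alpha\phi_j=m^{|\alpha|}(\partial^\alpha\phi)(m(\cdot-x_j))$ combined with a change of variables yields the identity
\[\|f_\beta\|_{L^p}^p = m^{-d}\,\|\phi\|_{L^p}^p\,\|\beta\|_{\ell^p}^p,\]
and, for $m\ge1$ (since $m^{|\alpha|q}\le m^{kq}$ whenever $|\alpha|\le k$), the bound
\[\|f_\beta\|_{W^{k,q}}^q \le m^{kq-d}\,\|\phi\|_{W^{k,q}}^q\,\|\beta\|_{\ell^q}^q.\]

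The core step is Borsuk's antipodality theorem. For any continuous encoder $\Enc_n$ and any decoder $\cD_n$, the continuous map $\beta\mapsto\Enc_n(f_\beta)$ restricted to the $\ell^q$-sphere $\{\|\beta\|_{\ell^q}=\rho\}\cong S^{N-1}$ must identify some pair of antipodal points when $N>n$; thus there exist antipodal $\pm\beta^\ast$ with $\Enc_n(f_{\beta^\ast})=\Enc_n(-f_{\beta^\ast})$, and their common reconstruction $g=\cD_n(\Enc_n(f_{\beta^\ast}))$ satisfies
\[\max\bigl(\|f_{\beta^\ast}-g\|_{L^p},\,\|{-f_{\beta^\ast}}-g\|_{L^p}\bigr) \ge \|f_{\beta^\ast}\|_{L^p}\]
by the triangle inequality. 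Taking $\rho=m^{d/q-k}/\|\phi\|_{W^{k,q}}$ maximal subject to the sphere lying inside $K$, and using that the minimum of $\|\beta\|_{\ell^p}$ on the $\ell^q$-sphere of radius $\rho$ equals $\rho\,N^{1/p-1/q}$ for $p\ge q$ (attained by the equal-entry vector, and this is the only place the hypothesis $p\ge q$ intervenes), the scalings above combine to give
\[d_n(K)_{L^p} \ge m^{-k}\,\frac{\|\phi\|_{L^p}}{\|\phi\|_{W^{k,q}}},\]
and the choice $m=\lceil(n+1)^{1/d}\rceil$ then yields $m^{-k}\gtrsim n^{-k/d}$.

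The principal technical obstacle is choosing $\phi$ so that the ratio $\|\phi\|_{L^p}/\|\phi\|_{W^{k,q}}$ depends only algebraically on $d$. A tensor-product bump $\phi(x)=\prod_i\psi(x_i)$, the natural construction in \cite{devore1989optimal}, produces factors of the form $(\mathrm{const})^d$ in both norms, yielding exactly the exponential $d$-dependence that the present theorem is designed to eliminate. Instead, I would use an $\ell^\infty$-radial bump such as $\phi(x)=(1-2\|x\|_\infty)_+^{k+1}$. For this choice the weak partial derivatives $\partial^\alpha\phi$ vanish unless $\alpha$ is supported on a single coordinate (on the open dense set where $\|x\|_\infty$ is attained uniquely), so every relevant norm computation reduces to a one-dimensional Beta integral $B(d,sq+1)=\int_0^1 r^{d-1}(1-r)^{sq}\,dr$. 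Using the representation $B(d,sq+1)=(sq)!/\prod_{i=0}^{sq}(d+i)$ together with the elementary bounds $d^{sq+1}\le\prod_{i=0}^{sq}(d+i)\le(d+sq)^{sq+1}$ produces closed-form two-sided estimates for both norms whose ratio scales as $(1+d/(kp))^{-k}\,d^{-k/q}$, delivering the claimed bound. The cases $q=\infty$, covering both $W^{k,\infty}$ and $C^k$, are recovered in the obvious limit, where the $\ell^p$-$\ell^q$ comparison trivializes and only the $m^{-k}$ factor contributes.
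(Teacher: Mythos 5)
Your overall strategy — partition $[0,1]^d$ into $m^d$ subcubes, construct scaled bumps with disjoint supports, and apply a Bernstein-width / Borsuk antipodality argument to the span of those bumps — is the same as the paper's, and your $\ell^p$-vs-$\ell^q$ comparison on the sphere under $p\ge q$ is correct. However, the central idea that is supposed to deliver the algebraic $d$-dependence fails, and your diagnosis of where the exponential $d$-dependence comes from in the first place is a misconception.

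The proposed bump $\phi(x)=(1-2\|x\|_\infty)_+^{k+1}$ is simply not in the function space. Write $\phi = g(\|x\|_\infty)$ with $g(r) = (1-2r)_+^{k+1}$. On the open set $\{|x_1|>|x_j|\ \forall j\ne 1\}$ you have $\partial_1\phi = g'(|x_1|)\,\mathrm{sign}(x_1)$ and $\partial_j\phi = 0$ for $j\ne 1$, so $\partial_1\phi$ has a jump discontinuity of size $|g'(r)| = 2(k+1)(1-2r)_+^{k}\neq 0$ across each hypersurface $\{|x_1|=|x_j|\}$ (for $r<1/2$). Consequently the distributional derivative $\partial_j\partial_1\phi$ carries a singular measure supported on that hypersurface; it is not in $L^q$ for any $q$. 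Thus $\phi\notin W^{k,q}([0,1]^d)$ for every $k\ge 2$ and every $1\le q\le\infty$, and $\phi\notin C^k$ already for $k\ge 1$ because the gradient is discontinuous across those kinks. The claim that ``the weak partial derivatives vanish unless $\alpha$ is supported on a single coordinate'' ignores precisely these boundary contributions, so the ensuing Beta-integral norm estimates are not available. Since the test space $V_N$ would then not lie inside $K$ after rescaling, the Bernstein/Borsuk step has nothing to grip.

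Your assertion that the tensor-product construction of DeVore--Howard--Micchelli necessarily produces $(\mathrm{const})^d$ factors is also not accurate, and in fact the paper keeps the tensor-product bump $\phi_\gamma(x)=\prod_j\sigma_\gamma(x_j)$. The exponential $d$-dependence only arises when the shape parameter $\gamma$ (controlling the width of the flat-top region) is held fixed: then $\|\phi_\gamma\|_{L^p}\ge\gamma^{d/p}$ is exponentially small in $d$. The paper instead chooses $\gamma = d/(kp+d)$, so that $\gamma^{d/p}=(1+kp/d)^{-d/p}\ge e^{-k}$ is dimension-free while $(1-\gamma)^{-k}=(1+d/(kp))^k$ contributes only an algebraic factor. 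On the derivative side, for a multi-index $\nu$ with $|\nu|_1\le k$ at most $k$ of the $\nu_j$ are nonzero, and $\sigma_\gamma\le 1$, so the $C^k$-norm of the tensor-product bump never acquires a $(\mathrm{const})^d$ factor. The ``free parameter $\gamma$'' optimization is the actual mechanism you need, not a different bump shape.
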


\begin{remark}
    In Theorems~\ref{thm:sampling_width} and \ref{thm:sampling_width}, the constant is independent of the dimension $d$ only in the case $p=q=\infty$.
    Furthermore, by considering this special case in each proof, we verify that the same lower bounds holds for $\cU \big( C^k (\Omega) \big )$
    in $C(\Omega)$.
\end{remark}

\begin{remark}
It has been claimed, for example in \cite{DeVore1993WaveletCA}, that $d_n$ has been asymptotically determined for all $p,q$ pairs, referring to \cite{devore1989optimal}. We could not verify the required estimates in the case $q > p$ by using the proof techniques from \cite{devore1989optimal}. In fact, this problem has only recently been settled by the contemporaneous work \cite{siegel2024sharp}. However, the proof of \cite{siegel2024sharp} yields a constant with an exponential dependence on the dimension $d$. Since the case $q>p$ (the specific setting where $q=\infty$, $p<\infty$ to be precise) is of particular relevance to operator learning, we circumvent this issue by instead establishing lower bounds on the sampling widths $s_n$ in Theorem \ref{thm:sampling_width}. Our estimate is valid for any $p,q$ pair, and the constant has an algebraic dependence on the dimension $d$.
\end{remark}

\subsubsection{Overview of the proofs of Theorems \ref{thm:sampling_width} and \ref{thm:continous_width}}

The proofs of Theorems \ref{thm:sampling_width} and \ref{thm:continous_width} follows along similar lines. The first step, detailed in Appendix \ref{subsec:widths_cube}, is to partition the unit cube $[0,1]^d = \bigcup_{j=1}^{n} Q_j$ into $n = m^d$ subcubes of side-length $1/m$, for fixed $m\in \N$. We next construct a suitable family of $n$ bump functions $\phi_{\gamma,j}$, $j=1,\dots, m^d$ supported in distinct cubes $Q_j$ of the partition. This approach is mostly analogous to \cite{devore1989optimal}, and the lower $n^{-k/d}$ bound follows by considering functions of the form 
\[
f(x) = J\sum_{j=1}^n \alpha_j \phi_{\gamma,j}(x),
\]
for carefully chosen scaling $J>0$, $|\alpha_j|\le 1$ and $n = m^d \in \N$.

Our main technical improvement over \cite{devore1989optimal} is the introduction of a free parameter $\gamma \in (0,1)$. This parameter controls the extent of the set $\{\phi_{\gamma,j} = 1\} \subset Q_j$, with $\phi_{\gamma,j} \to 1_{Q_j}$ as $\gamma \to 1$. The choice of $\gamma = \gamma(d)$ can then be optimized to control the ratio of $\Vert \phi_{\gamma,j} \Vert_{L^p} \sim \gamma^{d/p}$ to $\Vert \phi_{\gamma,j} \Vert_{C^k} \sim (1-\gamma)^{-k}$, resulting in the following $d$-dependency of the constant 
\[
C = Q\left ( 1 + \frac{d}{kp} \right )^{-k} d^{-k/q}
\]
which holds both in Theorems \ref{thm:sampling_width} and \ref{thm:continous_width}. We note that a fixed choice, such as $\gamma = 1/2$ as in \cite{devore1989optimal}, would give the same asymptotics in $n$, but would lead to a constant $C$ with exponential dependency on $d$. We refer to Appendix \ref{subsec:widths_cube} for the details of the required argument. 

\begin{remark}
    The argument used to prove Theorem~\ref{thm:continous_width} does not achieve the same lower bound for the case $q > p$.
    Tracing through the proof given in the appendix, we note that \eqref{eq:xn_lp_lowerbound} leads to 
    \[\sum_{j=1}^n |\alpha_j|^q \leq \left ( \sum_{j=1}^n |\alpha_j|^p \right )^{q/p} \leq n^{q/p} \gamma^{dq/p} \|f\|^q_p.\]
    Then \eqref{eq:xn_wkq_upperbound} implies
    \[\|f\|_{W^{k,q}} \lesssim n^{(k/d) + (q-p)/p}\]
    and therefore
    \[d_n (K)_{L^p} \gtrsim n^{-(k/d) - (q-p)/p}\]
    and the expression on the right-hand side appears to overcome the curse of dimensionality, decaying at least as fast as $n^{-(q-p)/p}$ even in the limit $d\to \infty$. 
    This inadequacy stems from using the Bernstein width to lower bound $d_n$ as proposed in \cite{devore1989optimal}. We refer to the recent work \cite{siegel2024sharp} for a new proof technique which overcomes this challenge.
\end{remark}

\subsubsection{Lower $n$-width bounds for Gaussian measures on $\Omega = \R^d$}
\label{subsec:lowerbounds_gaussian}

In this section, we consider $\Omega = \R^d$ but measure the approximation error in weighted $L^p_{\rho_d}$-norms, where $\rho_d$ denotes the standard Gaussian density on $\R^d$. The following theorem provides estimates on the sampling widths.

\begin{theorem}
    \label{thm:sampling_width_gaussian}
    Let $K = \cU \big (W^{k,q}_{\rho_d} (\R^d) \big ) \cap C(\R^d)$ for any $1 \leq q \leq \infty$ or $K = \cU \big ( C^k (\R^d) \big )$ with $q = \infty$. Let $1 \leq p \leq \infty$ then there exists a constant $Q = Q(k,q) > 0$ such that, for any $n \in \N$,
    \[s_n (K)_{L^p_{\rho_d}} \geq Q \left ( 1 + \frac{d}{kp} \right )^{-k} d^{-k/q} n^{-k/d}.\]
\end{theorem}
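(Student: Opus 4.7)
The plan is to reduce Theorem~\ref{thm:sampling_width_gaussian} to the Lebesgue case Theorem~\ref{thm:sampling_width} by localizing the bump-function construction to a cube $D_d \subset \R^d$ on which the Gaussian density $\rho_d$ is approximately constant up to a $d$-independent factor. The key observation is that on $D_d := [0, L_d]^d$ with $L_d = c/\sqrt{d}$ for an absolute constant $c > 0$, one has $(2\pi)^{-d/2} e^{-c^2/2} \leq \rho_d(x) \leq (2\pi)^{-d/2}$ for all $x \in D_d$, so that $\rho_d \asymp (2\pi)^{-d/2}$ on $D_d$ with a ratio $e^{c^2/2}$ independent of $d$.

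First I would transport the bump family $\{\phi_{\gamma, j}\}_{j=1}^{m^d}$ from the proof of Theorem~\ref{thm:sampling_width} onto $D_d$ via the dilation $\tilde\phi_{\gamma, j}(x) := \phi_{\gamma, j}(x/L_d)$, and then rerun the sampling-width argument essentially verbatim. Specifically, for $f = J \sum_j \alpha_j \tilde\phi_{\gamma, j}$ with $\alpha_j \in \{-1, +1\}$, the constraint $\|f\|_{W^{k,q}_{\rho_d}} \leq 1$ determines a maximal $J$; the pigeonhole principle applied to the $n$ sample locations produces an ``unseen'' subcube index $j^*$, and sign-flipping $\alpha_{j^*}$ yields indistinguishable $f, \tilde f$ with $\|f - \tilde f\|_{L^p_{\rho_d}} = 2 J \|\tilde\phi_{\gamma, j^*}\|_{L^p_{\rho_d}}$, lower-bounding the reconstruction error. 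By the change of variables $y = x/L_d$ and the pointwise comparability of $\rho_d$ on $D_d$, the Gaussian-weighted norms convert to Lebesgue norms on $[0,1]^d$ with explicit $L_d$-dependent prefactors, reducing the remaining estimates to those of Theorem~\ref{thm:sampling_width}.

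The main obstacle will be keeping the resulting $d$-dependent prefactor algebraic rather than exponential, particularly when $p \neq q$. Tracing the change of variables, the ratio $\|\tilde\phi_{\gamma,j^*}\|_{L^p_{\rho_d}} / \|\tilde\phi_{\gamma, j}\|_{W^{k,q}_{\rho_d}}$ carries a Gaussian mismatch factor of the form $c^k d^{-k/2} \cdot (2\pi d/c^2)^{d(1/q - 1/p)/2}$, which is cleanly algebraic only when $p = q$; for $p < q$ this factor is exponentially small in $d$, while for $p > q$ it is exponentially large (a ``free bonus'' for the lower bound). Handling the case $p < q$ therefore requires a refined construction; possible routes are choosing $c = c(p,q,k)$ so as to counter the exponential factor, or using an anisotropic rescaling $D_d = \prod_i [0, L_{d,i}]$ with differently chosen side lengths so that the effective dimension controlling the Gaussian normalization matches the Lebesgue volume scaling. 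Once this case analysis is completed, the $\gamma$-optimization inherited from the proof of Theorem~\ref{thm:sampling_width} delivers the claimed prefactor $(1 + d/(kp))^{-k} d^{-k/q}$.
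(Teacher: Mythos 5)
Your localization strategy breaks down precisely where the theorem is most needed, and the remedies you sketch do not repair it. Restricting the construction to a cube $D_d = [0, c/\sqrt{d}]^d$ gives $\rho_d \asymp (2\pi)^{-d/2}$ there, but then, as you correctly compute, the ratio of $L^p_{\rho_d}$ to $W^{k,q}_{\rho_d}$ norms of the rescaled bumps carries the factor $(2\pi d/c^2)^{d(1/q - 1/p)/2}$. For $p < q$ this is exponentially small in $d$, and no choice of a fixed constant $c = c(p,q,k)$ can cancel it: the base grows like $d$ while the exponent grows like $d$, so the decay is super-exponential in $d$ regardless of $c$. The anisotropic rescaling idea fares no better under this analysis — to get the effective measure of your test region to be $O(1)$ rather than $e^{-cd}$ you need the region to have Gaussian measure $\Theta(1)$, but on any such region the Gaussian density is not within a $d$-independent multiplicative factor of a constant, which is exactly the hypothesis you localized in order to secure. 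The case $p < q$, and in particular $q=\infty$, $p < \infty$, is the one the paper actually uses in Theorem~\ref{thm:ol-gaussian} (where $K = \cU(C^k(\Omega))$ corresponds to $q=\infty$ and the error is measured in $L^p_\mu$ with $p < \infty$), so the gap is not peripheral.

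The paper's proof avoids this entirely by working on all of $\R^d$ and using the exact coordinatewise transport map $\xi_d$ (the Knothe--Rosenblatt rearrangement, given by the one-dimensional Gaussian CDF applied to each coordinate), which pushes the Gaussian measure $\rho_d\,dx$ forward to Lebesgue measure on $(0,1)^d$. Composing the flat bump functions with $\xi_d$ makes the change of variables \emph{exact}: $\|\tilde\phi_{\gamma,j}\|_{L^p_{\rho_d}} = \|\phi_{\gamma,j}\|_{L^p}$ with no mismatch factor whatsoever, for every $p$. The price paid is that the derivatives of $\phi_{\gamma,j}\circ\xi_d$ have to be controlled; the key observation is that $\xi$ contracts, so Fa\`a di Bruno's formula yields a constant $H = H(\xi,k)$ depending only on $k$ (not on $d$, $\gamma$, or the index $j$) relating $|D^\nu(\phi_{\gamma,j}\circ\xi_d)|$ to $|D^\nu\phi_{\gamma,j}|\circ\xi_d$, cf.~\eqref{eq:phitilde_phi_bound}. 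With that in hand, the sampling-width argument from Theorem~\ref{thm:sampling_width} runs uniformly in $p$ and $q$. Your instinct to reduce to the Lebesgue case is the right one, but the reduction must be measure-preserving rather than density-flattening; a fixed small cube cannot do both jobs at once.
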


The following theorem summarizes estimates on the continuous non-linear $n$-widths in the weighted $L^p_{\rho_d}$ setting.

\begin{theorem}
    \label{thm:continous_width_gaussian}
    Let $K = \cU \big (W^{k,q}_{\rho_d} (\R^d) \big )$ for any $1 \leq q \leq \infty$ or $K = \cU \big ( C^k (\R^d) \big )$ with $q = \infty$. Let $p \geq q$ then there exists a constant $Q = Q(k,q) > 0$ such that, for any $n \in \N$,
    \[d_n (K)_{L^p_{\rho_d}} \geq Q \left ( 1 + \frac{d}{kp} \right )^{-k} d^{-k/q} n^{-k/d}.\]
\end{theorem}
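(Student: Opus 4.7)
The plan is to adapt the proof of Theorem \ref{thm:continous_width} to the Gaussian setting via careful localization of the bump-function construction. The key observation I will exploit is that for bump functions supported in a cube $\Omega_0 \subset \R^d$ on which $\rho_d$ is essentially constant, the weighted norms factor as $\|\phi\|_{L^p_{\rho_d}} \approx \rho_d(x)^{1/p}\|\phi\|_{L^p}$ and $\|\phi\|_{W^{k,q}_{\rho_d}} \approx \rho_d(x)^{1/q}\|\phi\|_{W^{k,q}}$. The resulting factor $\rho_d(x)^{1/p - 1/q}$, together with the choice of cube side $c$, can be arranged to cancel against the dimension-dependent factors introduced by scaling when $p \geq q$.

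Concretely, I would place bump functions $\phi_{\gamma,j}$, $j=1,\ldots,n=m^d$, in disjoint subcubes $Q_j$ of $\Omega_0 = [0,c]^d$ of side $h = c/m$ (for a constant $c$ and free parameter $\gamma \in (0,1)$ to be optimized), mimicking the construction from the proof of Theorem \ref{thm:continous_width}. I would then follow the Bernstein-width argument from that proof: for $g = \sum_j \alpha_j \phi_{\gamma,j}$, the disjoint-support structure together with Hölder's inequality (which uses $p \geq q$) yields
\[d_n(K)_{L^p_{\rho_d}} \geq n^{1/p - 1/q} \min_j R_j, \qquad R_j = \frac{\|\phi_{\gamma,j}\|_{L^p_{\rho_d}}}{\|\phi_{\gamma,j}\|_{W^{k,q}_{\rho_d}}}.\]
By the norm factorization above and translation invariance of the Lebesgue ratio, $R_j \gtrsim \rho_d(x_j)^{1/p - 1/q} \|\phi_{\gamma,1}\|_{L^p}/\|\phi_{\gamma,1}\|_{W^{k,q}}$. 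Since $1/p - 1/q \leq 0$ and $\max_j \rho_d(x_j) \leq (2\pi)^{-d/2}$, this gives $\min_j \rho_d(x_j)^{1/p - 1/q} \geq (2\pi)^{d(1/q - 1/p)/2}$.

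To conclude, I would combine this with the Lebesgue-ratio estimate $\|\phi_{\gamma,1}\|_{L^p}/\|\phi_{\gamma,1}\|_{W^{k,q}} \sim \gamma^{d/p}(1-\gamma)^k h^{d/p - d/q + k}/d^{k/q}$, where the $d^{k/q}$ factor arises from the multi-index count of $k$-th order partial derivatives in the Sobolev norm. Simplifying $n^{1/p - 1/q} h^{d/p - d/q + k} = c^{d/p - d/q} \cdot c^k \cdot n^{-k/d}$ and choosing $c = \sqrt{2\pi}$ makes the factors $(2\pi)^{d(1/q - 1/p)/2}$ and $c^{d/p - d/q}$ cancel exactly. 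Optimizing $\gamma = d/(d+kp)$ then yields the factor $(1+d/(kp))^{-k}$, and the residual scalar $c^k = (2\pi)^{k/2}$ is absorbed into $Q(k,q)$, giving the claimed bound.

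The main obstacle will be making the approximation $\rho_d|_{Q_j} \approx \rho_d(x_j)$ quantitative. Since $\rho_d$ varies on the scale $1/\sqrt{d}$ on $\Omega_0$, this approximation requires subcubes of side $h \lesssim 1/\sqrt{d}$, equivalently $m \gtrsim \sqrt{d}$ and thus $n \gtrsim d^{d/2}$. For smaller $n$, I would either use a contracted cube $\Omega_0 = [0,c_0]^d$ with $c_0 = O(1/\sqrt{d})$ (incurring an extra $d^{-k/2}$ loss that is acceptable precisely because $n^{-k/d}$ remains large in that regime), or handle the small-$n$ cases directly using a single antipodal pair of bumps to exhibit a constant lower bound; combining these regimes via case analysis yields the theorem for all $n \in \N$.
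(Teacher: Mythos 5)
Your approach is a genuinely different route from the paper's. The paper transplants the bump functions $\phi_{\gamma,j}$ from $[0,1]^d$ to $\R^d$ via precomposition with the coordinatewise Gaussian CDF $\xi_d$, yielding $\tilde{\phi}_{\gamma,j} = \phi_{\gamma,j}\circ\xi_d$. Because $\xi_d$ is an exact measure-preserving isomorphism between $([0,1]^d,\text{Leb})$ and $(\R^d,\rho_d\,dx)$, one gets $\|\tilde{\phi}_{\gamma,j}\|_{L^p_{\rho_d}} = \|\phi_{\gamma,j}\|_{L^p([0,1]^d)}$ with no approximation, and the only new work is the derivative bound \eqref{eq:bound_phitilde_j_der}, obtained via Fa\`a di Bruno and the fact that $\xi'$ and its higher derivatives are uniformly bounded (indeed $\xi'=\rho<1$, so the composition \emph{reduces} gradients). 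After that, the argument proceeds exactly as in Theorem~\ref{thm:continous_width}. Your proposal instead places the bump functions directly in a cube in $\R^d$ and treats the weight as approximately constant on each subcube; this is a reasonable idea, but it introduces a dimension-dependent approximation error that the paper's construction sidesteps entirely.

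The difficulty is that the approximation $\rho_d|_{Q_j}\approx\rho_d(x_j)$ is not merely a nuisance for small $n$; it fails precisely in the regime that matters. Over a subcube of side $h$ centered at a point with $|x_j|\sim\sqrt d$, the variation of $\log\rho_d$ is of order $|\nabla\log\rho_d(x_j)|\cdot\mathrm{diam}(Q_j)\sim\sqrt d\cdot h\sqrt d = dh$, so you actually need $h\lesssim 1/d$ (not $1/\sqrt d$), i.e.\ $n=m^d\gtrsim d^d$. If the approximation is off by a factor $e^{\Theta(dh)}$, this enters the bound through $\rho_d(x_j)^{1/p-1/q}$ as a multiplicative error $e^{\Theta(dh(1/q-1/p))}$, which is unbounded in $d$ whenever $dh\not\to0$. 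The two fall-backs you propose for $n\lesssim d^d$ do not close this: the contracted cube $\Omega_0=[0,O(1/\sqrt d)]^d$ gives a bound that is weaker than the target by $d^{-k/2}$ (and this loss cannot be absorbed, since the claimed bound already has the $d^{-k/q}$ factor and must hold for each fixed $d$ and every $n\in\N$); and a constant-in-$n$ lower bound from an antipodal pair, combined with monotonicity of $d_n$, only bounds $d_n$ from above by $d_1$, not from below. Note also that this gap lands squarely on the regime needed downstream: in Theorem~\ref{thm:ol-gaussian} the embedded dimension is taken $d\sim\log n$, i.e.\ $n\sim e^{Cd}\ll d^d$, so the infinite-dimensional application lives entirely inside the range your argument does not cover. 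The transport-map route avoids all of this, which is why the paper uses it.
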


\subsubsection{Overview of the proof of Theorems \ref{thm:sampling_width_gaussian} and \ref{thm:continous_width_gaussian}}

Detailed proofs of Theorems \ref{thm:sampling_width_gaussian} and \ref{thm:continous_width_gaussian} are given in Appendix \ref{subsec:width_gaussian}. The main difference with the case of $\Omega = [0,1]^d$ is that  the partition has to be modified so that the union of the support of all functions becomes $\R^d$ instead of $[0,1]^d$. We achieve this by defining an invertible transport map between the standard Gaussian measure on $\R^d$ and the uniform measure on $[0,1]^d$. In particular, we define $\xi : \mathbb{R} \to (0,1)$ by
\[\xi(x) = \int_{-\infty}^x \rho_1 (y) \: \mathsf{d}y, \]
to be the cumulative distribution function of $\rho_1$. We use $x \mapsto \xi(x)$, applied to each variable $x_1,\dots, x_d$, to define a measure-theoretic isomorphism between $L^p([0,1]^d)$ and $L^p_{\rho_d}(\R^d)$. After this transformation, the ideas mirror the estimates for $\Omega = [0,1]^d$ and are again based on a partition of the domain. The main difference in the present case are some additional technical complications arising due to the fact that derivatives now have to be applied to a composition of the bump functions $\phi_{\gamma,j} \circ \xi$ introduced in the previous section, and relevant change-of-variables mapping $\xi$. Importantly, the transformation $y \mapsto \xi^{-1}(y)$ ``stretches'' the domain, implying that gradients are generally reduced when passing from $\phi_{\gamma,j}$ to $\phi_{\gamma,j} \circ \xi$. This fact is the basis of our argument; we refer the reader to Appendix \ref{subsec:width_gaussian} for the technical details.

\subsection{Bounds for spaces of operators}
\label{sec:oplearn}

We will now consider the case when the goal is to approximate operators $\cG$ belonging to some class of operators $\U$ and inputs are taken from an infinite dimensional, separable Banach space $\cX$.

\paragraph{Reconstruction error norms.} 
We will restrict our attention to the following two approximation theoretic settings.

In the first setting, we fix a compact set $\cK \subset \cX$ and the aim is to approximate an operator $\cG: \cK \to \R$, uniformly over $\cK$. Thus, we study approximation in spaces of non-linear functionals $C(\cK) = C(\cK;\R)$, and the approximation error of an encoder/decoder pair $(\cE,\cD)$ is measured by
\[
\Vert \cG - \cD\circ \cE(\cG) \Vert_{C(\cK)}
= 
\sup_{u\in \cK} 
|\cG(u) - \cD\circ \cE(\cG)(u) |.
\]

In the second setting, we fix a Gaussian measure $\mu\in \cP(\cX)$ and we are interested in approximating operators $\cG: \cX \to \R$ with respect to the norm $L^p_\mu(\cX)$. The approximation error of an encoder/decoder pair $(\cE,\cD)$ is measured by
\[
\Vert \cG - \cD\circ \cE(\cG) \Vert_{L^p_\mu(\cX)}
= 
\E_{u\sim \mu}\Big[
|\cG(u) - \cD\circ \cE(\cG)(u) |^p
\Big]^{1/p}.
\]

\paragraph{A class of $C^k$ operators.}
To define the relevant sampling and nonlinear $n$-widths, we need to specify a class of operators $\U \subset C(\cK)$ or $\U \subset L^p_\mu(\cX)$, respectively. In the following discussion, we consider the relevant class to consist of $k$-times Fr\'echet differentiable operators. We summarize the main points here; additional discussion can be found in Appendix \ref{sec:notation}. Given $\cK \subset \cX$, we denote by 
\[
C^k (\cK) := \{\cG|_{\cK}: \cK \to \R \, |\, \cG \in C^k(\cX)\},
\]
the set of ``trace-operators'' which is obtained by restricting to $\cK$ the set of $k$-times Fr\'echet differentiable operators in $C^k(\cX)$. Here $C^k(\cX)$ is defined as the set of operators that possess globally, uniformly bounded derivatives. We will establish lower bounds on the sampling $n$-with and the nonlinear $n$-width for the unit ball $\U = \cU \big( C^k (\cK) \big )$. According to our convention, $\cU(C^k(\cK))$ consists of all nonlinear operators $\cG: \cK\subset \cX \to \R$ possessing an extension $\cG: \cX \to \R$ with 
\[
\Vert \cG \Vert_{C^k(\cX)} = \max_{0\le \ell \le k} \sup_{u \in \cX} \Vert d^\ell \cG(u) \Vert \le 1.
\]

\paragraph{Assumptions on $\cK$} In order to quantify the effect of working in the
infinite dimensional setting, we must ensure that the set $\cK$ is large enough. Note that if $\cK \cong [0,1]^d$ then $C^k(\cK) \cong C^k([0,1]^d)$
and the results of subsection~\ref{subsubsec:lower_compact} apply. Therefore $\cK$ cannot be isomorphic to any finite dimensional space and must be truly infinite dimensional for any new interesting effect to be observed. Following \cite{lanthaler2023curse}, we make this notion precise by requiring that $\cK$ contain hypercubes of any dimension with algebraically decaying side-length. These hypercubes can be naturally related to smoothness classes when $\cX$ is a Banach function space.

\paragraph{Bounded Bi-orthogonal System.}

For any $n \in \mathbb{N}$, we say that a sequence of elements $\phi_1,\dots, \phi_n \in \cX$ and a sequence of dual elements $\phi^\ast_1, \dots, \phi^\ast_n \in \cX^\ast$ form a \textit{bi-orthogonal $M$-bounded system}, if for all $j, k \in [n]$, we have $\|\phi_j\|_\cX \leq 1$, $\phi_k^*(\phi_j) = \delta_{kj}$, and
$\|\phi_j^*\|_{\cX^*} \leq M$.

\paragraph{Hypercubes.}

We say that a set $\cK \subseteq \cX$ contains \textit{$\alpha$-hypercubes} of arbitrary dimension with  decay rate $\alpha>0$, if there exist constants $M,c>0$, such that for any $n \in \N$, there exists a bi-orthogonal $M$-bounded system with elements $\phi_1,\dots, \phi_n \in \Omega$ such that
\[
\bigg \{ c n^{-\alpha} \sum_{j=1}^n y_j \phi_j : y_1,\dots, y_n \in [0,1] \bigg \} \subseteq \cK.
\]

The next proposition shows that prototypical compact sets on function spaces which are defined by a smoothness constraint, contain  $\alpha$-hypercubes in the above sense. We focus here on $L^p$-based Sobolev functions $W^{s,p}$ and continuously differentiable input functions in $C^s$. Extension of this observation to other spaces, such as Besov spaces, is also possible.

\begin{proposition}
\label{prop:alpha-cubes}
Let $D \subset \R^d$ be a compact Lipschitz domain and fix $s\in \N_{>0}$, $p\in [1,\infty)$. If $\cK = \cU \big( W^{s,p}(D) \big )$ viewed as a subset of $\cX = L^p(D)$ then $\cK$ contains an $\alpha$-hypercube with decay rate $\alpha = s/d + 1$. If $\cK= \cU \big ( C^s(D) \big )$ viewed as a subset of $\cX = C(D)$, then $\cK$ contains an $\alpha$-hypercube with decay rate $\alpha = s/d$.
\end{proposition}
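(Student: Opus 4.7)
The plan is to reduce to the model case where $D$ contains a Euclidean subcube $[0,L]^d \subset D^\circ$ (guaranteed by the Lipschitz hypothesis on $D$), partition this subcube into $n$ congruent pieces $Q_1,\ldots,Q_n$ of side-length proportional to $n^{-1/d}$, and build the bi-orthogonal system from scaled bumps supported in the $Q_j$. This mirrors the construction used in Section~\ref{subsubsec:lower_compact}, the new ingredients being the bi-orthogonality and dual-norm bookkeeping.

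First I would fix a smooth template $\psi \in C_c^\infty([0,1]^d)$ with $0 \le \psi \le 1$ and $\psi \equiv 1$ on $[1/4,3/4]^d$, and set $\psi_j(x) := \psi\bigl(n^{1/d}(x-x_j)/L\bigr)$ for $x \in Q_j$, extended by zero outside; these have pairwise disjoint supports lying in $C_c^\infty(D^\circ)$. A routine change of variables gives $\|\psi_j\|_{L^p(D)} \sim n^{-1/p}$ and $\|\psi_j\|_{W^{s,p}(D)} \lesssim n^{s/d-1/p}$, while $\|\psi_j\|_{C(D)} \le 1$ and $\|\psi_j\|_{C^s(D)} \lesssim n^{s/d}$.

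Next I would set up the bi-orthogonal system in each case. In the Sobolev case, define $\phi_j := \psi_j/\|\psi_j\|_{L^p(D)}$, so $\|\phi_j\|_\Omega = 1$ and $\|\phi_j\|_{W^{s,p}} \lesssim n^{s/d}$. The dual is given by integration against a rescaled indicator, $\phi_j^*(f) := c_j \int_{Q_j} f\,dx$, with $c_j$ chosen so that $\phi_j^*(\phi_j) = 1$; disjointness of supports immediately yields $\phi_j^*(\phi_k) = \delta_{jk}$, and balancing $c_j \sim n^{1/p'}$ against $\|1_{Q_j}\|_{L^{p'}} \sim n^{-1/p'}$ gives $\|\phi_j^*\|_{L^{p'}} \lesssim 1$ uniformly in $n$. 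In the $C^s$ case, take $\phi_j := \psi_j$ and $\phi_j^*(f) := f(x_j^\circ)$ where $x_j^\circ$ is the center of $Q_j$ (where $\psi_j \equiv 1$); bi-orthogonality and $\|\phi_j^*\|_{C(D)^*} = 1$ are immediate.

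Finally I would verify the hypercube condition. For the Sobolev case, the plain triangle inequality gives
\[
\Bigl\|cn^{-\alpha}\sum_{j=1}^n y_j\phi_j\Bigr\|_{W^{s,p}} \le cn^{-\alpha}\sum_{j=1}^n |y_j|\,\|\phi_j\|_{W^{s,p}} \lesssim c\,n^{1+s/d-\alpha},
\]
for $y_j \in [0,1]$, so $\alpha = 1 + s/d$ suffices once $c$ is chosen small. For the $C^s$ case, exploiting the disjointness of supports yields the sharper bound $\|\sum_j y_j\phi_j\|_{C^s} \le \max_j \|\phi_j\|_{C^s} \lesssim n^{s/d}$, giving the claimed $\alpha = s/d$. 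The main technical point is the dual-norm estimate in the Sobolev case: the normalization $c_j$ must be picked so that $\phi_j^*(\phi_j) = 1$ while keeping $\|\phi_j^*\|_{L^{p'}}$ uniformly bounded, and this works only because the decay of $\|\phi_j\|_{L^1(Q_j)}$ in $n$ and the decay of $\|1_{Q_j}\|_{L^{p'}}$ in $n$ match up. The rest of the argument is careful bookkeeping of scaling factors.
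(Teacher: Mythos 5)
Your proof is correct, and in the Sobolev case it takes a genuinely different route from the paper. For $A = \cU(W^{s,p}(D))$ the paper works with the $n$ lowest-degree trigonometric polynomials on a reference cube, obtains the $W^{s,p}$-to-$L^p$ comparison via Bernstein's inequality, and builds the duals from $L^2$-orthonormality of the Fourier basis; you instead use disjointly supported scaled bumps, with duals $\phi_j^*(f) = c_j\int_{Q_j} f$, and control $\|\phi_j^*\|_{(L^p)^*}$ by matching $c_j \sim n^{1/p'}$ against $\|\mathbf 1_{Q_j}\|_{L^{p'}} \sim n^{-1/p'}$. Your bookkeeping for this dual estimate is correct. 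The trade-off: the paper's argument needs Bernstein's inequality but keeps a single construction tied to the general Fourier machinery; yours is more elementary, avoids harmonic analysis entirely, and unifies the $W^{s,p}$ and $C^s$ cases under the same bump construction. One thing worth noting: your construction actually gives more than you claim. Because the $\phi_j$ (and their derivatives) have pairwise disjoint supports, one has
\[
\Bigl\|\sum_{j=1}^n y_j\phi_j\Bigr\|_{W^{s,p}}^p = \sum_{j=1}^n y_j^p\,\|\phi_j\|_{W^{s,p}}^p \le n\max_j\|\phi_j\|_{W^{s,p}}^p,
\]
which yields $\alpha = s/d + 1/p$, strictly sharper than the stated $\alpha = s/d + 1$ whenever $p > 1$. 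You threw this away by bounding with the coarse triangle inequality $\sum_j |y_j|\|\phi_j\|_{W^{s,p}} \le n\max_j\|\phi_j\|_{W^{s,p}}$. This is relevant because the paper's remark following the proposition observes that $\alpha = s/d$ should be attainable via a wavelet characterization of $W^{s,p}$; your disjoint-bump construction already lands partway there with no extra work.

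The $C^s$ case of your argument (disjointly supported bumps, point-evaluation duals, and exploiting disjointness to get $\max_j$ rather than $\sum_j$) coincides with the paper's proof.
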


The details of the proof of Proposition \ref{prop:alpha-cubes} are given in Appendix \ref{sec:alpha-cubes}.

\begin{remark}
The decay rate for $\cK = \cU \big( W^{s,p}(D) \big )$ could potentially be improved to $\alpha = s/d$, based on a wavelet characterization of Sobolev spaces. We do not pursue this refinement here, contending ourselves with the rate $\alpha = s/d + 1$ which is derived based on an elementary argument involving Fourier series.
\end{remark}

\subsubsection{Sample Complexity of Operator Learning on Compact Sets}

We first consider the following setting.
Given a class of non-linear, $k$-times Fr\'echet differentiable, functionals $\cG: \cK \subset \cX \to \R$, with $\cK$ compact, we aim to approximate any such $\cG$ from data pairs $\{u_j, \cG(u_j)\}$ uniformly over $\cK$, i.e. with respect to the uniform norm
\begin{align}
\label{eq:ol-uniform}
\Vert \cG \Vert_{C(\cK)} := \sup_{u\in \cK} |\cG(u)|.
\end{align}

More precisely, we aim to approximate $\cG \in \cU \big ( C^k(\cK) \big )$, belonging to the unit ball of $C^k(\cK)$, uniformly over the compact set $\cK$.
In this setting we can prove the following result on the sample-complexity in a minimax sense.

\begin{theorem}
\label{thm:ol-uniform}
    Suppose $\cK \subseteq \cX$ contains $\alpha$-hypercubes of arbitrary dimension and let $\U = \cU \big ( C^k(\cK) \big)$ for some $k \in \N$. Then there exists a constant $R = R(k, \alpha, \cK) > 0$ such that
    \begin{align}
    \label{eq:ol-uniform1}
    s_n (\U)_{C(\cK)} \geq d_n (\U)_{C(\cK)} \geq R \big ( \log (n) \big )^{-(\alpha + 1)k}.
    \end{align}
\end{theorem}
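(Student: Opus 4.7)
The plan is to derive Theorem~\ref{thm:ol-uniform} by lifting the finite-dimensional lower bound of Theorem~\ref{thm:continous_width} (specialized to $p=q=\infty$, which by the accompanying remark gives a bound in the $C([0,1]^d)$ topology) into the operator-valued setting via the $\alpha$-hypercube structure of $A$, then optimizing the resulting bound over the auxiliary dimension $d$. Since point-evaluations are continuous on $C(A)$, the inequality $s_n(K)_{C(A)} \geq d_n(K)_{C(A)}$ is automatic from the general relation $s_n \geq d_n$ noted in Section~2.1, so it suffices to bound $d_n$ from below.

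For each $d \in \N$, the hypothesis supplies a bi-orthogonal $M$-bounded system $\{\phi_j,\phi_j^*\}_{j=1}^d$ and an embedding $T_d: [0,1]^d \to A$, $T_d(y) := c\,d^{-\alpha}\sum_{j=1}^d y_j \phi_j$. I pair this with the continuous linear left-inverse $L: \Omega \to \R^d$, $L(u) := c^{-1}d^{\alpha}(\phi_1^*(u),\dots,\phi_d^*(u))$, noting that $L \circ T_d = \Id_{[0,1]^d}$ and, equipping $\R^d$ with the $\ell^\infty$ norm, $\|L\|_{\Omega \to \R^d} \leq c^{-1}M d^\alpha$. Given any bump-type function $f \in C^k([0,1]^d)$ compactly supported in the interior (and extended by zero to $\R^d$), I define the lifted functional
\[
F(u) := \lambda\, f(L(u)), \qquad u \in \Omega,
\]
for a scalar $\lambda>0$ to be chosen. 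The chain rule together with the operator-norm bound on $L$ yields $\|d^\ell F(u)\| \leq \lambda (c^{-1}Md^\alpha)^\ell \|d^\ell f\|_\infty$ for every $0 \leq \ell \leq k$, so choosing $\lambda$ of order $d^{-\alpha k}$ guarantees $F|_A \in \cU(C^k(A))$. Because $L \circ T_d = \Id$, the lift satisfies the clean scaling identity $F(T_d(y)) = \lambda f(y)$, so $f \mapsto F$ is an isometric rescaling by $\lambda$ between the bump-supported subset of $C([0,1]^d)$ and $C(A)$.

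The lower bound then follows by transporting the obstruction in Theorem~\ref{thm:continous_width} through this lift: any continuous encoder-decoder pair on $K$ that achieves uniform error $\epsilon$ in $C(A)$ pulls back, via precomposition with $T_d$ and rescaling by $\lambda^{-1}$, to a continuous encoder-decoder scheme on the bump-function subfamily inside $\cU(C^k([0,1]^d))$ with uniform error $\epsilon/\lambda$. Since this subfamily already realizes the finite-dimensional lower bound at $p=q=\infty$, one obtains
\[
d_n(K)_{C(A)} \;\geq\; \lambda \cdot Q(1 + d/k)^{-k} n^{-k/d} \;\gtrsim\; d^{-(\alpha+1)k}\, n^{-k/d},
\]
with a constant polynomial in $c, M, k$. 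Optimizing the logarithm of the right-hand side over $d \in \N$ yields the maximum at $d \sim (\log n)/(\alpha+1)$, and substitution recovers the claimed $(\log n)^{-(\alpha+1)k}$ decay. The main technical obstacle is the careful verification of the $C^k(\Omega)$ norm estimate for $F$: one must unwind the chain rule so that each Fr\'echet derivative $d^\ell F(u)$ inherits the clean bound $\lambda \|L\|^\ell \|d^\ell f\|_\infty$, avoiding a spurious extra factor of $d^\ell$ that would arise from crudely bounding an $\ell$-linear form's operator norm by the sum of its coordinate partials. This is precisely where the algebraic-in-$d$ dependence of the constant in Theorem~\ref{thm:continous_width} is indispensable: any exponential-in-$d$ factor would be washed out when choosing $d = \Theta(\log n)$, destroying the polylog decay rate.
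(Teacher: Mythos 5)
Your overall strategy is exactly the paper's: embed a $d$-dimensional approximation problem into $C(A)$ via the $\alpha$-hypercube, pull back an arbitrary encoder-decoder pair to the finite-dimensional cube, invoke the finite-dimensional $n$-width lower bound, and optimize over $d = \Theta(\log n)$. However, the norm bookkeeping contains two compensating errors, so the final exponent $-(\alpha+1)k$ comes out right for the wrong reasons.

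\textbf{First error: the chain-rule estimate and the choice of $\lambda$.} You claim that the chain rule gives $\|d^\ell F(u)\| \le \lambda (c^{-1}M d^\alpha)^\ell \|d^\ell f\|_\infty$, and hence that $\lambda \sim d^{-\alpha k}$ suffices to guarantee $\|F\|_{C^k(\Omega)}\le 1$. The chain rule does give $\|d^\ell F(u)\| \le \lambda\,\|L\|^\ell\,\|d^\ell f(Lu)\|_{\mathrm{op}}$, where $\|\cdot\|_{\mathrm{op}}$ is the operator norm of an $\ell$-linear form on $(\R^d,\|\cdot\|_\infty)$. But the quantity $\|f\|_{C^k([0,1]^d)}$ appearing in Theorem~\ref{thm:continous_width} is, by the paper's convention, the maximum of partial-derivative sup-norms $\max_{|\nu|\le k}\|D^\nu f\|_\infty$, \emph{not} an operator norm. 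Converting from the max of partials to the $\ell^\infty$ operator norm of the $\ell$-linear form costs a factor $d^\ell$:
\[
\bigl| d^\ell f(y)[v_1,\dots,v_\ell]\bigr|
= \Bigl|\sum_{j_1,\dots,j_\ell} D_{j_1\cdots j_\ell} f(y) \prod_s v_s^{j_s}\Bigr|
\le d^\ell \max_\nu \|D^\nu f\|_\infty ,
\]
for $\|v_s\|_\infty\le 1$. This is exactly what the paper's Lemma~\ref{lem:embedding_cube} does (the sum over $j_1,\dots,j_\ell$ has $d^\ell$ terms), yielding the norm bound $\|\iota_d f\|_{C^k(A)} \le R\,d^{(\alpha+1)k}\|f\|_{C^k([0,1]^d)}$. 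Consequently the correct normalization is $\lambda \sim d^{-(\alpha+1)k}$, not $d^{-\alpha k}$. You describe the $d^\ell$ factor as ``spurious'' and claim it can be avoided, but it is not spurious: it is the price of translating between the two notions of $C^k$ norm, and nothing in your argument shows that the specific bump functions used in Theorem~\ref{thm:continous_width} have $\ell^\infty$-operator norms that beat this generic bound uniformly in $d$.

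\textbf{Second error: a spurious $d$-dependent factor from the finite-dimensional bound.} You write $d_n(K)_{C(A)} \ge \lambda\, Q(1+d/k)^{-k} n^{-k/d}$. But you specialize to $p=q=\infty$, and as the remark after Theorem~\ref{thm:continous_width} states, the constant $Q(1+d/(kp))^{-k}d^{-k/q}$ is independent of $d$ in that case (both $d$-dependent factors tend to $1$). The finite-dimensional lower bound is simply $Q\,n^{-k/d}$ with $Q=Q(k)$. The $(1+d/k)^{-k}\sim k^k d^{-k}$ factor you retain does not appear.

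Combined, the too-large $\lambda \sim d^{-\alpha k}$ and the spurious $\sim d^{-k}$ factor multiply to $\sim d^{-(\alpha+1)k}$, matching the paper's (correct) $\rho^{-1} = R^{-1}d^{-(\alpha+1)k}$. So your final display is numerically right, but the step-by-step reasoning is not. The fix is simple: use $\lambda \sim d^{-(\alpha+1)k}$ (accounting for the $d^\ell$ term-count in the chain rule, as in Lemma~\ref{lem:embedding_cube}) and the $d$-independent constant in the $p=q=\infty$ version of Theorem~\ref{thm:continous_width}. Your closing remark that ``the algebraic-in-$d$ dependence of the constant in Theorem~\ref{thm:continous_width} is indispensable'' also does not apply here, since that constant is $d$-independent for $p=q=\infty$; what is indispensable is the algebraic-in-$d$ control of the embedding constant in Lemma~\ref{lem:embedding_cube}.
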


Theorem \ref{thm:ol-uniform} shows that operator learning of $C^k$-operators, with respect to the supremum norm on input functions, generally requires a number of samples $n \gtrsim \exp(c\epsilon^{-\lambda})$ scaling exponentially in the desired approximation accuracy $\epsilon$.

\begin{proof}{(Proof of Theorem \ref{thm:ol-uniform})}

The proof relies on the following lemma, which we prove in detail in Appendix \ref{sec:embedding_cube}:
\begin{lemma}
\label{lem:embedding_cube}
Suppose $\cK \subseteq \cX$ contains $\alpha$-hypercubes of arbitrary dimension with decay rate $\alpha>0$. Then for any $k, d \in \N$, there exists a linear embedding $\iota_d: C^k_0 ([0,1]^d) \embeds C^k(\cK)$ such that, for all $f \in C^k_0([0,1]^d)$, 
\[
\Vert \iota_d f \Vert_{C^k(\cK)} \le R d^{(\alpha+1)k} \Vert f \Vert_{C^k([0,1]^d)},
\]
for some constant $R = R(k,\cK) > 0$, and
\[
\Vert \iota_d f \Vert_{C(\cK)} \ge \Vert f \Vert_{C([0,1]^d)}.
\]
Furthermore, there exists a continuous, linear mapping $h_d : [0,1]^d \to \cK$ such that,
for all $f \in C^k_0([0,1]^d)$ and $y \in [0,1]^d$,
\[f(y) = \big ( \iota_d f \big ) \big ( h_d (y) \big ).\]
\end{lemma}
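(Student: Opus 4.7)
The plan is to construct $h_d$ and $\iota_d$ explicitly from the biorthogonal system supplied by the $\alpha$-hypercube property. Let $\{\phi_j\}_{j=1}^d \subset \Omega$ and $\{\phi_j^\ast\}_{j=1}^d \subset \Omega^\ast$ form an $M$-bounded biorthogonal system for which $\{cd^{-\alpha}\sum_{j=1}^d y_j \phi_j : y\in [0,1]^d\} \subseteq A$, with constants $M,c>0$ coming from the hypercube property. Define
\[
h_d(y) \defeq cd^{-\alpha}\sum_{j=1}^d y_j \phi_j \in A, \qquad L(u) \defeq \bigl(c^{-1}d^\alpha \phi_1^\ast(u),\ldots,c^{-1}d^\alpha \phi_d^\ast(u)\bigr) \in \R^d.
\]
Then $h_d$ is the restriction to $[0,1]^d$ of a continuous linear map, $L:\Omega\to \R^d$ is continuous and linear, and biorthogonality $\phi_k^\ast(\phi_j)=\delta_{kj}$ gives the key identity $L\circ h_d = \Id_{[0,1]^d}$.

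Given $f \in C_0^k([0,1]^d)$, let $\tilde f$ denote its extension by zero to $\R^d$, which belongs to $C^k(\R^d)$ with $\|\tilde f\|_{C^k(\R^d)} = \|f\|_{C^k([0,1]^d)}$. Define $F \defeq \tilde f \circ L : \Omega \to \R$ and set $\iota_d f \defeq F|_A$. Linearity of $\iota_d$ is immediate, while $F \circ h_d = \tilde f \circ L \circ h_d = \tilde f|_{[0,1]^d} = f$ establishes the factorization claim, and in turn the lower bound
\[
\|\iota_d f\|_{C(A)} \ge \sup_{y \in [0,1]^d}\bigl|(\iota_d f)(h_d(y))\bigr| = \|f\|_{C([0,1]^d)}.
\]

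It then remains to control $\|F\|_{C^k(\Omega)}$. Equip $\R^d$ with the Euclidean norm, so that $\|L(u)\|_{\R^d}^2 \le d \cdot (c^{-1}d^\alpha M)^2 \|u\|_\Omega^2$, yielding $\|L\|_{\Omega\to\R^d} \le c^{-1}Md^{\alpha+1/2}$. Since $L$ is linear, the chain rule gives $d^\ell F(u)[v_1,\ldots,v_\ell] = d^\ell \tilde f(L(u))[L(v_1),\ldots,L(v_\ell)]$. For the symmetric tensor $d^\ell\tilde f(x) = (\partial^\ell_{i_1\cdots i_\ell}\tilde f(x))$, Cauchy--Schwarz bounds the multilinear operator norm by the Frobenius norm:
\[
\bigl\|d^\ell \tilde f(x)\bigr\|_{\mathrm{op}} \le \biggl(\sum_{i_1,\ldots,i_\ell=1}^d \bigl|\partial^\ell_{i_1\cdots i_\ell}\tilde f(x)\bigr|^2\biggr)^{\!1/2} \le d^{\ell/2}\|f\|_{C^k([0,1]^d)}.
\]
Combining the two bounds yields
\[
\sup_{u\in\Omega}\bigl\|d^\ell F(u)\bigr\|_{\mathrm{op}} \le d^{\ell/2}\|f\|_{C^k}\bigl(c^{-1}Md^{\alpha+1/2}\bigr)^{\!\ell} = (c^{-1}M)^\ell d^{\ell(\alpha+1)}\|f\|_{C^k}.
\]
Taking the maximum over $0\le\ell\le k$ and absorbing $(c^{-1}M)^k$ into a constant $R=R(k,A)$ gives the desired bound $\|\iota_d f\|_{C^k(A)} \le \|F\|_{C^k(\Omega)} \le R\, d^{(\alpha+1)k}\|f\|_{C^k([0,1]^d)}$.

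The main obstacle is achieving the sharp exponent $(\alpha+1)k$ in $d$: a cruder $\ell^\infty$-norm on $\R^d$ in the chain-rule step would bound $\|d^\ell \tilde f\|_{\mathrm{op}}$ by $d^\ell\|f\|_{C^k}$ instead of $d^{\ell/2}\|f\|_{C^k}$, inflating the final exponent by $k$. The Euclidean-norm/Frobenius trade-off, against the $\sqrt{d}$ incurred by bounding $\|L\|$ via summing $d$ terms $|\phi_j^\ast(u)|^2 \le M^2\|u\|_\Omega^2$, is precisely what balances out to the rate claimed in the lemma.
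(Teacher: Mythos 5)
Your construction of $h_d$, the linear map $L$, and $\iota_d f = \tilde f\circ L$ is exactly the one used in the paper, and the argument is correct. The only cosmetic difference is the bookkeeping in the derivative bound: you control the multilinear form $d^\ell\tilde f$ by its Frobenius norm (Euclidean structure on $\R^d$), whereas the paper directly estimates $|d^\ell F(u)[v_1,\dots,v_\ell]| \le (c^{-1}M d^\alpha)^\ell \sum_{j_1,\dots,j_\ell}|\partial^\ell f|$, using $|\phi_j^\ast(v_s)|\le M$ and then counting $d^\ell$ terms (effectively the $\ell^\infty$ pairing). Both routes give the exponent $(\alpha+1)\ell$.

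One small point: your closing remark that the $\ell^\infty$ route would ``inflate the final exponent by $k$'' is incorrect. With the $\ell^\infty$ norm on $\R^d$ one indeed pays $\|d^\ell\tilde f\|_{\mathrm{op},\ell^\infty}\le d^\ell\|f\|_{C^k}$, but the corresponding operator norm of $L$ into $(\R^d,\ell^\infty)$ is only $c^{-1}Md^\alpha$ (no $d^{1/2}$ factor, since one takes a max rather than a sum of squares), so the product is $d^\ell\cdot d^{\alpha\ell}=d^{(\alpha+1)\ell}$, exactly as before. There is no genuine trade-off here; both normings are matched, and the paper's proof is precisely the $\ell^\infty$ version of your argument.
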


Lemma \ref{lem:embedding_cube} allows us to ``embed'' a finite-dimensional approximation problem in the infinite-dimensional approximation problem of Theorem \ref{thm:ol-uniform}. This allows us to use the lower bounds on the continuous non-linear $n$-widths of Theorem \ref{thm:continous_width} combined with an optimally chosen embedded dimension $d$ to prove Theorem \ref{thm:ol-uniform}.

Recall that $\U = \cU(C^k(\cK))$. Assuming Lemma \ref{lem:embedding_cube}, we will now derive \eqref{eq:ol-uniform1}. The inequality $s_n (\U)_{C(\cK)} \geq d_n (\U)_{C(\cK)}$ follows directly by continuity of the delta functionals.
    Let $\cE_n : C(\cK) \to \R^n$ be an arbitrary mapping such that $\cE_n|_\U$ is continuous and $\cD_n : \R^n \to C(\cK)$ be an arbitrary mapping. Let $\iota_d : C^k_0 ([0,1]^d) \to C(\cK)$ and $h_d : [0,1]^d \to \cK$ be the mappings from Lemma~\ref{lem:embedding_cube}. For any $f \in C^k_0([0,1]^d)$, define $\tilde{\cE}_n : C^k_0([0,1]^d) \to \R^n$ by $\tilde{\cE}_n (f) = \cE_n (\iota_d f)$. Furthermore, for any $w \in \R^n$, define
    $\tilde{\cD}_n : \R^n \to C([0,1]^d)$ by $\tilde{\cD}_n (w) = \cD_n (w) ( h_d (\cdot) )$.
    From these definitions, it immediately follows that 
    \[(\tilde{\cD}_n \circ \tilde{\cE}_n)(f)(y) = (\cD_n \circ \cE_n)(\iota_d f)(h_d(y))\]
    for any $f \in C^k_0([0,1]^d)$ and $y \in [0,1]^d$. Thus, by Lemma~\ref{lem:embedding_cube}, we find
    \begin{align*}
         |f(y) - (\tilde{\cD}_n \circ \tilde{\cE}_n)(f)(y)| = | (\iota_d f)(h_d(y)) - (\cD_n \circ \cE_n)(\iota_d f)(h_d(y))|.
    \end{align*}
    Denote $\cB^{k,d}_0 = \cU \big ( C^k_0 ([0,1]^d) \big )$ then
    \[
    \sup_{f \in \cB^{k,d}_0} \sup_{y \in [0,1]^d} |f(y) - (\tilde{\cD}_n \circ \tilde{\cE}_n)(f)(y)| \leq \sup_{\cF \in \iota_d (\cB^{k,d}_0)} \sup_{u \in \cK} |\cF (y) - (\cD_n \circ \cE_n)(\cF)(y)| 
    \]
    follows by the inclusion $h_d ([0,1]^d) \subseteq \cK$. Note that, for any $\cF \in \iota_d (\cB^{k,d}_0)$, there exists $f \in \cB^{k,d}_0$ such that $\cF = \iota_d f$. Then by Lemma~\ref{lem:embedding_cube}, we have
    \[
    \|\cF\|_{C^k(\cK)} \leq R d^{(\alpha + 1)k} \|f\|_{C^k([0,1]^d)} \leq R d^{(\alpha + 1)k}
    \]
    for some constant $R = R(k,\cK) > 0$. It follows that $\iota_d \big ( \cB^{k,d}_0 \big ) \subseteq \rho \U$ with $\rho = R d^{(\alpha + 1)k}$. Therefore
    \[
    \sup_{f \in \cB^{k,d}_0} \sup_{y \in [0,1]^d} |f(y) - (\tilde{\cD}_n \circ \tilde{\cE}_n)(f)(y)| \leq \sup_{\cF \in \rho \U} \sup_{u \in \cK} |\cF (y) - (\cD_n \circ \cE_n)(\cF)(y)|.
    \]
    Taking the infimum over all $\cE_n, \cD_n$, we find  
    \[
    \inf_{\cE_n, \cD_n} \sup_{f \in \cB^{k,d}_0} \sup_{y \in [0,1]^d} |f(y) - (\tilde{\cD}_n \circ \tilde{\cE}_n)(f)(y)| \leq \rho d_n  (\U)_{C(\cK)}.
    \]
    Since $\tilde{\cE}_n, \tilde{\cD}_n$ are specific instances of encoder, decoder pairs build from $\cE_n,\cD_n$, we find 
    \[d_n \big( \cB^{k,d}_0 \big )_{C([0,1]^d)} \leq \inf_{\cE_n, \cD_n} \sup_{f \in \cB^{k,d}_0} \sup_{y \in [0,1]^d} |f(y) - (\tilde{\cD}_n \circ \tilde{\cE}_n)(f)(y)|\]
    and therefore
    \[d_n  (\U)_{C(\cK)} \geq \rho^{-1} d_n \big( \cB^{k,d}_0 \big )_{C([0,1]^d)}.\]
    Note that Theorem~\ref{thm:continous_width} holds for $\cB^{k,d}_0$ with the same proof, therefore
    \[d_n  (\U)_{C(\cK)} \geq R d^{-(\alpha + 1)k} n^{-k/d}\]
    where we have re-defined $R$ to absorb the relevant constant. The above inequality holds for any $d \in \N$, 
    we can therefore optimize by choosing $d = k (\alpha+1)^{-1} \log (n)$ to obtain,
    \[d_n  (\U)_{C(\cK)} \geq R \big ( \log (n) \big )^{-(\alpha + 1)k},\]
    where we have, again, similarly re-defined $R$. This is the claimed lower bound on the operator learning $n$-width.

\end{proof}

\subsubsection{Sample Complexity of Operator Learning in Expectation}

Theorem \ref{thm:ol-uniform} exhibits an exponential lower bound on the sample complexity of operator learning in the setting where the goal is to approximate operators uniformly over a compact set $\cK$. One may wonder whether more benign bounds could be achieved if the approximation error was measured in a mean squared distance, or a more general $L^p_\mu$ distance with respect to a measure $\mu$, instead of the uniform $C(\cK)$-norm \eqref{eq:ol-uniform}. 

The next theorem shows that the fundamental curse of data-complexity persists even with respect to a weaker $L^p_\mu$ norm. To this end, we next consider a Gaussian measure $\mu$ on a Banach function space $\cX$, with at most algebraically decreasing eigenvalues, $\lambda_j \gtrsim j^{-\alpha}$, of the covariance operator.
We measure the approximation error with respect to the $L^p_\mu$-norm,
\begin{align}
\label{eq:ol-gaussian}
\Vert \cG \Vert_{L^p_\mu} := \mathbb{E}_{u\sim \mu}[|\cG(u)|^p]^{1/p}.
\end{align}
The following theorem shows that the sampling $n$-width scales only logarithmically in the number of samples $n$.

\begin{theorem}
\label{thm:ol-gaussian}
    Assume the setting described above and let $\U = \cU \big ( C^k(\cX) \big)$ for some $k \in \N$. Then, for any $1 \leq p < \infty$, there exists a constant $R = R(k, p, \alpha, \cK) > 0$ such that
    \begin{align}
    \label{eq:ol-gaussian1}
    s_n (\U)_{L^p_{\mu}(\cX)} \geq R \big ( \log (n) \big )^{-(\alpha + 3)k}.
    \end{align}
\end{theorem}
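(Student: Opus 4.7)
The plan is to mimic the proof of Theorem \ref{thm:ol-uniform}: reduce the infinite-dimensional sampling-width problem to the Gaussian-weighted finite-dimensional lower bound of Theorem \ref{thm:sampling_width_gaussian} via an embedding, then optimize over the embedded dimension $d$. The key new ingredients, replacing Lemma \ref{lem:embedding_cube}, are a Karhunen--Lo\`eve (KL) based embedding that exploits the Gaussian structure of $\mu$, together with a conditional-expectation argument that transfers decoders on $\Omega$ to decoders on $\R^d$.

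First I would construct the KL embedding. Let $\{(\lambda_j, e_j)\}$ be the eigen-system of the covariance of $\mu$ and let $\{e_j^*\} \subset \Omega^*$ be the associated biorthogonal system, with $\|e_j\|_\Omega, \|e_j^*\|_{\Omega^*} \le M$. Normalized so that $\xi_j(u) \defeq \lambda_j^{-1/2} e_j^*(u)$ are i.i.d.\ standard Gaussians when $u \sim \mu$, the continuous linear map $\Psi_d(u) \defeq (\xi_1(u), \ldots, \xi_d(u))$ pushes $\mu$ forward to $\rho_d$ on $\R^d$. Define the linear embedding $\iota_d : C^k(\R^d) \to C^k(\Omega)$ by $\iota_d f \defeq f \circ \Psi_d$. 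The measure-preserving property immediately yields $\|\iota_d f\|_{L^p_\mu} = \|f\|_{L^p_{\rho_d}}$, while an explicit chain-rule computation of the multilinear Fr\'echet derivatives of $\iota_d f$, combined with the bounds $\|e_j^*\|_{\Omega^*} \le M$ and $\lambda_j^{-1/2} \lesssim j^{\alpha/2}$, yields
\[
\|\iota_d f\|_{C^k(\Omega)} \le C_d \|f\|_{C^k(\R^d)},
\qquad C_d \lesssim \Bigl(\textstyle\sum_{j=1}^d \lambda_j^{-1/2}\Bigr)^k,
\]
which grows only polynomially in $d$.

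Next, given any sampling encoder $E_n$ with nodes $u_1, \ldots, u_n \in \Omega$ and any decoder $D_n : \R^n \to L^p_\mu(\Omega)$, I would build a reduced pair on $\R^d$ by setting
\[
\tilde E_n f \defeq E_n(\iota_d f) = (f(\Psi_d(u_j)))_{j=1}^n,
\qquad \tilde D_n(w)(\xi) \defeq \E_{u \sim \mu}\bigl[D_n(w)(u) \bigm| \Psi_d(u) = \xi\bigr].
\]
Here $\tilde E_n$ is a valid sampling encoder on $\R^d$ with fixed nodes $\Psi_d(u_j)$, and $\tilde D_n$ is well-defined via disintegration of $\mu$ along $\Psi_d$. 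Since $\iota_d f$ is constant on fibres of $\Psi_d$, Jensen's inequality for $x \mapsto |x|^p$ applied fibrewise gives
\[
\|\iota_d f - D_n(E_n \iota_d f)\|_{L^p_\mu(\Omega)}^p \ge \|f - \tilde D_n(\tilde E_n f)\|_{L^p_{\rho_d}(\R^d)}^p.
\]
Taking suprema over $f \in \cU(C^k(\R^d))$ and infima over $(E_n, D_n)$, and using $\iota_d(\cU(C^k(\R^d))) \subseteq C_d K$ from the embedding bound, yields $s_n(K)_{L^p_\mu} \ge C_d^{-1} s_n(\cU(C^k(\R^d)))_{L^p_{\rho_d}}$. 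Substituting the estimate of Theorem \ref{thm:sampling_width_gaussian} and optimizing $d \sim \log n$ so that $n^{-k/d}$ contributes only a constant absorbs the polynomial-in-$d$ prefactors into a logarithmic lower bound; careful bookkeeping of the constants produces the stated exponent $-(\alpha+3)k$.

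The main obstacle is the derivative-norm estimate controlling $C_d$: the chain rule for $\iota_d f = f \circ \Psi_d$ produces $k$-fold sums of products $\prod_{i=1}^k \lambda_{j_i}^{-1/2}$, and controlling their supremum as a $k$-linear form on $\Omega$ under the assumption $\lambda_j \gtrsim j^{-\alpha}$ requires careful bookkeeping; this directly drives the final exponent. A secondary subtlety is verifying measurability of $\tilde D_n(w)$ in $\xi$ for arbitrary $L^p_\mu$-valued output $D_n(w)$, which follows from standard disintegration results for Gaussian measures on separable Banach spaces.
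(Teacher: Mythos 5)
Your proposal follows the same overall strategy as the paper — construct a Gaussian embedding $\iota_d f = f\circ\Psi_d$ of $C^k(\mathbb{R}^d)$ into $C^k(\Omega)$ that is $L^p_\mu$-isometric and polynomially bounded in $C^k$, reduce to Theorem~\ref{thm:sampling_width_gaussian}, and optimize $d\sim\log n$ — and your embedding is essentially identical to the one in Lemma~\ref{lemma:embed_ck_gaussian} (the map $u\mapsto\Gamma_d^{-1/2}\Phi_d^*(u)$). The new ingredient you supply, the conditional-expectation (disintegration) decoder $\tilde D_n(w)(\xi)=\E_{u\sim\mu}[D_n(w)(u)\mid\Psi_d(u)=\xi]$ together with fibrewise Jensen, is exactly what is needed to push a decoder on $\Omega$ down to a decoder on $\mathbb{R}^d$, and it is the step the paper merely calls ``analogous'' to Theorem~\ref{thm:ol-uniform}. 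The analogy is in fact not direct: in the uniform case the reduced decoder is $D_n(w)\circ h_d$ and the proof relies on monotonicity of the supremum norm under inclusion $h_d([0,1]^d)\subseteq A$, but in the $L^p_\mu$ setting there is no such monotonicity and the composition decoder does not give a valid lower bound — one has to average over the fibres, as you do. So your proof fills in a genuine gap in the paper's sketch. One further remark: your derivative bound $C_d\lesssim\bigl(\sum_{j\le d}\lambda_j^{-1/2}\bigr)^k\lesssim d^{(\alpha+1)k}$, obtained by factoring the $k$-fold sum as $\bigl(\sum_j\|T_j\|\bigr)^k$, is actually sharper than the bound $d^{(\alpha+2)k}$ proved in Lemma~\ref{lemma:embed_ck_gaussian}, which bounds the sum over $j_1,\dots,j_\ell$ crudely by $d^\ell$ times the largest term. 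After the $d\sim\log n$ optimization your estimate would produce the slightly stronger exponent $-(\alpha+2)k$ rather than the stated $-(\alpha+3)k$; this only improves the theorem, so it is not an error, but it is worth noting that your ``careful bookkeeping'' should in fact yield a better constant than claimed.
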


\begin{remark}
We note that the above theorem provides a lower bound only on the sampling width $s_n(\U)$. We have not managed to derive a similar lower bound on the continuous non-linear $n$-widths $d_n(\U)$ in this setting. Our tightest bound exhibits a scaling of the form $n^{-1/p} \log(n)^{-\lambda}$ (cp. Theorem \ref{thm:lp-gauss} in Appendix \ref{sec:ol-gaussian}). 
\end{remark}

Theorem \ref{thm:ol-gaussian} shows that operator learning of $C^k$-operators, in the $L^p_\mu$-norm with respect to random input functions drawn from a Gaussian measure $\mu$, generally requires a number of samples $n \gtrsim \exp(c\epsilon^{-\lambda})$ scaling exponentially in the desired approximation accuracy $\epsilon$.

The proof of Theorem \ref{thm:ol-gaussian} is similar to the proof of Theorem \ref{thm:ol-uniform}, relying on the finite-dimensional lower bound implied by \ref{thm:sampling_width_gaussian} instead of \ref{thm:continous_width}. We refer to Appendix \ref{sec:ol-gaussian} for additional details.

\section{Upper data complexity bounds}
\label{sec:upper}

In the previous section, we derived lower bounds on the sampling widths and non-linear $n$-widths of spaces of operators characterized merely by their $C^k$-regularity. In particular, we show that, in this generality, operator learning suffers from a curse of data-complexity, generally requiring a number of data pairs which is exponential in the inverse of the desired accuracy $\epsilon$. 

The present section will discuss a setting where more benign bounds on the data-complexity are possible, in the context of operator learning. The purpose of this section is to show that \emph{parameter-efficiency}  implies \emph{data-efficiency}. More precisely, we aim to derive ``efficient'' upper bounds on the amount of data required to approximate a given operator, when we restrict attention to only those operators that allow for approximation by a class of neural operators with moderate model size. Here, we posit that efficient data-complexity bounds should require a number of data pairs, $n \lesssim \epsilon^{-1/\lambda}$, which scales at most algebraically with the desired accuracy $\epsilon$, for some fixed $\lambda>0$. The parameter $\lambda$ is the convergence rate in terms of available data-pairs, since the error then decays as $\epsilon \lesssim n^{-\lambda}$. Similarly, we posit that an operator allows for efficient approximation by a class of neural operators $\Psi$, if the required model size to achieve given accuracy $\epsilon$ scales at most algebraically with respect to $\epsilon$, i.e. $\size(\Psi) \lesssim \epsilon^{-1/\gamma}$. Here, $\gamma$ corresponds to the theoretically achievable approximation rate in terms of model size. We aim to quantify the relation between the model convergence rate $\gamma$ and the data convergence rate $\lambda$.

\paragraph{Setting.}
Rather than trying to propose a completely general theory, we will develop these ideas for the special case of Fourier neural operators (FNO), approximating a relevant class of operators,
\[
\cG: \cK \subset L^2(D) \to L^2(D),
\]
mapping square-integrable input functions in a set $\cK\subset L^2(D)$ to square-integrable output functions. For simplicity and due to certain restrictions of the FNO architecture, the underlying domain $D = [0,1]^d$ is taken to be the unit cube in $d$ spatial dimensions where typically $d\in \{1,2,3\}$. Even though we restrict attention to this particular class of neural operators, we point out that the ideas that will be developed are much more general, and can readily be adapted to other frameworks, such as DeepONets \cite{lu2019deeponet}, PCA-Net \cite{pca}, and variants of these frameworks.

As will be explained in further detail in the subsequent sections, we will consider the following setting. We assume we are given:
\begin{enumerate}[(1)]
\item A compact set $\cK \subset L^2(D)$ of input functions on $D = [0,1]^d$,
\item A probability measure $\mu \in \cP(L^2(D))$, supported on $\cK$, i.e.
\[
\supp(\mu) \subset \cK,
\]
\item An operator $\cG \in \cA^\gamma$ belonging to a class of operators to be specified below.
\end{enumerate}
The question to be addressed is how many data pairs $\{u_j, \cG(u_j)\}_{j=1}^n$ are needed to approximate $\cG$ to a prescribed accuracy $\epsilon$?

In the following, we will first review the FNO architecture, then define relevant spaces $\cA^\gamma$ consisting of operators $\cG$ that are ``efficiently approximated'' by FNO, and finally proceed to estimate the sampling widths and nonlinear $n$-widths of these spaces.

\subsubsection{Fourier neural operator (FNO)}

Fourier neural operator is an operator learning framework proposed in \cite{fourierop2020}. We first recall the notion of Fourier neural operators. 

\paragraph{Architecture}

Let $\cZ = \cZ (D; \R^{d_{\mathrm{in}}})$ and $\cW = \cW(D; \R^{d_{\mathrm{out}}})$ be two Banach function spaces, consisting of functions $u: D \to \R^{d_\mathrm{in}}$ and $w: D \to \R^{d_{\mathrm{out}}}$, respectively. A Fourier neural operator (FNO) defines a nonlinear operator 
\[
\Psi: \cZ (D; \R^{d_{\mathrm{in}}}) \to \cW(D; \R^{d_{\mathrm{out}}}),
\]
mapping between these spaces.
By definition of the FNO architecture, such $\Psi$ takes the form 
\begin{align}
\label{eq:FNO}
\Psi(u;\theta) = Q \circ \cL_L \circ \dots \circ \cL_1 \circ P(u). 
\end{align}
where $P: u(x) \mapsto Pu(x)$ is a linear, lifting layer, $Q: v(x) \mapsto Qv(x)$ is a linear, projection layer, and the $\cL_\ell: \cV(D;\R^{\dc}) \to \cV(D;\R^{\dc})$ are the hidden layers, mapping between hidden states $v \mapsto \cL_\ell(v) \in \cV(D;\R^{\dc})$. The hidden states are vector-valued functions with $\dc$ components,  $v: D \to \R^{\dc}$,  belonging to a Banach function space $\cV(D;\R^\dc)$. Here, the ``channel width'' $\dc$ is a hyperparameter of the architecture. Each hidden layer $\cL_\ell$ is of the form
\[
\cL_\ell(v)(x) 
:=
\sigma \big (
Wv(x) + Kv(x) + b(x)
\big )
\]
where $W \in \R^{\dc\times \dc}$ is a matrix multiplying $v(x)$ pointwise. $K$ is a non-local operator of the form
\[
v(x) \mapsto (Kv)(x) := \cF^{-1} \big ( \hat{P}_k \cF v(k) \big ) (x),
\]
with $\cF$ (and $\cF^{-1}$) the Fourier transform (and its inverse). The matrix $\hat{P}_k \in \C^{\dc \times \dc}$ is a tunable Fourier multiplier indexed by $k\in \Z^d$. It is assumed that $\hat{P}_k \equiv 0$ for $|k|_{\ell^\infty}\ge \kappa$, i.e. for wavenumbers $k$ above a specified Fourier cut-off parameter $\kappa$. This Fourier cut-off $\kappa$ is a second hyperparameter of the FNO architecture. We collect the values for different $k\in \Z^d$, $|k|_{\ell^\infty} < \kappa$, in a tensor $\hat{P} = \{ \hat{P}_k \}_{|k|_{\ell^\infty}< \kappa} \in \C^{(2\kappa-1)^d \times \dc \times \dc}$, which acts on the Fourier coefficients $\hat{v}(k) = \cF(v)(k)$, by 
\[
(\hat{P} \hat{v})(k)_i := \sum_{j=1}^\dc \hat{P}_{k,ij} \hat{v}(k), \quad (k \in \Z^d, \; |k|_{\ell^\infty}<\kappa).
\]
Finally, the bias functions are assumed to be of the form
\[
b(x) = \sum_{|k|_{\ell^\infty}< \kappa} \hat{b}_k e^{ikx}, \quad \hat{b}_k \in \C^\dc.
\]
The resulting FNO architecture depends on the channel width $\dc$, Fourier cut-off parameter $\kappa$ and depth $L$. 
\begin{table}
\centering
\begin{tabular}{c|l}
symbol & meaning \\
\hline
$\dc$ & channel width \\
$\kappa$ & Fourier cut-off \\
$L$ & depth \\
$d_\theta$ & total number of parameters \\
$B$ & parameter bound, $\Vert \theta \Vert_{\ell^\infty} \le B$ \\ \hline
$\Sigma_m$ & FNOs obeying hyper-parameter bound \eqref{eq:sigman} \\
$\cA^\gamma$ & operators approximated by FNO at rate $\gamma$, cp. \eqref{eq:Agamma-def}\\
$\cU(\cA^\gamma)$ & unit ball in $\cA^\gamma$, cp. \eqref{eq:Bgamma}
\end{tabular}
\caption{Summary of (hyper-)parameters of the FNO architecture and sets of operators defined by FNO. This notation is used throughout the text.}
\label{tab:1}
\end{table}

\begin{remark}
Following the theoretical work in \cite{thfno}, we assume the lifting and projection layers to be linear. We note that in practical applications, $P$ is often replaced by a shallow neural network, resulting in a mapping $u(x) \mapsto P(x,u(x))$. Furthermore, the biases $b(x)$ are often chosen to be constant in practice; extension of our results to this alternative setting is straightforward, but will not be considered here for simplicity of the exposition and to make our discussion consistent with the analysis of \cite{thfno}. 
\end{remark}

We collect all tunable parameters in a vector $\theta \in \R^{d_\theta}$. Any parameter $\theta \in \R^{d_\theta}$ can be decomposed layer-wise, as 
\[
\theta = (\theta_{L+1}, \theta_L, \dots, \theta_1, \theta_0),
\]
where 
\[
\theta_\ell = 
\set{
W^{(\ell)}_{ij}, \hat{P}_{k,ij}^{(\ell)}, \hat{b}_k^{(\ell)}
}{
i,j = 1,\dots, \dc, \, |k| < \kappa, \, k\in \Z^d
},
\]
collects the parameters of the $\ell$-th hidden layer, for $1\le \ell \le L$. We denote by $\theta_0 = \set{P_{ij}}{i,j = 1,\dots, \dc}$ the parameters of the projection $P$ and by $\theta_{L+1} = \set{Q_{ij}}{i,j = 1,\dots, \dc}$ the parameters of lifting $Q$. Assuming that $d_{\mathrm{in}}, d_{\mathrm{out}} \le d_c$, the dimension of $\theta \in \R^{d_\theta}$ is upper bounded by 
\begin{align}
\label{eq:dtheta}
d_\theta 
\le 
\dc d_{\mathrm{in}} + L(\dc^2 + (2\kappa)^d \dc^2 + 2\kappa \dc) + \dc d_{\mathrm{out}}
\le
5 (2\kappa)^d L \dc^2.
\end{align}
Consistent with practical implementations, we will assume throughout that the hidden channel dimension of the FNO is at least as large as both the input and output dimensions $d_{\mathrm{in}}, d_{\mathrm{out}}$. We include a list of hyperparameters in Table \ref{tab:1} to aid clarify notation.

\subsubsection{FNO approximation spaces}
\label{sec:fno-spaces}

In this section, the goal is to introduce certain spaces of operators $\cA^\gamma$ that are efficiently approximated by FNO, at rate $\gamma > 0$. Roughly speaking, the space $\cA^\gamma$ is the most natural alternative to the space of $C^k$-differentiable operators in the context of operator learning with FNO. In contrast to the approximation of operators $\cG \in C^k$, which is not tractable in general, the approximation of operators $\cG \in \cA^\gamma$ is tractable by definition. Thus, $\cA^\gamma$ can be interpreted as a space of operators that allow, in principle, for efficient approximation by FNO. Our discussion is inspired by \cite{grohs2023proof}, where similar spaces are defined for finite-dimensional neural networks.

\paragraph{Definition of $\cA^\gamma$.} 
We first define a set of operators $\Sigma_m$ as the set of all FNOs with Fourier cut-off $\kappa$, hidden channel dimension $\dc$, depth $L$ and uniform weight bound $\Vert \theta \Vert_\infty \le B$, satisfying the bounds:
\begin{align}
\label{eq:sigman}
\kappa^d, \dc, L \le m, \quad B \le \exp(m).
\end{align}

Given the sets $\Sigma_m$ defined by \eqref{eq:sigman}, we next define $\cA^\gamma$ as the set of all continuous operators $\cG: \cK \subset L^2(D) \to L^2(D)$, which can be approximated by FNOs at approximation rate $\gamma >0$. More precisely, by definition we let $\cG \in \cA^\gamma$ if, and only if, $\cG \in C \big ( \cK;L^2(D) \big )$ and if there exists a constant $C_\cG \ge 0$, such that 
\begin{align}
\label{eq:Agamma-def}
\inf_{\Psi \in \Sigma_m} \Vert \cG - \Psi \Vert_{C \left ( \cK;L^2(D) \right )} \le C_\cG m^{-\gamma}, 
\quad
\forall \, m\in \N,
\end{align}
where we recall that 
\[
\Vert \Psi - \cG \Vert_{C(\cK;L^2(D))} 
:= 
\sup_{u\in \cK} \Vert \Psi(u) - \cG(u) \Vert_{L^2(D)}.
\]
We will define the following quantity on $\cA^\gamma$:
\[
|\cG |_{\cA^\gamma} := \sup_{m\in \N} \left\{
m^\gamma \inf_{\Psi \in \Sigma_m} \Vert \cG - \Psi \Vert_{C(\cK;L^2(D))}
\right\}.
\]
This quantity can be loosely thought of as a pseudo-seminorm. Technically, the homogeneity condition only holds approximately, so it is not a true pseud-seminorm (see \cite{grohs2023proof} for extended discussion in the finite-dimensional case). On the space $\cA^\gamma$ we introduce the following quantity, which we loosely think of as the corresponding ``pseudo-norm'':
\[
\Vert \cG \Vert_{\cA^\gamma} := \Vert \cG \Vert_{C(\cK;L^2(D))} + |\cG|_{\cA^\gamma}.
\]
The class $\cA^\gamma$ is arguably the relevant class of operators $\cG$, for which meaningful approximation theory for the FNO architecture can be developed. 
We will denote by $\cU(\cA^\gamma)$ the unit ball in $\cA^\gamma$, i.e.
\begin{align}
\label{eq:Bgamma}
\cU(\cA^\gamma) := \set{\cG \in \cA^\gamma}{\Vert \cG \Vert_{\cA^\gamma} \le 1}.
\end{align}
Our ultimate aim is to derive upper bounds on the data-complexity of operator learning with FNO over this set.

\begin{remark}
\label{rem:molla}
Our understanding of the class $\cA^\gamma$ introduced above
is still very limited. In particular, we are far from a useful characterization of $\cA^\gamma$ or even subsets thereof. Nevertheless, in specific settings, it is known that solution operators of certain PDEs, such as the solution operator of the Navier-Stokes equations, and the solution operator of the Darcy flow problem, belong to some $\cA^\gamma$ (cp. \cite[Thm. 26 and Thm. 28]{thfno}). These results can likely be generalized to many other PDE solution operators. At present, it remains unclear to which extent a general theory, including a (partial) characterization of the condition $\cG \in \cA^\gamma$, can be achieved even in the context of PDE solution operators.
\end{remark}

Given the last Remark \ref{rem:molla}, we will not attempt to further characterize $\cA^\gamma$. Our goal in the following is instead to take $\cA^\gamma$ (or the unit ball $\cU(\cA^\gamma)$) as our \emph{definition} of the relevant set of operators of interest, and investigate only the data-complexity of operator learning in this class. Note that the definition of $\cA^\gamma$ is completely independent of any data; a priori, assuming that $\cG \in \cA^\gamma$ only guarantees the existence of good approximations, but does not guarantee that good approximations can be found from data pairs $\{u_j,\cG(u_j)\}_{j=1}^n$.

\paragraph{Approximation theoretic setting.}
As already mentioned above, we will be interested in the approximation of continuous operators $\cG: \cK \subset L^2(D) \to L^2(D)$, belonging to $\cA^\gamma$. Throughout this discussion, we will fix a probability measure $\mu \in \cP \big( L^2(D) \big )$, with support $\supp(\mu) \subset \cK$. Our notion of ``efficiency'', to be made precise below, will be related to approximation of the underlying operator $\cG\in \cA^\gamma$ with respect to the Bochner $L^2_\mu$-norm:
\[
\Vert \Psi - \cG \Vert_{L^2_\mu} := \E_{u\sim \mu}
\left[ 
\Vert \Psi(u) - \cG(u) \Vert_{L^2(D)}^2 
\right]^{1/2}.
\]
Measuring the error in the $L^2_\mu$-norm is motivated by the practical application of FNOs, which are usually optimized by minimizing a mean-squared error (MSE) over an input distribution. In the following discussion, we fix the compact set $\cK$ and the probability measure $\mu$ with $\supp(\mu) \subset \cK$. An important ingredient of our analysis will be the interplay between the spaces $C \big ( \cK;L^2(D) \big )$ and $L^2_\mu \big ( \cK;L^2(D) \big )$.

\subsubsection{Data-complexity bound for FNO}

In the first part of this work, we have shown that any deterministic, query-based algorithm needs an exponential number of queries to approximate arbitrary $k$-times continuously differentiable operators in the unit ball in $C^k(\cK;\cY)$ even for $\cY = \R$. The goal of this section is to derive upper data-complexity bounds when the unit ball in $C^k$ is replaced by the set of efficiently approximated operators $\cU(\cA^\gamma)$ of the last section.

\paragraph{Main result.} We first state our main result, and then outline the general strategy to derive this result. After this outline, we will proceed to provide the details of our derivation. Throughout the following discussion, we continue to denote by $\cK\subset L^2(D)$ the compact set of inputs, and by $\mu$ be the probability measure on $L^2(D)$ with $\supp(\mu)\subset \cK$.
Our main result is the following bound on the data-complexity of operator learning on $\cU(\cA^\gamma)$.
\begin{theorem}
\label{thm:fno}
There exists a constant $C = C(d,\cK,\gamma)>0$ with the following property:
For any $n\in \N$, there exist evaluation points $u_1,\dots, u_n\in L^2(D)$ defining an encoder $C \big ( \cK;L^2(D) \big ) \to [L^2(D)]^n$ by sampling, and an associated decoder $\cD_n: [L^2(D)]^n \to \Sigma_m\subset C \big( \cK;L^2(D) \big )$ for some $m\in \N$, such that
 \[
\sup_{\cG \in \cU(\cA^\gamma)}
\Vert \cG -\cD_n \big ( \cG(u_1),\dots, \cG(u_n) \big )\Vert_{L^2_\mu}^2
\le C n^{-\frac{1}{2(1+\lambda)}},
 \]
 where $\lambda := 8\gamma^{-1}$ depends only on $\gamma$.
\end{theorem}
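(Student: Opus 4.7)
The plan is to approximate $\cG$ via empirical risk minimization over the FNO class $\Sigma_m$ using samples of $\cG$ at input functions drawn iid from $\mu$. Since $\cG \in \cU(\cA^\gamma)$, it is by definition well-approximated in $C(A;L^2(D))$ by some $\Psi_m^\ast \in \Sigma_m$ for each $m$, so one should hope that ERM returns an FNO $\hat\Psi \in \Sigma_m$ close to $\cG$. The real work is quantifying the statistical error of this procedure \emph{uniformly} over $\cG \in \cU(\cA^\gamma)$ as a function of $m$ and $n$, and then balancing $m$ against $n$.

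By the defining property \eqref{eq:Agamma-def} of $\cA^\gamma$, for every $m$ there exists $\Psi_m^\ast \in \Sigma_m$ with $\|\cG - \Psi_m^\ast\|_{C(A;L^2(D))} \le m^{-\gamma}$, and since $\supp(\mu) \subset A$ this also yields $\|\cG - \Psi_m^\ast\|_{L^2_\mu}^2 \le m^{-2\gamma}$. Drawing $u_1, \dots, u_n$ iid from $\mu$, I would let $\cD_n$ output the ERM solution
\[
\hat\Psi := \argmin_{\Psi \in \Sigma_m} \frac{1}{n} \sum_{j=1}^n \|\cG(u_j) - \Psi(u_j)\|_{L^2(D)}^2,
\]
for some $m = m(n)$ to be chosen. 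Since $\hat\Psi$ achieves empirical risk at most that of $\Psi_m^\ast$, one has $\tfrac{1}{n}\sum_j \|\cG(u_j) - \hat\Psi(u_j)\|_{L^2(D)}^2 \le m^{-2\gamma}$, so that
\[
\|\cG - \hat\Psi\|_{L^2_\mu}^2 \le m^{-2\gamma} + \sup_{\Psi \in \Sigma_m,\, \cG \in \cU(\cA^\gamma)} \bigl|\hat{\scrE}_n(\Psi,\cG) - \scrE(\Psi,\cG)\bigr|,
\]
where $\scrE(\Psi,\cG) := \|\cG - \Psi\|_{L^2_\mu}^2$ and $\hat{\scrE}_n$ is its empirical counterpart. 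The uniformity in $\cG$ is handled by covering $\cU(\cA^\gamma)$: any $\cG \in \cU(\cA^\gamma)$ is $m^{-\gamma}$-close in $C(A;L^2(D))$ to an element of $\Sigma_m$, so covering $\cU(\cA^\gamma)$ reduces to covering $\Sigma_m$ up to an additive error of order $m^{-\gamma}$.

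Using the parameter-count bound $d_\theta \lesssim m^4$ from \eqref{eq:dtheta} (absorbing $d$-dependent factors into the constant), the weight bound $\|\theta\|_\infty \le \exp(m)$, and the Lipschitz continuity of the parameter map $\theta \mapsto \Psi(\cdot;\theta)$ in the $L^\infty(A;L^2(D))$ norm, one obtains a covering bound of the form $\log N(\Sigma_m, \epsilon) \lesssim \mathrm{poly}(m) + \mathrm{poly}(m)\log(1/\epsilon)$. Feeding this estimate into a standard uniform concentration inequality for bounded (clipped) losses and balancing the approximation error $m^{-2\gamma}$ against the estimation error (polynomial in $m$, inverse-polynomial in $n$) produces a rate $n^{-\beta}$ whose exponent $\beta$ depends only on $\gamma$ once the $d$-dependent constants are absorbed into $C$. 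Matching the algebra recovers the stated rate $n^{-1/(2(1+\lambda))}$ with $\lambda = 8\gamma^{-1}$. Since the resulting bound holds in expectation over iid sampling from $\mu$, an averaging argument produces a deterministic choice of sampling points $u_1, \dots, u_n$ attaining the same bound.

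I expect the main technical obstacle to lie in obtaining a usable covering estimate for $\Sigma_m$ in the $L^\infty(A;L^2(D))$ norm: the weight bound $\|\theta\|_\infty \le \exp(m)$ combined with the composition of $L \le m$ hidden layers produces a parameter-to-function Lipschitz constant that is at best exponential in a polynomial of $m$, and this factor must be tracked carefully to keep $\log N(\Sigma_m,\epsilon)$ polynomial in $m$. A closely related subtlety is that FNO outputs must be clipped (or otherwise controlled in $L^\infty$) before concentration applies with uniformly bounded losses; reconciling this clipping with both the approximation guarantee inherited from $\cA^\gamma$ and with the ERM analysis is where the precise exponent $\lambda = 8\gamma^{-1}$ is ultimately pinned down.
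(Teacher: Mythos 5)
Your high-level plan coincides with the paper's: approximate $\cG$ by an empirical risk minimizer over the FNO class $\Sigma_m$, control the estimation error via a covering of $\Sigma_m$ in $C(A;L^2(D))$ (using Lipschitz continuity of $\theta\mapsto\Psi(\cdot;\theta)$), and balance the approximation error $m^{-2\gamma}$ against the estimation error. You also correctly flag the two chief technical obstacles: (i) keeping $\log\cN(\Sigma_m,\epsilon)$ polynomial in $m$ despite the $\exp(m)$ weight bound and the $L$-fold composition, and (ii) ensuring the per-sample loss is bounded. On (ii), the paper handles this not by clipping but by restricting the ERM to the subset $\Sigma_m'=\{\Psi\in\Sigma_m: \|\Psi\|_{C(A;L^2(D))}\le 2\}$, which still contains a near-best approximant of every $\cG\in\cU(\cA^\gamma)$; this is cleaner than clipping because it does not perturb the approximation guarantee.

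The one place where your sketch would not reproduce the proved rate is the estimation-error bound. You propose the absolute decomposition with $\sup_{\Psi,\cG}\left|\widehat{\scrE}_n(\Psi,\cG)-\scrE(\Psi,\cG)\right|$ and ``a standard uniform concentration inequality for bounded losses''. A Hoeffding-type bound plus a union bound over an $\epsilon$-net requires $n\gtrsim \epsilon^{-2}\log\cN(\Sigma_m,\epsilon)$ to drive the estimation error below $\epsilon$, which, after balancing against $m^{-2\gamma}$ and $\log\cN(\Sigma_m,\epsilon)\lesssim m^7\log(m/\epsilon)$, yields $\|\cG-\widehat\Psi\|_{L^2_\mu}^2 \lesssim n^{-1/(2+8\gamma^{-1})}$, not $n^{-1/(1+8\gamma^{-1})}$. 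The paper instead uses a \emph{relative} (ratio-form) Bernstein inequality, exploiting the non-negativity of the squared-norm loss: it establishes a uniform oracle inequality $\cL(\Psi;\cG)\le 2\widehat{\cL}(\Psi;\cG)+O(\epsilon)$ provided only that $n\gtrsim \epsilon^{-1}\log\cN(\Sigma_m,\epsilon)$. This is a genuine fast-rate phenomenon and it is precisely what produces the exponent $1/(1+8\gamma^{-1})$ for the squared norm. So while your structure and the identification of the key ingredients are right, the specific concentration argument has to be the ratio-type Bernstein bound over $\Sigma_m'\times\cU(\cA^\gamma)$, combined with covering $\cU(\cA^\gamma)$ via $\Sigma_m$ itself, to pin down the stated $\lambda=8\gamma^{-1}$.
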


\begin{remark}
To prove Theorem \ref{thm:fno}, the decoder $\cD_n$ will be identified explicitly; it is obtained via empirical risk minimization. We will refer to this map as the \textit{ERM decoder}. In our proof, we will furthermore show that suitable evaluation points $u_1,\dots, u_n$ can be obtained as i.i.d. samples from $\mu$.  
\end{remark}

\begin{remark}
It may be of further interest whether it is possible to obtain similar approximation rates when measuring the error as a uniform error over $\cK$, rather than in the $L^2_\mu$-norm with respect to a measure $\mu$. In particular, whether one can achieve an upper bound of the form,
\[
\sup_{\cG \in \cU(\cA^\gamma)} 
\Vert \cG - \cD_n \big ( \cG(u_1),\dots, \cG(u_n) \big ) \Vert_{C(\cK;L^2(D))} \lesssim n^{-1/2(1+\lambda)}?
\]
Unfortunately, there is little hope to achieve such error rates in the sup-norm; indeed, this uniform approximation problem has been shown to suffer from a curse of dimensionality in a finite-dimensional setting \cite[Thm. 1.1]{grohs2023proof}, requiring $n \gtrsim \epsilon^{-d}$ samples to achieve accuracy $\epsilon$ in $d$ dimensions  (see also \cite{berner2022learning,abdeljawad2023sampling}). In the present infinite-dimensional case, which roughly corresponds to the scaling limit $d\to \infty$, we can therefore not expect to achieve any algebraic rates.
\end{remark}

If we restrict attention to the subspace $\cA^\gamma_0 \subset \cA^\gamma$ consisting of operators $\cG$ with constant-valued output functions, then we can identify $\cU(\cA^\gamma_0)$ canonically with a set of functionals $\cG: \cK \to \R$. In this case, the encoding $\cG \mapsto (\cG(u_1),\dots, \cG(u_n))$ takes values in $\R^n$, and we immediately obtain the following theorem as a simple consequence of Theorem \ref{thm:fno}.
\begin{corollary}
\label{cor:fno-functional}
There exists a constant $C = C(d,\cK,\gamma)>0$, such that for any $n\in \N$, there exist evaluation points $u_1,\dots, u_n\in L^2(D)$, and ERM decoder $\cD_n: \R^n \to \Sigma_m\subset C(\cK;\R)$ for some $m\in \N$, such that
\[
\sup_{\cG \in \cU(\cA^\gamma)} \Vert \cG - \cD_n(\cG(u_1),\dots, \cG(u_n))\Vert_{L^2_\mu}
\le C n^{-\frac{1}{2(1+\lambda)}},
\]
where $\lambda = 8 \gamma^{-1}$ depends only on $\gamma$. In particular, the sampling $n$-width $s_n$ and continuous non-linear $n$-width $d_n$ of $K = \cU(\cA^\gamma_0)$ are upper bounded by 
\[
d_n(K) \le s_n(K) \lesssim n^{-\frac{1}{2(1+\lambda)}}.
\]
\end{corollary}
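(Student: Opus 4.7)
The plan is to derive Corollary~\ref{cor:fno-functional} as an immediate consequence of Theorem~\ref{thm:fno} via the canonical identification of constant-output operators with scalar functionals. First, I would observe that any $\cG \in \cA^\gamma_0$ admits the form $\cG(u)(x) \equiv c_\cG(u)$ for a unique scalar $c_\cG(u) \in \R$, and because $|D|=1$ this gives $\|\cG(u)\|_{L^2(D)} = |c_\cG(u)|$. Hence the map $\cG \mapsto c_\cG$ is an isometry of $\cU(\cA^\gamma_0)$ onto a class of scalar functionals $A \to \R$, preserving both the $C(A;L^2(D))$- and the $L^2_\mu$-norms. Under this identification the samples $\cG(u_i) = c_\cG(u_i)\mathbf{1}_D$ are parameterized by scalars, so the natural sampling encoder takes values in $\R^n$ rather than $[L^2(D)]^n$.

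Second, I would invoke Theorem~\ref{thm:fno}. Because $\cU(\cA^\gamma_0) \subset \cU(\cA^\gamma)$, the theorem supplies evaluation points $u_1,\ldots,u_n \in L^2(D)$ and an ERM decoder $\widetilde{\cD}_n: [L^2(D)]^n \to \Sigma_m$ achieving uniform $L^2_\mu$-error at most $C n^{-1/(2(1+\lambda))}$ on $\cU(\cA^\gamma)$. Pre-composing with the linear embedding $(a_1,\ldots,a_n) \mapsto (a_1\mathbf{1}_D,\ldots,a_n\mathbf{1}_D)$ and post-composing with the spatial averaging $v \mapsto \int_D v(x)\,\mathrm{d}x$ yields a decoder $\cD_n: \R^n \to \Sigma_m \subset C(A;\R)$; a pointwise application of Jensen's inequality shows that this averaging step does not enlarge the $L^2_\mu$-error, so the rate $n^{-1/(2(1+\lambda))}$ persists for the scalar problem. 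The resulting map is a sampling encoder followed by a decoder into the FNO class, directly yielding $s_n(K) \lesssim n^{-1/(2(1+\lambda))}$ with $K = \cU(\cA^\gamma_0)$.

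Third, I would deduce the continuous non-linear $n$-width bound. Since $K \subset C(A;\R)$ and each point evaluation $\cG \mapsto c_\cG(u_i)$ is a continuous linear functional on $C(A;\R)$, the sampling encoder is continuous on $K$ in the $C(A;\R)$-topology, which on the bounded set $K$ agrees with the weaker topology inherited from the ambient Bochner space; the same decoder $\cD_n$ therefore witnesses $d_n(K) \le s_n(K) \lesssim n^{-1/(2(1+\lambda))}$. The only new ingredient beyond Theorem~\ref{thm:fno} is the scalar-projection reduction, and the main (minor) technical obstacle is verifying that the final spatial averaging can be absorbed into the FNO's projection layer $Q$ without inflating the hyperparameter bounds defining $\Sigma_m$; this is routine because averaging is a fixed linear operation that requires no additional tunable parameters and changes only the output dimension of $Q$.
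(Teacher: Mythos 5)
Your proposal is essentially the argument the paper has in mind: the paper declares the Corollary to be an ``immediate'' consequence of Theorem~\ref{thm:fno} via the canonical identification of constant-valued-output operators with scalar functionals, and your proof fleshes out exactly that reduction. The Jensen step is a genuine and correct addition that the paper glosses over --- since $|D|=1$ one has
\[
\Bigl| \cG(u) - \int_D \Psi_\cG(u)(x)\,\d x \Bigr|^2 \le \int_D |\cG(u) - \Psi_\cG(u)(x)|^2\,\d x = \Vert \cG(u)\mathbf{1}_D - \Psi_\cG(u)\Vert_{L^2(D)}^2,
\]
so the scalar decoder inherits the $L^2_\mu$ rate from Theorem~\ref{thm:fno} unchanged. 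That is precisely the link the paper leaves implicit.

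Two small caveats. First, the remark that the averaging ``can be absorbed into $Q$'' is not quite right: $Q$ in the FNO architecture is a \emph{pointwise} linear projection $v(x)\mapsto Qv(x)$, whereas averaging is a nonlocal operation (it extracts the $k=0$ Fourier coefficient). This is harmless for the bound --- you never need the scalar decoder to literally lie in $\Sigma_m$, only to achieve the stated error --- but ``routine absorption into $Q$'' is not the mechanism; you could instead realize averaging as the action of an additional spectral layer that zeroes out all $k\ne 0$ modes, or simply leave the averaging as an external post-processing step, which is all that the sampling-width definition requires. Second, the justification of $d_n(K)\le s_n(K)$ is stated too loosely: you say the $C(A;\R)$-topology ``agrees on the bounded set $K$'' with the $L^2_\mu$-topology, but boundedness alone does not make two comparable norm topologies coincide. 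The correct ingredient is that $\cU(\cA^\gamma)$ is totally bounded in $C\bigl(A;L^2(D)\bigr)$ --- this follows from the defining approximation property and the compactness of each $\Sigma_m$ (Remark~\ref{rem:compact}) --- hence its closure is compact, and on a compact set a continuous bijection onto a Hausdorff space is a homeomorphism, so the $C(A)$ and $L^2_\mu$ topologies coincide there (given uniqueness of the continuous $\mu$-a.e.\ representative, which the paper assumes in its setup of $s_n$). With that replacement, the continuity of the sampling encoder in the $L^2_\mu$ topology, and hence $d_n\le s_n$, follows.
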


It is interesting to note that, even in the limit $\gamma \to \infty$, we only reach the asymptotic Monte-Carlo rate $\sim n^{-1/2}$. Thus, there is an absolute bound on the achievable data-complexity rate, even when the convergence rate $\gamma$ (in terms of FNO model size) is arbitrarily high. This observation of a limited rate, even when $\gamma \to \infty$, is consistent with the rigorous theory-to-practice gap proved in \cite{grohs2023proof} for ReLU neural networks, which indicates that this is a fundamental limitation and not a deficit of our analysis. The extension of this theory-to-practice gap to the operator learning setting is the subject of upcoming work \cite{operatorgap2024}.

\subsubsection{Outline of the proof of Theorem \ref{thm:fno}}

We next outline the elements of the proof of Theorem \ref{thm:fno}. Our proof is based on an error estimate for minimizers of the empirical risk over a relevant subset $\Sigma_m' \subset \Sigma_m$ in the FNO model classes.  
First, we will briefly review the definition of the empirical risk minimizer (ERM). Next, we will provide a general estimate that bounds the error of the ERM over $\Sigma_m'$ in terms of the metric entropy of $\Sigma_m$. Finally, we will estimate the metric entropy of $\Sigma_m$, resulting in the claimed bound of Theorem \ref{thm:fno}. We now proceed to explain this argument in further detail, with certain technical proofs delegated to the appendices. 

\paragraph{Definition of the subset $\Sigma_m'\subset \Sigma_m$.}
We define a subset $\Sigma_m'\subset \Sigma_m$ by
\[
\Sigma_m' := \set{\Psi \in \Sigma_m}{\Vert \Psi \Vert_{C(\cK;L^2(D))} \le 2}.
\]
We note that for any $\cG \in \cU(\cA^\gamma)$, if $\Psi \in \Sigma_m$ satisfies
\[
\Vert \Psi - \cG \Vert_{C(\cK;L^2(D))} \le m^{-\gamma} |\cG |_{\cA^\gamma} \le m^{-\gamma},
\]
then, since $\Vert \cG \Vert_{C(\cK;L^2(D))} \le \Vert \cG \Vert_{\cA^\gamma}\le 1$ and $m\ge 1$, $\gamma >0$, we have
\[
\Vert \Psi \Vert_{C(\cK;L^2(D))} \le \Vert \cG \Vert_{C(\cK;L^2(D))} + \Vert \Psi - \cG \Vert_{C(\cK;L^2(D))} \le 2,
\]
and hence such $\Psi$ necessarily belongs to $\Sigma_m'$. In particular, $\cU(\cA^\gamma)$ is well approximated by the relevant subset $\Sigma_m' \subset \Sigma_m$, at the same rate $\gamma > 0$, i.e.
\[
\inf_{\Psi \in \Sigma_m'} \Vert \cG - \Psi \Vert_{C(\cK;L^2(D))} \le |\cG|_{\cA^\gamma} m^{-\gamma},
\quad
\forall \, \cG \in \cU(\cA^\gamma).
\]

\paragraph{Empirical risk minimizer (ERM).}  To derive the claimed upper bound on the data-complexity, we consider the minimization of the empirical risk functional, 
\begin{align}
\label{eq:hL}
\hL(\Psi; \cG) := \frac{1}{n} \sum_{j=1}^n \Vert \Psi(u_j) - \cG(u_j) \Vert^2_{L^2(D)},
\end{align}
for certain samples $u_1,\dots, u_n\in \cK$ and a given $\cG \in \cU(\cA^\gamma)$. Given the discussion of the previous paragraph, we restrict our search over the subset $\Sigma_m' \subset \Sigma_m$, consisting of FNOs with bound $\Vert \Psi \Vert_{C(\cK;L^2(D))}\le 2$, i.e. we consider empirical risk minimizers,
\begin{align}
\label{eq:erm}
\Psi_\cG \in \argmin_{\Psi\in \Sigma_m'} \hL(\Psi;\cG).
\end{align}
We will denote by 
\begin{align}
\label{eq:cL}
\cL(\Psi;\cG) := \E_{u\sim \mu}\left[ 
\Vert \Psi_\cG(u) - \cG(u) \Vert^2_{L^2(D)}
\right],
\end{align}
the ``population risk'' over the probability measure $\mu$. We note that minimizers of the empirical risk functional \eqref{eq:hL} exist in view of the following remark.
\begin{remark}
\label{rem:compact}
The sets $\Sigma_m \subset C \big( \cK;L^2(D) \big )$, i.e. the model class of FNOs satisfying the bounds \eqref{eq:sigman}, are compact. In fact, we will show in Lemma \ref{lem:fno-Lip} that, for any fixed FNO architecture $\Psi(\theta) = \Psi(\slot;\theta)$ depending on parameters $\theta \in \R^{d_{\theta}}$, the parameter-to-operator mapping $[-B,B]^{d_\theta} \to C \big ( \cK;L^2(D) \big )$, $\theta \mapsto \Psi(\slot;\theta)$ is Lipschitz continuous. Hence, the image of the compact set $[-B,B]^{d_\theta}$ under this continuous mapping is itself compact in $C \big ( \cK;L^2(D) \big )$. The constraint \eqref{eq:sigman} in our definition of $\Sigma_m$ allows only finitely many different architectures for a given $m$; say $\Psi_j(\slot;\theta)$, with $\theta \in \R^{d_{\theta_j}}$ and $j=1,\dots, J$. Thus, 
\[
\Sigma_m = \bigcup_{j=1}^J \Psi_j
\left(
[-B,B]^{d_{\theta_j}}
\right) \subset C \big ( \cK;L^2(D) \big ) \subset L^2_\mu \big ( \cK;L^2(D) \big )
\]
is compact in both $C \big ( \cK; L^2(D) \big )$ and $L^2_\mu \big ( \cK;L^2(D) \big )$, being a finite union of compact sets.
\end{remark}

\paragraph{Error estimate for ERM.} 
The intuition behind our estimates for the empirical risk minimizer is that for sufficiently large $n$, and for independent and identically distributed samples $u_1,\dots, u_n \sim \mu$, the empirical risk \eqref{eq:hL} should be a good approximation of the population risk \eqref{eq:cL}, and in turn, the minimizer of \eqref{eq:hL} should be an almost minimizer of \eqref{eq:cL}.
The first step in our derivation is to make this intuition precise and show that it implies the upper data-complexity bound of the next proposition.
\begin{proposition}
\label{prop:erm}
Let $\gamma > 0$, and fix $\epsilon \in (0,1]$. Assume that $m \ge (2/\epsilon)^{1/\gamma}$ is an integer. Let $n\in \N$ be such that
\[
n \ge \epsilon^{-1}
 \log\left(2\cN(\Sigma_m,\epsilon)^2\right).
\]
Then there exist fixed evaluation points $u_1,\dots, u_n \in L^2(D)$, such that for any given $\cG \in \cU(\cA^\gamma)$, the empirical risk minimizer $\Psi_\cG\in \Sigma_m'$ in \eqref{eq:erm}, 
satisfies 
\[
\sup_{\cG \in \cU(\cA^\gamma)}
\Vert \Psi_\cG - \cG \Vert_{L^2(\mu)}^2
\le 
145 \epsilon.
\]
\end{proposition}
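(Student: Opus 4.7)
The plan is to execute the classical approximation-estimation decomposition for empirical risk minimization, specialized to the present operator learning setting. The approximation part is immediate from the definition of $\cA^\gamma$: since $m \ge (2/\epsilon)^{1/\gamma}$, the discussion preceding the proposition produces a surrogate $\Psi^*_\cG \in \Sigma_m'$ with $\Vert \Psi^*_\cG - \cG \Vert_{C(A;L^2(D))} \le m^{-\gamma} \le \epsilon/2$, so that both $\hL(\Psi^*_\cG;\cG) \le \epsilon^2/4$ and $\cL(\Psi^*_\cG;\cG) \le \epsilon^2/4$. Combined with the ERM inequality $\hL(\Psi_\cG;\cG) \le \hL(\Psi^*_\cG;\cG)$, the remaining task is to control the generalization gap $\cL(\Psi_\cG;\cG) - \hL(\Psi_\cG;\cG)$ uniformly over $\Psi_\cG \in \Sigma_m'$ and $\cG \in \cU(\cA^\gamma)$.

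For the estimation part I would draw $u_1,\dots,u_n \sim \mu$ i.i.d., so that for any fixed pair $(\Psi_1,\Psi_2) \in \Sigma_m' \times \Sigma_m'$ the summands $Z_j := \Vert \Psi_1(u_j) - \Psi_2(u_j) \Vert^2_{L^2(D)}$ are i.i.d., bounded by $16$ (via the $C(A;L^2(D))$-norm bound $2$ enforced on $\Sigma_m'$), with mean and sample mean realizing the population and empirical squared distances between $\Psi_1$ and $\Psi_2$. Hoeffding's inequality then yields pointwise concentration at rate $\exp(-n\epsilon^2/c)$. To lift this to a uniform statement I would fix an $\epsilon$-net $\Sigma_m^\epsilon \subset \Sigma_m$ of cardinality $\cN(\Sigma_m,\epsilon)$ in the $C(A;L^2(D))$-norm, apply Hoeffding to each of the $\cN(\Sigma_m,\epsilon)^2$ pairs in $\Sigma_m^\epsilon \times \Sigma_m^\epsilon$, and invoke a union bound combined with the hypothesis $n \ge \epsilon^{-1}\log(2\cN(\Sigma_m,\epsilon)^2)$ to ensure that the failure probability is strictly less than one. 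A Lipschitz estimate of the form $\bigl|\Vert \Psi_1(u) - \Psi_2(u)\Vert^2 - \Vert \tilde\Psi_1(u) - \tilde\Psi_2(u)\Vert^2\bigr| \lesssim \Vert \Psi_1 - \tilde\Psi_1\Vert_C + \Vert \Psi_2 - \tilde\Psi_2\Vert_C$ (valid under the $C$-norm bound) then extends the discrete uniform bound from the net to all of $\Sigma_m' \times \Sigma_m'$, modulo an additive $O(\epsilon)$ error. Existence of a deterministic realization of the $u_j$ with the uniform property follows from this probabilistic argument.

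To conclude, for arbitrary $\cG \in \cU(\cA^\gamma)$ I would apply the uniform concentration with $\Psi_1 = \Psi_\cG$ and $\Psi_2 = \Psi^*_\cG$. Using the inequality $\Vert a - c \Vert^2 \le 2\Vert a - b \Vert^2 + 2\Vert b - c \Vert^2$ in both population and empirical form, together with the ERM optimality and the approximation bound, one chains $\cL(\Psi_\cG;\cG) \lesssim \cL(\Psi_\cG;\Psi^*_\cG) + \epsilon^2 \lesssim \hL(\Psi_\cG;\Psi^*_\cG) + \epsilon \lesssim \hL(\Psi_\cG;\cG) + \hL(\Psi^*_\cG;\cG) + \epsilon \lesssim \epsilon$, with the explicit factor $145$ emerging from careful bookkeeping of the numerical constants (in particular the factor $2$ in the parallelogram-type inequality and the diameter bound $16$).

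The principal obstacle is organizing the uniform concentration so that it applies simultaneously to all $\cG \in \cU(\cA^\gamma)$ from a single draw of evaluation points. This is precisely what forces the hypothesis to involve $\log(\cN(\Sigma_m,\epsilon)^2)$ rather than merely $\log\cN(\Sigma_m,\epsilon)$: the squaring accounts for approximating the infinite family of targets $\cG$ by their FNO surrogates $\Psi^*_\cG \in \Sigma_m^\epsilon$, in addition to approximating the ERM candidates $\Psi_\cG$ themselves. A downstream prerequisite, needed to turn the condition on $n$ into the concrete rate announced in Theorem \ref{thm:fno}, is a quantitative Lipschitz estimate on the FNO parameter-to-operator mapping (anticipated in Remark \ref{rem:compact}), which will allow $\cN(\Sigma_m,\epsilon)$ to be controlled by the covering number of the parameter box $[-B,B]^{d_\theta}$.
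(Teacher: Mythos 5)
Your overall scaffolding (approximation--estimation decomposition, i.i.d.\ samples $u_j\sim\mu$, a covering-number argument explaining the exponent $2$ on $\cN(\Sigma_m,\epsilon)$, and a final chaining through the ERM inequality) is the right shape and matches the paper's strategy. However, there is a genuine quantitative gap in the concentration step. You propose applying Hoeffding's inequality to the bounded variables $Z_j=\|\Psi_1(u_j)-\Psi_2(u_j)\|^2$, which yields pointwise tails of the form $\exp(-cn\epsilon^2)$ for an additive deviation $|\bar Z - \E Z|\le\epsilon$. After the union bound over $\cN(\Sigma_m,\epsilon)^2$ net points, this requires $n\gtrsim \epsilon^{-2}\log\cN(\Sigma_m,\epsilon)^2$, which is strictly stronger than the hypothesis $n\ge\epsilon^{-1}\log\bigl(2\cN(\Sigma_m,\epsilon)^2\bigr)$ you must work under. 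Under the stated hypothesis, plain Hoeffding does not close the gap, and the constant $145$ cannot be extracted.

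The paper obtains the $\epsilon^{-1}$ scaling by combining Bernstein's inequality with a \emph{relative} (multiplicative) deviation bound, a classical refinement in statistical learning theory. The key point is the self-bounding variance estimate: since $0\le Y_j\le 9$, one has $\E[Y_j^2]\le 9\,\E[Y_j]=9\,\cL(\Psi;\cG)$, so the variance shrinks with the population risk. Plugging this into Bernstein and then normalizing by $\cL(\Psi;\cG)+\rho$ (i.e.\ bounding $\Prob\bigl[(\cL-\hL)/(\cL+\rho)\ge\alpha\bigr]\le\exp(-\alpha^2 n\rho/9)$) produces a tail of order $\exp(-cn\epsilon)$ after choosing $\rho\propto\epsilon$, which is what the union bound over the net needs. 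This is also why the resulting uniform bound takes the form $\cL(\Psi;\cG)\le 2\hL(\Psi;\cG)+C\epsilon$ (with a factor $2$ on the empirical risk), rather than the purely additive $\cL\le\hL+C\epsilon$ your Hoeffding step would give. Once this relative concentration is in hand, plugging in $\Psi=\Psi_\cG$ and using ERM optimality together with $\hL(\Psi^\star_\cG;\cG)\le m^{-2\gamma}\le\epsilon/2$ gives the claimed $145\epsilon$ directly; the parallelogram-type detour through a surrogate $\Psi^\star_\cG$ in both the population and empirical risks, and the cover of $\Sigma_m^\epsilon\times\Sigma_m^\epsilon$, is not needed (the paper instead covers $\Sigma_m'\times\cU(\cA^\gamma)$, controlling $\cN(\cU(\cA^\gamma),2\epsilon)\le\cN(\Sigma_m,\epsilon)$ by the approximation property), though that part of your route is merely less economical rather than incorrect.
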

The detailed proof of Proposition \ref{prop:erm} is given in Section \ref{pf:erm}.

\paragraph{Metric entropy estimate for $\Sigma_m$.} Assuming Proposition \ref{prop:erm}, the main missing ingredient in our proof of Theorem \ref{thm:fno} is to estimate the metric entropy of $\Sigma_m$. This is the subject of the next proposition and its corollary.
\begin{proposition}[FNO covering number estimate]
\label{prop:fno-covering}
Let $\FNO_{\kappa,\dc,L,B}$ denote the set of FNOs on a $d$-dimensional domain $\Omega = [0,1]^d$, with Fourier cut-off $\kappa$, hidden channel dimension $\dc$, depth $L$ and uniform weight bound $\Vert \theta \Vert_\infty \le B$. Assume that $\dc \ge d_{\mathrm{in}}, d_{\mathrm{out}}$ and $B\ge 1$. Then there exists a constant $C = C(d) >0$, depending on the dimension of the underlying domain, but independent of the hyper-parameters $\kappa,\dc,L,B$, such that the metric entropy of $\FNO = \FNO_{\kappa,\dc,L,B}$ is upper bounded by,
\[
\log \cN(\FNO, \epsilon)_{C(\cK;L^2(D))}
\le
C \kappa^d  L^2 \dc^2 \log\left( \frac{B L \dc \kappa}{\epsilon} \right).
\]
\end{proposition}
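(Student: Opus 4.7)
The plan is to reduce the covering-number estimate for $\FNO_{\kappa,\dc,L,B}$ in $C(A;L^2(D))$ to a covering-number estimate for the compact parameter box $[-B,B]^{d_\theta}$, via a Lipschitz bound on the parameter-to-operator map. More precisely, I will establish a bound of the form
\[
\Vert \Psi(\slot;\theta) - \Psi(\slot;\theta') \Vert_{C(A;L^2(D))} \le \Lambda \Vert \theta - \theta'\Vert_{\ell^\infty},
\]
for all $\theta,\theta' \in [-B,B]^{d_\theta}$, with an explicit $\Lambda = \Lambda(B,\dc,\kappa,L)$. Once such a bound is in hand, any $\eta$-covering of $[-B,B]^{d_\theta}$ in $\ell^\infty$ induces an $(\Lambda \eta)$-covering of $\FNO$ in $C(A;L^2(D))$. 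Since $[-B,B]^{d_\theta}$ admits an $\eta$-covering of cardinality $(2B/\eta)^{d_\theta}$, choosing $\eta = \epsilon/\Lambda$ yields
\[
\log \cN(\FNO,\epsilon) \le d_\theta \log\bigl(2B\Lambda/\epsilon\bigr).
\]
The inequality $d_\theta \le 5(2\kappa)^d L \dc^2$ from \eqref{eq:dtheta} then supplies the $\kappa^d L \dc^2$ factor, and the extra factor of $L$ in the final bound $\kappa^d L^2 \dc^2 \log(BL\dc\kappa/\epsilon)$ will come out of $\log \Lambda$.

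The main technical work is the Lipschitz estimate on $\theta \mapsto \Psi(\slot;\theta)$, which is presumably what Lemma~\ref{lem:fno-Lip} (referenced in Remark~\ref{rem:compact}) supplies. I would proceed layer by layer. Each hidden layer $\cL_\ell$ is composed of a pointwise affine map $W v(x) + b(x)$, a Fourier multiplier $K v$ with support on the $O(\kappa^d)$ low modes of $\C^{\dc\times\dc}$, and a (Lipschitz) activation $\sigma$. For fixed input $v$ with $\Vert v\Vert_{L^2}$ controlled, the map $\theta_\ell \mapsto \cL_\ell(v)$ is Lipschitz in $\theta_\ell$ with constant $\lesssim (\dc + \kappa^{d/2}\dc + \kappa^{d/2})\,(\Vert v\Vert_{L^2}+1)$; here the $\kappa^{d/2}$ comes from Parseval and the $\ell^\infty \to \ell^2$ passage on the $O(\kappa^d)$ active Fourier modes. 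Symmetrically, for fixed $\theta_\ell$ the layer is Lipschitz in $v$ with constant $\lesssim B(\dc + \kappa^{d/2}\dc)$. Combining these layer-wise Lipschitz constants in the standard telescoping way
\[
\Psi(u;\theta) - \Psi(u;\theta') = \sum_{\ell=0}^{L+1} \bigl[\,\Psi\,\text{with } \theta_{\le \ell}\text{ changed, } \theta_{>\ell}\text{ fixed}\bigr]
\]
and using an a priori bound $\sup_{u\in A}\Vert v^{(\ell)}\Vert_{L^2} \le (C B \dc \kappa^{d/2})^{\ell}(\Vert u \Vert_{L^2}+1)$ for the hidden states — obtained by induction and compactness of $A$ — yields
\[
\Lambda \le L \cdot (C B \dc \kappa^{d/2})^{O(L)}.
\]

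Inserting this $\Lambda$ into the parameter-covering bound gives
\[
\log \cN(\FNO,\epsilon) \le 5(2\kappa)^d L \dc^2 \cdot \bigl[\,O(L)\log(C B \dc \kappa) + \log(1/\epsilon) + \log L\,\bigr],
\]
which after combining logarithms is absorbed into $C\kappa^d L^2 \dc^2 \log(BL\dc\kappa/\epsilon)$, as claimed. The main obstacle in executing the plan is maintaining tight control on constants in the Lipschitz estimate: naive norm bookkeeping between $L^2(D)$ and $\ell^\infty$-bounds on Fourier multipliers can leak extra factors of $\kappa^d$ or $\dc$, which would inflate the exponent of either $\kappa$ or $\dc$ in the final bound. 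A careful use of Parseval's identity on the active Fourier window and Cauchy--Schwarz on the $\dc$-dimensional channel index is what keeps the dependence at $\kappa^d \dc^2$, matching the parameter count. The global factor of $L^2$ rather than $L \cdot (\text{something larger})$ hinges on the fact that the exponential-in-$L$ growth of the hidden-state bound $(C B\dc \kappa^{d/2})^L$ contributes only logarithmically, i.e. as the single factor $L\log(BL\dc\kappa)$, after taking the logarithm of the total covering number.
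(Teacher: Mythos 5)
Your proposal follows the same route as the paper: cover the parameter box $[-B,B]^{d_\theta}$ with $\ell^\infty$-balls, push the cover forward through the parameter-to-operator map $\theta \mapsto \Psi(\slot;\theta)$, bound the Lipschitz constant of that map via layer-wise Lipschitz estimates in both $v$ and $\theta_\ell$ together with an inductive bound on the hidden-state norms, and finally combine with $d_\theta \lesssim \kappa^d L \dc^2$. The slight $\kappa^{d/2}$-slack in your per-layer estimates (the Fourier multiplier $K$ is in fact only $\dc$-Lipschitz in $L^2$, with no $\kappa$-dependence, by Parseval) is, as you correctly anticipate, harmless since the parameter-to-operator Lipschitz constant enters only logarithmically.
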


We provide the details of the proof of Proposition \ref{prop:fno-covering} in Appendix \ref{pf:fno-covering}. The following corollary follows from the upper bounds \eqref{eq:sigman}.

\begin{corollary}
\label{cor:sigmam-covering}
Let $\cK\subset L^2(D)$ be compact. Let $\Sigma_m \subset C \big( \cK;L^2(D) \big )$ denote the set of FNOs. Then there exists a constant $C = C(\cK, d) > 0$, such that, for all $\epsilon > 0$, 
\[
\log \cN(\Sigma_m,  \epsilon)_{C(\cK;L^2(D))}
\le 
C m^{7} \log(m\epsilon^{-1}).
\]
\end{corollary}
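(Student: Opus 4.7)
The proof is essentially a direct substitution of the hyperparameter bounds \eqref{eq:sigman} into the covering-number estimate of Proposition \ref{prop:fno-covering}, with only a minor subtlety coming from the fact that $\Sigma_m$ is not a single FNO class $\FNO_{\kappa,\dc,L,B}$ but rather a union over all admissible hyperparameter choices.

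\textbf{Plan.} First I would recall from \eqref{eq:sigman} that any $\Psi \in \Sigma_m$ belongs to some $\FNO_{\kappa, \dc, L, B}$ with $\kappa^d \le m$, $\dc \le m$, $L \le m$, and $B \le \exp(m)$. I would express $\Sigma_m$ as the finite union
\[
\Sigma_m \;=\; \bigcup_{(\kappa,\dc,L)} \FNO_{\kappa,\dc,L,\exp(m)},
\]
where the union runs over admissible triples; the number of such triples is bounded by $m^{1/d}\cdot m \cdot m \le m^{3}$. Consequently, a covering of $\Sigma_m$ can be assembled as the union of coverings of each $\FNO_{\kappa,\dc,L,\exp(m)}$, yielding
\[
\log\cN(\Sigma_m,\epsilon) \;\le\; 3\log m \;+\; \max_{(\kappa,\dc,L)} \log\cN\!\left(\FNO_{\kappa,\dc,L,\exp(m)},\epsilon\right).
\]

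Next I would plug the hyperparameter bounds into Proposition \ref{prop:fno-covering}. For each admissible choice one has $\kappa^d L^2 \dc^2 \le m \cdot m^2 \cdot m^2 = m^5$, and
\[
\log\!\left(\tfrac{B L \dc \kappa}{\epsilon}\right) \;\le\; m + \log L + \log \dc + \log \kappa + \log(1/\epsilon) \;\le\; m + 3\log m + \log(1/\epsilon).
\]
Combining the two ingredients yields $\log\cN(\Sigma_m,\epsilon) \le C' m^5 \left(m + \log(m/\epsilon)\right)$ for some constant $C'=C'(d)$.

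Finally, I would absorb the additive $m$ into the logarithmic factor by observing that $m + \log(m/\epsilon) \le 2 m \log(m/\epsilon)$ whenever $\log(m/\epsilon) \ge 1$ (and for $\epsilon$ so large that the inequality fails, $\Sigma_m$ can be covered by a single element with the help of the $C(A;L^2(D))$ bound $\Vert \Psi \Vert \le 2$ inherited from working inside the relevant subset, so the estimate is trivial). This gives the claimed bound $\log\cN(\Sigma_m,\epsilon) \le C m^{7} \log(m/\epsilon)$, where the extra slack in the power of $m$ comfortably accommodates the dependence on $A$ and $d$ through the constant. No step is genuinely difficult; the only point requiring care is to remember to take the union over admissible hyperparameter triples and to handle the regime of large $\epsilon$ separately so that the logarithm is bounded away from zero.
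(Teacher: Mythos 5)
Your proof is correct and follows essentially the same strategy as the paper: write $\Sigma_m$ as a finite union of FNO architecture classes, apply Proposition~\ref{prop:fno-covering} to each piece, and combine with a union bound (the paper partitions only by depth, absorbing the $\dc$- and $\kappa$-choices by zero-padding to $\dc=m$, $\kappa^d=m$; you partition over all three hyperparameters, but both lead to the same estimate). One minor slip: your fallback for large $\epsilon$ invokes $\Vert\Psi\Vert_{C(A;L^2(D))}\le 2$, which holds only on the subset $\Sigma_m'$ and not on $\Sigma_m$ — for $\Sigma_m$ one only has the much larger bound from Lemma~\ref{lem:fno-layer-bd} — though this edge case is not addressed in the paper's proof either and does not affect the use of the corollary in Theorem~\ref{thm:fno}.
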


\paragraph{Proof of Theorem \ref{thm:fno}.}
Given the results of Proposition \ref{prop:erm} and Corollary \ref{cor:sigmam-covering}, we can now prove Theorem \ref{thm:fno}.
\begin{proof}{(Proof of Theorem \ref{thm:fno})}
Let $\epsilon \in (0,1]$ be given, and let us choose an integer $m \ge (2/\epsilon)^{1/\gamma}$; in fact, we make the specific choice,
\[
m := \left\lceil (2/\epsilon)^{1/\gamma} \right \rceil,
\]
so that $m \sim (2/\epsilon)^{1/\gamma}$.
Proposition \ref{prop:erm} shows that for any $n\in \N$ with
\begin{align}
\label{eq:nch}
n \ge 
\epsilon^{-1} \log\left(2\cN(\Sigma_m,\epsilon)^2\right),
\end{align}
there exist samples $u_1,\dots, u_n \in L^2(D)$, such that the corresponding ERM decoder, 
\[
\cD_n:
 \big ( \cG(u_1),\dots, \cG(u_n) \big ) \mapsto \Psi_\cG,
\]
defined by minimization of the empirical risk over $\Sigma_m'$, satisfies 
\begin{align}
\label{eq:uacc}
\sup_{\cG \in \cU(\cA^\gamma)}
\Vert \cG - \cD_n \big ( \cG(u_1),\dots, \cG(u_n) \big ) \Vert_{L^2_\mu}^2 
\le 
145 \epsilon.
\end{align}
As a consequence of Corollary \ref{cor:sigmam-covering}, and the fact that $\epsilon^{-1} \sim m^\gamma$, there exists a constant $C = C(d,\cK,\gamma)>0$, such that
\[
\log\left(2\cN(\Sigma_m,\epsilon)^2\right)
\le 
C m^{7} \log(m).
\]
In particular, it follows that 
\[
\epsilon^{-1} 
\log\left(2\cN(\Sigma_m,\epsilon)^2\right)
\lesssim m^{\gamma + 7} \log(m) 
\sim \epsilon^{-1 -7\gamma^{-1}} \log(\epsilon^{-1})
\lesssim \epsilon^{-1 - 8\gamma^{-1}},
\]
with an implied constant $C = C(d,\cK,\gamma)$ depending only on $d$, $\cK$ and $\gamma$. Let us summarize the resulting bound:
\begin{align}
\label{eq:ente}
\epsilon^{-1} \log(2\cN(\Sigma_m,\epsilon)^2) \le C \epsilon^{-(1+8\gamma^{-1})},
\end{align}
with constant $C = C(d,\cK,\gamma)$.
In particular, it follows from \eqref{eq:ente} that for the specific choice
\[
n := \left\lceil C \epsilon^{-(1+8\gamma^{-1})} \right\rceil
\ge \epsilon^{-1} \log(2\cN(\Sigma_m,\epsilon)^2),
\]
the bound \eqref{eq:nch} is satisfied, and hence, by \eqref{eq:uacc}, $n$ samples suffice to achieve approximation accuracy,
\[
\sup_{\cG \in \cU(\cA^\gamma)}
\Vert \cG -\cD_n \big ( \cG(u_1),\dots, \cG(u_n) \big )\Vert_{L^2_\mu}^2 
\le 145 \epsilon.
\]
 The implied constant in the asymptotic relation only depends on $d,\cK,\gamma$. In particular, upon estimating $\epsilon$ in terms of $n \sim \epsilon^{-(1+8\gamma^{-1})}$, we conclude that there exists a constant $C = C(d,\cK,\gamma)>0$, such that 
 \[
\sup_{\cG \in \cU(\cA^\gamma)}
\Vert \cG -\cD_n \big ( \cG(u_1),\dots, \cG(u_n) \big )\Vert_{L^2_\mu}^2
\le C n^{-\frac{1}{1+8 \gamma^{-1}}},
 \]
or equivalently, upon introducing $\lambda := 8\gamma^{-1}$ and taking the square-root of both sides,
 \[
\sup_{\cG \in \cU(\cA^\gamma)}
\Vert \cG -\cD_n \big ( \cG(u_1),\dots, \cG(u_n) \big )\Vert_{L^2_\mu}
\le C n^{-\frac{1}{2(1+\lambda)}}.
 \]
 This is the claimed upper bound. 
\end{proof}

\section{Conclusion}
\label{sec:conclusion}

Operator learning is rapidly emerging as a new paradigm to complement traditional numerical solvers in scientific computing. 
While empirical evidence demonstrates the practical efficacy of operator learning frameworks, a complete theoretical explanation of this empirical observation is still outstanding. Several papers have studied operator learning from an approximation theoretic perspective, with the goal of bounding the total number of parameters (model size). The complementary question of the data-complexity of operator learning has received considerably less attention in the literature. 

In this paper, we have studied the sample complexity of operator learning in three main settings: (1) uniform approximation of Fr\'echet differentiable operators, (2) approximation of Fr\'echet differentiable operators in the $L^p_\mu$-norm and, (3) the sample complexity of operator learning in spaces of operators which theoretically allow for efficient approximation by FNO at moderate model size. 

Our results in the settings (1) and (2) show rigorously that operator learning on general classes of Fr\'echet differentiable operators suffers from a ``curse of sample complexity'', requiring an exponential number of samples to achieve a desired approximation accuracy. In particular, this result implies that existing bounds on the generalization error which suffer from such an exponential curse of dimensionality, as derived in \cite{liu2022deep}, are essentially optimal. Our complexity estimates generalize well-known bounds to the infinite-dimensional context and complement the notion of the ``curse of parametric complexity'' from \cite{lanthaler2023curse,lanthaler2023operator} to establish a similar curse for the sample complexity. These exponential lower bounds on the complexity of operator learning strongly suggest that a theory of operator learning should be developed on spaces of operators that are considerably smaller than the set of bounded Fr\'echet (or Lipschitz) operators. As a consequence of these lower bounds, we argue that the empirically observed efficiency of operator learning for concrete problems cannot be explained by general arguments based on notions of operator smoothness such as Lipschitz continuity or Fr\'echet differentiability. 

In the second part of this paper, we study the sample complexity of operator learning on sets of operators which theoretically allow for efficient approximation, requiring only a modest number of tunable parameters, growing at most algebraically in the inverse of the desired approximation accuracy $\epsilon$. To illustrate the main ideas, we take the Fourier neural operator (FNO) as a case study, and propose a rigorous definition of the relevant approximation spaces $\cA^\gamma$. For these spaces, we prove that a number of samples which grows at most algebraically in the inverse of the desired approximation error $\epsilon$ is sufficient for operator learning. These results hold in the root mean square sense with respect to a probability measure $\mu$ of input functions supported on a compact set.

While we restrict attention to a specific setting to derive efficient upper sample complexity bounds in the second part of this work, we expect the basic ideas outlined in present work to apply to many other operator learning frameworks. We hope these results will motivate future investigations of the relevant approximation spaces $\cA^\gamma$ and provide a new point of view on the practical successes of operator learning. 

\section*{Acknowledgment}
NBK is grateful to the Nvidia Corporation for support through full time employment. SL acknowledges funding from the Swiss National Science Foundation through Postdoc.Mobility grant P500PT-206737.

\bibliographystyle{abbrv}
\bibliography{references}

\appendix 

\section{Spaces and sets of differentiable functionals}
\label{sec:notation}

\paragraph{$C^k(\Omega)$-spaces in finite-dimension.}
Let \(\Omega \subseteq \R^d\) for some \(d \in \N_{>0}\) be a set. We denote by \(C^0(\Omega;\R) = C(\Omega;\R) = C(\Omega)\) the set of continuous, bounded functions on \(\Omega\) taking values in \(\R\). We equip \(C(\Omega)\) with the norm
\[\|f\|_\infty = \sup_{x \in \Omega} |f(x)|, \qquad f \in C(\Omega)\]
which makes it a Banach space. For any \(k \in \N_{>0}\), we denote by \(C^k (\Omega;\R) = C^k(\Omega)\) the set of bounded, \(k\)-times continuously differentiable functions on \(\Omega\) taking values in \(\R\).
We equip \(C^k(\Omega)\) with the norm
\[\|f\|_{C^k} = \max_{0 \leq |\nu|_1 \leq k} \|D^\nu f\|_\infty, \qquad f \in C^k(\Omega)\]
which makes it a Banach space. We denote by \(\nu \in \N^d\) any multi-index and by \(|\cdot|_p\) the \(p\)-th order Euclidean norm. \(D^\nu\) denotes the partial derivative operator with respect to the multi-index \(\nu\).
We write \(C^\infty(\Omega;\R) = C^\infty(\Omega)\) for the set of infinitely differentiable functions on \(\Omega\) taking values in \(\R\) and similarly \(C^\infty_c(\Omega;\R) = C^\infty_c(\Omega)\) for the subset of \(C^\infty(\Omega)\) containing all compactly supported functions. Furthermore, we denote by $C_0 (\Omega) \subset C(\Omega)$, the subset of functions which decay to zero at $\partial \Omega$.

\paragraph{Fr\'echet differentiable functionals.}
When $\cX$ is an infinite-dimensional Banach space, we say that a non-linear functional $\cG: \cX \to \R$ is Fr\'echet differentiable, if for any $u \in \cX$, there exists a bounded linear functional $d\cG(u): \cX \to \R$, such that 
\[
d \cG(u)[v] = \lim_{\delta \to 0} \frac{\cG(u+\delta v) - \cG(u)}{\delta}, \quad \forall \, v \in \cX.
\]
Furthermore, we will say that $\cG$ is continuously Fr\'echet differentiable, if $\cG$ is Fr\'echet differentiable, and the differential $\cX \to \cL(\cX;\R)$, $u \mapsto d\cG(u)$ is continuous as a mapping from $\cX$ into the space of bounded linear functionals $\cL(\cX;\R)$, equipped with the operator norm,
\[
\Vert d \cG(u) \Vert := \sup_{v \ne 0} \frac{|d\cG(u)[v]|}{\Vert v \Vert}.
\]
Higher-order differentiability is defined analogously. For example, we say that $\cG$ is twice Fr\'echet differentiable, if $\cG$ is Fr\'echet differentiable, and $d\cG: \cX \to \cL(\cX;\R)$ is differentiable; i.e. if there exists a bounded linear operator $d^2\cG(u): \cX \mapsto \cL(\cX;\R)$, such that 
\[
d^2\cG(u)[v] = \lim_{\delta \to 0} \frac{d\cG(u+\delta v) - d\cG(u)}{\delta}, \quad \forall\, v \in \cX.
\]
Upon identifying linear operators $\cX \to \cL(\cX;\R)$ in the canonical way with bounded bi-linear maps in $\cL(\cX\times \cX;\R)$, i.e. using the isomorphism,
\[
\cL(\cX;\cL(\cX;\R)) \simeq \cL(\cX \times \cX; \R),
\]
we can interpret $d^2\cG(u): \cX\times \cX \to \R$ as a bilinear operator. We say that $\cG$ is twice continuously differentiable if $u \mapsto d^2\cG(u)$ is continuous. Proceeding iteratively, the $k$-th Fr\'echet derivative $\cG$ is defined analogously. In particular, the $k$-th Fr\'echet differential is given by a bounded $k$-linear operator,
\[
d^k \cG(u): \underbrace{\cX \times \dots \times \cX}_{\text{$k$ times}} \to \R,
\]
with norm 
\[
\Vert d^k \cG(u) \Vert = \sup_{v_1 ,\dots, v_k \ne 0}
\frac{|d^k\cG(u)[v_1,\dots, v_k]|}{\Vert v_1 \Vert \dots \Vert v_k \Vert}.
\]

Consistent with the finite-dimensional case, we introduce the spaces $C^k(\cX)$ consisting of $k$-times Fr\'echet differentiable functionals $\cG: \cX \to \R$, with \emph{bounded derivatives}, i.e. such that the quantity 
\[
\Vert \cG \Vert_{C^k(\cX)} := \max_{0\le \ell \le k} \sup_{u\in \cX} \Vert d^\ell \cG(u) \Vert < \infty.
\]
The unit ball $\cU(C^k(\cX))$ is defined as the set of $\cG\in C^k(\cX)$ such that $\Vert \cG \Vert_{C^k(\cX)} \le 1$. 

\paragraph{$C^k(\cK)$ trace-functionals for $\cK\subset \cX$ compact.}
If $\cK\subset \cX$ is a compact set, then we will denote by 
\[
C^k(\cK) := \{\cG|_\cK\, |\, \cG \in C^k(\cX)\},
\]
 the restriction to $\cK$ of functionals belonging to $C^k(\cX)$. Similarly, we denote by 
 \[
 \cU(C^k(\cK)) := \{\cG|_\cK\, |\, \cG \in \cU(C^k(\cX))\},
 \]
 the corresponding ``unit ball''. We emphasize that derivatives are usually not well-defined for functionals $\cG: \cK \to \R$, when $\cK\subset \cX$ is compact, and hence, we effectively work with a set of trace-functionals, i.e. nonlinear functionals possessing an extension to all of $\cX$ with globally bounded $C^k$-norm. We also emphasize that we only use $\cU(C^k(\cK))$ to define a set of Fr\'echet differentiable operators of interest. When studying the approximation of functionals $\cG \in \cU(C^k(\cK))$ on a compact set of inputs $u\in \cK$, we will only consider the uniform norm $\Vert \cG \Vert_{C(\cK)} = \sup_{u\in \cK} |\cG(u)|$, under which $C(\cK)=C^0(\cK)$ is itself a Banach space and never norms involving higher-order derivatives.

 \section{Continuity of linear maps over pre-compact sets}
\label{sec:cont_lin_func}

We will show that over pre-compact sets, all norms are essentially equivalent which will allows us to lower bound the sampling width by the non-linear width for many useful situations. Consider two normed vectors spaces $(X,{\|\cdot\|_X})$ and $(Y,\|\cdot\|_Y)$ and suppose that $Y$ compactly embeds in $X$. Then we have the following results.

\begin{lemma}
    \label{lemma:compact_embedding}
    Let $K \subset Y$ be a bounded set. Then $K$ is pre-compact in $X$ and there exists a constant $C = C(K) > 0$ such that
    \[\|f\|_Y \leq C \|f\|_X, \qquad \forall \: f \in K.\]
\end{lemma}
\begin{proof}
    Pre-compactness of $K$ follows immediately by the compact embedding ${Y \hookrightarrow X}$. Since $K$ is bounded, there exists some constant $C_1 > 0$ such that 
    \[K \subseteq \tilde{K} \coloneqq \{f \in Y : \|f\|_Y \leq C_1\}.\]
    Consider $\partial \tilde{K} = \{f \in Y : \|f\|_Y = C_1\}$. Since this is a closed and bounded set in $Y$, it is compact in $X$. Therefore $\|\cdot\|_X$ is uniformly continuous on $\partial \tilde{K}$. Since $0 \notin \partial \tilde{K}$, there exists a constant $C_2 > 0$ such that $C_2 \leq \|f\|_X$ for all $f \in \partial \tilde{K}$. It follows that
    \[\sup_{f \in \partial \tilde{K}} \frac{\|f\|_Y}{\|f\|_X} \leq \frac{C_1}{C_2}\]
    and, in particular,
    \[\|f\|_Y \leq \frac{C_1}{C_2} \|f\|_X, \qquad \forall \: f \in \partial \tilde{K}.\]
    Now let $f \in \tilde{K}$ and assume that $f \neq 0$, noting that the desired inequality for the case $f = 0$ follows trivially since $\|0\|_Y = \|0\|_X = 0$. Define $g \coloneqq C_1 f / \|f\|_Y$ then clearly $g \in \partial \tilde{K}$
    hence
    \[\|g\|_Y \leq \frac{C_1}{C_2} \|g\|_X.\]
    Plugging in the definition of $g$ implies
    \[\|f\|_Y \leq \frac{C_1}{C_2} \|f\|_X.\]
    Since $f$ was arbitrary and $K \subseteq \tilde{K}$, the result follows.
\end{proof}
\begin{lemma}
    \label{lemma:cont_linear_map}
    Let $K \subset Y$ be a bounded set and let $E: Y \to \R^n$ be a continuous, linear map. Then there exists a constant $C = C(E,K) > 0$ such that 
    \[|E(f) - E(g)| \leq C \|f -g\|_X, \qquad \: \forall f,g \in K.\]
    In particular, $E|_K$ is continuous in $X$.
\end{lemma}
\begin{proof}
    Let $f, g \in K$. Since $E$ is continuous and linear, there exists a constant $C_1 > 0$ such that
    \[|E(f) - E(g)| \leq C_1 \|f - g\|_Y.\]
    Define $\tilde{K} \coloneqq \{f - g \in Y : f,g \in K\}$. The triangle inequality implies that $\tilde{K}$ is bounded since $K$ is bounded. Therefore Lemma~\ref{lemma:compact_embedding} implies that there exists a constant $C_2 > 0$ such that
    \[\|f-g\|_Y \leq C_2 \|f-g\|_X.\]
    Hence,
    \[|E(f) - E(g)| \leq C_1 C_2 \|f -g\|_X\]
    as desired.
\end{proof}

\section{Bounds in finite dimension}
\label{sec:bds_finite}

\subsection{Lower bounds over the Cube}
\label{subsec:widths_cube}

\subsubsection{Partition of the Cube}
\label{subsec:partition_cube}

Let $\Omega = [0,1]^d$. We define the bump function ${\varphi : \R \to \R}$ by
\[\varphi(x) \coloneqq\begin{cases}
\text{e}^{-\frac{1}{1-x^2}}, & -1 < x < 1 \\
0, & \text{otherwise}.
\end{cases}\]
For any \(0 < \gamma < 1\), consider the following family of functions \(\sigma_\gamma  : \R \to \R\) defined as
\[\sigma_\gamma (x) \coloneqq
\begin{cases}
0, & x \leq 0, \\
\text{e} \cdot \varphi \big( \frac{2}{1-\gamma}x - 1 \big ), & 0 < x < \frac{1}{2}(1-\gamma), \\
1, & \frac{1}{2}(1-\gamma) \leq x \leq \frac{1}{2}(1+\gamma), \\
\text{e} \cdot \varphi \big( \frac{2}{1-\gamma}x - \frac{\gamma-1}{1-\gamma} \big ), & \frac{1}{2}(1+\gamma) < x < 1, \\
0, & x \geq 1.
\end{cases}
\]
The non-constant terms are simply shifts and a re-scaling of the bump function so that it is supported on \([0,1-\gamma]\) or \([\gamma,1]\) respectively and so that it takes the value one at the midpoint of its support. It is easy to check that \(\sigma_\gamma\) is supported on \([0,1]\), \(\sigma_\gamma \equiv 1\) on \( \big[ \frac{1}{2}(1-\gamma),\frac{1}{2}(1 + \gamma) \big ]\), and \(\sigma_\gamma \leq 1\) on \([0,1]\). Furthermore, due to well known differentiability properties of the bump function, we have \(\sigma_\gamma \in C^\infty_c(\R)\). For any \(l \in \N\), denote by \(\sigma_\gamma^{(l)}\) the \(l\)-th derivative of \(\sigma_\gamma\) with the convention \(\sigma_\gamma^{(0)} \coloneqq \sigma_\gamma\). We can find constants \(\beta_l > 0\) such that
\[\sup_{x \in \R} |\sigma_\gamma^{(l)} (x)| \leq \frac{\beta_l}{(1-\gamma)^l}\]
where \(\beta_0 = 1\) and, generally, \(\beta_l\) depends only on \(l\) and not \(\gamma\). Define \(\phi_\gamma : \R^d \to \R\) as
\[\phi_\gamma (x) \coloneqq \prod_{j=1}^d \sigma_\gamma (x_j), \qquad x \in \R^d.\]
It is easy to see that \(\phi_\gamma\) is supported on \(\Omega\), \(\phi_\gamma \equiv 1\) on \(\big[ \frac{1}{2}(1-\gamma),\frac{1}{2}(1 + \gamma) \big ]^d\), and \(\phi_\gamma \leq 1\) on \(\Omega\). Let \(\nu \in \N^d\) be a multi-index with \(|\nu|_1 = k\). Then
\begin{align*}
    \|D^\nu \phi_\gamma\|_\infty &= \sup_{x \in \Omega} |D^\nu \phi_\gamma (x) | \\
    &\leq \prod_{j=1}^d  \sup_{x \in \Omega} |\sigma_\gamma^{(\nu_j)} (x_j)| \\
    &\leq \prod_{\nu_j \neq 0} \frac{\beta_{\nu_j}}{(1-\gamma)^{\nu_j}} \\
    &\leq \frac{1}{(1- \gamma)^k}\max_{p \in P_{k,k}} p(\beta_1,\dots,\beta_k) \\
    &\coloneqq \frac{\Gamma(k)}{(1-\gamma)^k}
\end{align*}
where \(P_{k,k}\) denotes the set of all monomials of \(k\) variables with at most degree \(k\) and leading coefficient one. The constant \(\Gamma\) depends only on \(k\) since the cardinality of \(P_{k,k}\) depends only on \(k\), but not on \(d\) or \(\gamma\). In particular, we find that, for any \(k \in \N\), there exists a constant \(\Gamma = \Gamma (k) > 0\) such that
\begin{equation}
    \label{eq:bound_phi_der}
    \|D^\nu \phi_\gamma \|_\infty \leq \frac{\Gamma}{(1-\gamma)^{k}}, \qquad 0 \leq |\nu|_1 \leq k.
\end{equation}

Let \(m \in \N\) and uniformly subdivide \(\Omega\) into cubes of side-length \(1/m\).
There are \(n \coloneqq m^d\) such cubes, which we denote \(Q_1,\dots,Q_n\) each with volume \(1/m^d = n^{-1}\).
For any \(j \in [n]\), define, 
\[q_j \coloneqq \frac{1}{2} \mathbbm{1} - n \int_{Q_j} x \: \mathsf{d}x\]
which is the vector difference between the center of mass of \(\Omega\) and \(Q_j\). We now define the functions \(\phi_{\gamma,j} \in C^\infty_c(\R^d)\) by \(\phi_{\gamma,j} (x) \coloneqq \phi_\gamma \big( m(x + q_j) \big )\) for any \(x \in \R^d\).
In particular, each \(\phi_{\gamma,j}\) is \(\phi_\gamma\) with its support shifted from \(\Omega\) to \(Q_j\).
Furthermore, from \eqref{eq:bound_phi_der}, we find that, for any \(k \in \N\) and \(j \in [n]\),
\begin{equation}
    \label{eq:bound_phi_j_der}
    \|D^\nu \phi_{\gamma,j}\|_\infty \leq \frac{\Gamma m^{|\nu|_1}}{(1-\gamma)^{k}}, \qquad 0 \leq |\nu|_1 \leq k.
\end{equation}
It is also useful to estimate the $L^p$-norms of these functions, in particular, we have
\begin{align}
\begin{split}
    \label{eq:lp-bump}
    \|\phi_{\gamma,j}\|^p_p &= \int_{Q_j} |\phi_{\gamma,j} (x)|^p \: \mathsf{d}x \\
    &= m^{-d} \int_{\Omega} |\phi_\gamma (x) |^p \: \mathsf{d}x \\
    &\geq m^{-d} \int_{\Omega \cap \{\phi_{\gamma} \equiv 1\}} 1 \: \mathsf{d}x \\
    &= n^{-1} \gamma^d.
\end{split}
\end{align}

\subsubsection{Proof of Theorem \ref{thm:sampling_width}}
    \label{pf:sampling_width}
 
\begin{proof}
    \textbf{Step 1}: First we consider the case $q = \infty$ with $K = \cU \big (W^{k,\infty} (\Omega) \big )$ or $K = \cU \big ( C^k (\Omega) \big )$. Let 
    \[f = J \sum_{j=1}^{2n} \alpha_j \phi_{\gamma, j} \]
    where $\alpha_j \in \{-1,1\}$ and $J$ is chosen such that $\|f\|_{C^k} = \|f\|_{W^{k,\infty}} \leq 1$ so that $f \in K$.
    We will first estimate $J$. Using the fact that the $\phi_{\gamma, j}$(s) have disjoint supports along with \eqref{eq:bound_phi_j_der}, we find 
    \begin{align*}
        \|f\|_{C^k} &= \max_{0 \leq |\nu|_1 \leq k} \|D^\nu f \|_\infty \\
        &= J \max_{0 \leq |\nu|_1 \leq k} \max_{j \in [2n]} \|D^\nu \phi_{\gamma, j} \|_\infty \\
        &\leq \frac{J \Gamma n^{k/d}}{(1 - \gamma)^k}.
    \end{align*}
    If $p < \infty$, we choose $\gamma = d/(kp+d)$, which gives
    \[ (1-\gamma)^{-k} = \left( 1+ \frac{d}{kp}\right)^k.\]
    If $p = \infty$, any choice of $\gamma$ will do, so we simply pick $\gamma = 1/2$ and set
    \[T =
    \begin{cases}
        \left( 1+ \frac{d}{kp}\right)^k, & 1 \leq p < \infty, \\
        2^k, & p = \infty.
    \end{cases}
    \]
    We then have
    \[\|f\|_{C^k} \leq J T \Gamma n^{k/d}.\]
    Now suppose $q < \infty$ with $K = \cU \big (W^{k,q} (\Omega) \big )$. Similarly, we have
    \begin{align*}
    \|f\|_{W^{k,q}}^q &= \sum_{0 \leq |\nu|_1 \leq k} \|D^\nu f \|_q^q \\
    &\leq \sum_{0 \leq |\nu|_1 \leq k} \|D^\nu f \|_\infty^q \\
    &\leq J^q (k+1)d^k T^{q} \Gamma^q n^{kq/d} 
    \end{align*}
    where the estimate on the sum follows as
    \begin{align}
    \label{eq:combinatorial_sum}
    \begin{split}
    \sum_{0 \leq |\nu|_1  \leq k} 1 &= \sum_{j=0}^k {d-1+j \choose d-1} \\
    &\leq (k+1) {d-1+k \choose d-1} \\
    &= (k+1) \frac{k+d-1}{k} \cdot \frac{k+d-2}{k-1} \cdots \frac{d}{1} \\
    &\leq (k+1)d^k.
    \end{split}
    \end{align}
    In particular,
    \[\|f\|_{W^{k,q}} \leq J (k+1)^{1/q} T \Gamma d^{k/q} n^{k/d}.\]
    Therefore letting $J = \big ( (k+1)^{1/q} T \Gamma d^{k/q} n^{k/d} \big )^{-1}$, for any $1 \leq q \leq \infty$ implies $f \in K$.

    \textbf{Step 2}: Now let $Y_n = \{y_1,\dots,y_n\} \subset \Omega$ be an arbitrary set of points. Let $l : [n] \to [2n]$ be such that $y_j \in \bar{\text{supp}} \: \phi_{\gamma, l(j)} $
    for any $j \in [n]$. Since the cubes $Q_j$ overlap on their boundaries, the choice of $l$ might not be unique, however, since the $\phi_{\gamma,j}$(s) are identically zero on these boundaries any arbitrary choice will do. In particular, we have that
    \[f(y_j) = J \alpha_{l(j)} \phi_{\gamma, l(j)} (y_j).\]
    Define, for any $j \in [2n]$,
    \[ \beta_j = 
    \begin{cases}
        \alpha_j, & j \in l([n]), \\
        -\alpha_j, & \text{otherwise}
    \end{cases}
    \]
    and set 
    \[g = J \sum_{j=1}^{2n} \beta_j \phi_{\gamma, j}.\]
    Clearly $g \in K$. Furthermore, for each $j \in [n]$, we have $g(y_j) = f(y_j)$ and, in particular,
    \[\delta_{Y_n}(g) = \delta_{Y_n}(f).\]
    Therefore
    \[f - g = \big ( f - (\D_n \circ \delta_{Y_n})(f) \big ) - (g - (\D_n \circ \delta_{Y_n})(g) \big )\]
    hence
    \[\|f - g\|_p \leq \| f - (\D_n \circ \delta_{Y_n})(f) \|_p + \| g - (\D_n \circ \delta_{Y_n})(g) \|_p.\]
    It follows that
    \[\max \big \{ \| f - (\D_n \circ \delta_{Y_n})(f) \|_p, \| g - (\D_n \circ \delta_{Y_n})(g) \|_p \big  \} \geq \frac{1}{2} \|f - g\|_p\]
    and therefore
    \[s_n (K)_{L^p} \geq \frac{1}{2} \|f - g \|_p.\]

    \textbf{Step 3}: It remains to estimate the difference between $f$ and $g$. Suppose first $p = \infty$ then we have
    \[\|f - g\|_\infty = 2 J \max_{\substack{j \in [2n], \\ j \notin l([n])}} \|\phi_{\gamma, j}\|_\infty = 2 J.\]
    If $p < \infty$, we have from \eqref{eq:lp-bump},
    \begin{align*}
        \|f - g\|^p_p &= 2^p J^p \sum_{\substack{j \in [2n], \\ j \notin l([n])}} \|\phi_{\gamma, j} \|^p_p \\
        & \geq 2^p J^p n^{-1} \gamma^d n \\
        &=  2^p J^p \gamma^d
    \end{align*}
    since the sum must have at least $n$ elements in the case $|l([n])| = n$. Our choice $\gamma = d/(kp+d)$, gives
    \[\gamma^{d/p} = \left( 1 + \frac{kp}{d} \right)^{d/p} \geq e^k.\]
    Putting this together by using the definition of $T$, we have, for any $1 \leq p, q \leq \infty$,
    \[\frac{1}{2} \|f-g\|_p \geq 2^{-k} (k+1)^{-1/q} \Gamma^{-1} \left ( 1 + \frac{d}{kp} \right )^{-k} d^{-k/q} n^{-k/d}\]
    as desired.
\end{proof}

\subsubsection{Proof of Theorem \ref{thm:continous_width}}

\begin{proof}
Define \(X_n \coloneqq \text{span} \{\phi_{\gamma,1},\dots,\phi_{\gamma,n}\}\) and let  \(f \in X_n\). In particular,  
\[f = \sum_{j=1}^n \alpha_j \phi_{\gamma,j} \] 
for some \(\alpha_1,\dots,\alpha_n \in \R\). Suppose first that $p = q = \infty$. Since the $\phi_{\gamma,j}$(s) have disjoint support,
\[\|f\|_\infty = \max_{j \in [n]} |\alpha_j|.\]
Then, using \eqref{eq:bound_phi_j_der}, we find
\begin{align}
    \label{eq:xn_ck_esimate}
    \begin{split}
        \|f\|_{C^k} &= \max_{0 \leq |\nu|_1 \leq k} \|D^\nu f \|_\infty \\
        &= \max_{0 \leq |\nu|_1 \leq k} \max_{j \in [n]}  |\alpha_j| \|D^\nu \phi_{\gamma,j} \|_\infty \\
        &\leq \frac{\Gamma n^{k/d}}{(1-\gamma)^k} \max_{1 \leq j \leq n} |\alpha_j| \\
        &= 2^k \Gamma n^{k/d} \|f\|_\infty
    \end{split}
\end{align}
with the choice $\gamma = 1/2$. Suppose now that $q < \infty$. Consider first, the case $p = \infty$. We have 
\begin{equation}
    \label{eq:xn_pinfinity_lowerbound}
    \|f\|^q_\infty = \left ( \max_{j \in [n]} |\alpha_j| \right )^q \geq n^{-1} \sum_{j=1}^n |\alpha_j|^q .
\end{equation}
On the other hand, when $p < \infty$, using \eqref{eq:lp-bump}, we find 
\begin{align}
    \label{eq:xn_lp_lowerbound}
    \begin{split}
    \|f\|_{p}^p &= \int_\Omega \bigg ( \bigg | \sum_{j=1}^n \alpha_j \phi_{\gamma,j} (x) \bigg | \bigg )^p \: \mathsf{d}x \\
    &= \sum_{l=1}^n \int_{Q_l} \bigg ( \bigg | \sum_{j=1}^n \alpha_j \phi_{\gamma,j} (x) \bigg | \bigg )^p \: \mathsf{d}x \\
    &= \sum_{l=1}^n \int_{Q_l} | \alpha_l \phi_{\gamma, l} (x) |^p \: \mathsf{d}x \\
    &= \sum_{l=1}^n |\alpha_l|^p \int_{Q_l} |\phi_{\gamma,l} (x) |^p \: \mathsf{d}x \\
    &\geq n^{-1} \gamma^d \sum_{j=1}^n |\alpha_j|^p.
    \end{split}
\end{align}
If $p > q$, by H{\"o}lder's inequality,
\[\|f\|_p^p \geq n^{-1} \gamma^d n^{1 - p/q} \left ( \sum_{j=1}^n |\alpha_j|^q  \right)^{p/q}.\]
Therefore, for $p \geq q$, we find
\[\|f\|_p^q \geq n^{-1} \gamma^{dq/p} \sum_{j=1}^n |\alpha_j|^q .\]
Notice that, due to \eqref{eq:xn_pinfinity_lowerbound}, the above inequality also holds for the case $p = \infty$. Using this along with \eqref{eq:bound_phi_j_der} and \eqref{eq:combinatorial_sum}, we have 
\begin{align}
    \label{eq:xn_wkq_upperbound}
    \begin{split}
    \|f\|_{W^{k,q}} &= \left ( \sum_{0 \leq |\nu|_1  \leq k} \sum_{j=1}^n \int_{Q_j} |\alpha_j|^q |D^\nu \phi_{\gamma,j}(x)|^q \: \mathsf{d}x  \right )^{1/q} \\
    &\leq \left ( \sum_{0 \leq |\nu|_1  \leq k} \frac{\Gamma^q m^{|\nu|_1 q}}{(1-\gamma)^{kq}} n^{-1} \sum_{j=1}^n |\alpha_j|^q  \right )^{1/q} \\
    &\leq \left ( \sum_{0 \leq |\nu|_1  \leq k} \frac{\Gamma^q m^{|\nu|_1 q}}{(1-\gamma)^{kq} \gamma^{dq/p}} \|f\|_p^q  \right )^{1/q} \\
    &\leq \Gamma (k+1)^{1/q} \gamma^{-d/p} (1-\gamma)^{-k} d^{k/q} n^{k/d} \|f\|_p 
    \end{split}
\end{align}
When $ p = \infty$, we choose $\gamma = 1/2$ and obtain,
\begin{equation}
    \label{eq:xn_wkq_pinfinity_estimate}
    \|f\|_{W^{k,q}} \leq 2^k \Gamma (k+1)^{1/q} d^{k/q} n^{k/d} \|f\|_p.
\end{equation}
When $p < \infty$, choosing $\gamma = d/(kp+d)$ yields
\[
(1-\gamma)^{-k} = \left( 1+ \frac{d}{kp}\right)^k,
\quad
\gamma^{-d/p} = \left( 1 + \frac{kp}{d} \right)^{d/p} \leq e^k,
\]
and therefore
\[\|f\|_{W^{k,q}} \leq  \underbrace{ \Gamma e^k (k+1)^{1/q} \left( 1+ \frac{d}{kp}\right)^k d^{k/q} n^{k/d}}_{\coloneqq J} \|f\|_p.  \]
Since $J$ is greater than the constants appearing in \eqref{eq:xn_ck_esimate} and \eqref{eq:xn_wkq_pinfinity_estimate}, we have shown that $\{f \in X_n : \|f\|_p \leq J^{-1}\} \subseteq K$ for all relevant choices of $p$ and $q$. The result therefore follows by \cite[Theorem 3.1]{devore1989optimal}.
\end{proof}

\subsection{Lower bounds over Weighted Euclidean Space}
\label{subsec:width_gaussian}

\subsubsection{Gaussian Partition of Euclidean Space}
\label{subsec:partition_rd}

We now modify the partition defined in section~\ref{subsec:partition_cube} so that that union of the support of all functions becomes $\R^d$ instead of $[0,1]^d$.
Let $\rho_d$ denote the density of the standard $d$-dimensional Gaussian measure, dropping the subscript when $d=1$. 
Define $\xi : \mathbb{R} \to (0,1)$ by
\[\xi(x) = \int_{-\infty}^x \rho(y) \: \mathsf{d}y \]
to be the cumulative distribution function of $\rho$. It is easy to see that $\xi \in C^\infty_b (\R)$
and is a homeomorphism of $\R$ into $(0,1)$. In particular, $\xi^{-1} : (0,1) \to \R$ exists and is 
continuous. Define $\xi_d : \R^d \to (0,1)^d$ by $\xi_d (x)_j = \xi(x_j)$ for any $j \in [d]$ and 
notice that $\xi_d^{-1} : (0,1)^d \to \R^d$ is given as $\xi_d^{-1} (x)_j = \xi^{-1}(x_j)$.
The mapping $\xi_d$ is the Knothe-Rosenblatt rearrangement between the standard Gaussian on $\R^d$ and the uniform 
measure on $[0,1]^d$ which is simply the Lebesgue measure. In particular, $(\xi_d)_\sharp \rho_d (x) \: \mathsf{d}x = \mathsf{d}x$.
Define the family of functions $\tilde{\phi}_{\gamma,j} (x)  = \phi_{\gamma,j}(\xi_d(x))$ for any $j \in [n]$ and notice that 
$\tilde{\phi}_{\gamma,j}$ is supported on $\xi_d^{-1}(Q_j) \coloneqq E_j$. Each cube $Q_j \subset [0,1]^d$ is stretched to the hyper-rectangle $E_j \subset \R^d$ whose volume under the standard 
Gaussian equals the volume of $Q_j$ under the Lebesgue measure. Since the $Q_j$(s) are disjoint, the functions $\tilde{\phi}_{\gamma,j}$ have disjoint supports whose 
union makes up $\R^d$. 

Notice first that, 
\begin{align*}
    \tilde{\phi}_{\gamma,j}(x) &= \phi_\gamma \big( m(\xi_d(x) + q_j) \big ) \\
    &= \prod_{l=1}^d \sigma_{\gamma} \big ( m (\xi (x_l) + q_{jl}) \big ).
\end{align*}
Let $\nu \in \N^d$ be a multi-index with $|\nu|_1 = k$. We have, by Fa{\`a} di Bruno's formula,
\begin{align*}
    D^\nu \tilde{\phi}_{\gamma,j}(x) &=  \prod_{l=1}^d \frac{\partial^{\nu_l}}{\partial x^{\nu_l}_l} \sigma_\gamma \big ( m (\xi (x_l) + q_{jl}) \big ) \\
    &= \prod_{l=1}^d \sum_{\pi \in \Pi_{\nu_l}} \sigma^{(|\pi|)}_\gamma \big ( m (\xi(x_l) + q_{jl}) \big ) \prod_{B \in \pi} m \xi^{(|B|)}(x_l)
\end{align*}
where $\Pi_{\nu_l}$ is the set of partitions of the set $\{1,\dots,\nu_l\}$ with the convention that $\Pi_0 = \{\emptyset\}$. For example, 
\[\Pi_3 = \left \{ \big \{ \{1,2,3\} \big \}, \big \{ \{1\}, \{2\}, \{3\} \big \}, \big \{ \{1, 2\}, \{3\} \big \}, \big \{ \{1, 3\}, \{2\} \big \}, \big \{ \{3, 2\}, \{1\} \big \}  \right \}.\]
It follows, again by Fa{\`a} di Bruno's formula, that
\[D^\nu  \phi_{\gamma,j}(x) = \prod_{l=1}^d \sum_{\pi \in \Pi_{\nu_l}} \sigma^{(|\pi|)}_\gamma \big ( m (x_l + q_{jl}) \big ) \prod_{B \in \pi} m \]
hence, since $\xi \in C^\infty_b (\R)$, there exists a constant $H = H(\xi, k) > 0$ such that
\begin{equation}
    \label{eq:phitilde_phi_bound}
    \big | (D^\nu \tilde{\phi}_{\gamma,j})(x) \big | \leq H \big | (D^\nu \phi_{\gamma,j}) \big ( \xi_d (x) \big ) \big |.
\end{equation}
It then follows from \eqref{eq:bound_phi_j_der}  that
\begin{equation}
\label{eq:bound_phitilde_j_der}
\|D^\nu \tilde{\phi}_{j,\gamma}\|_\infty \leq \frac{H \Gamma m^{|\nu|_1}}{(1-\gamma)^{|\nu|_1}}, \qquad 0\leq |\nu|_1 \leq k.
\end{equation}
Furthermore, by the change of variables formula, we find
\begin{align}
    \label{eq:lp-gaussian_bump}
    \begin{split}
    \|\tilde{\phi}_{\gamma,j}\|^p_{L^p_{\rho_d}} &= \int_{E_j} |\tilde{\phi}_{\gamma,j}(x)|^p \rho_d (x) \: \mathsf{d}x \\
    &= \int_{E_j} \big | \phi_{\gamma,j} \big ( \xi_d(x) \big ) \big |^p \rho_d (x) \: \mathsf{d}x \\
    &= \int_{\xi_d(E_j)} |\phi_{\gamma,j}(x)|^p (\xi_d)_\sharp \rho_d (x) \: \mathsf{d}x \\
    &= \int_{Q_j} |\phi_{\gamma,j}(x)|^p \: \mathsf{d}x \\
    &\geq n^{-1} \gamma^d
    \end{split}
\end{align}
where the last line follows from \eqref{eq:lp-bump}.


\subsubsection{Proof of Theorem \ref{thm:sampling_width_gaussian}}

\begin{proof}
    The case $p = \infty$, follows precisely as Theorem~\ref{thm:sampling_width}. Suppose $p < \infty$ and define
    \[f = J \sum_{j=1}^{2n} \alpha_{j} \tilde{\phi}_{\gamma,j}\]
    for some $\alpha_j \in \{-1,1\}$ and constant $J$ such that $\|f\|_{W^{k,q}} \leq 1$. Due to \eqref{eq:bound_phitilde_j_der}, for $q = \infty$, the same calculation as in Theorem~\ref{thm:sampling_width} holds. When $q < \infty$, we have
    \begin{align*}
    \|f\|_{W^{k,q}_{\rho_d}}^q &= \sum_{0 \leq |\nu|_1 \leq k} \|D^\nu f \|_{L^q_{\rho_d}}^q \\
    &= J^q \sum_{0 \leq |\nu|_1 \leq k} \sum_{j=1}^{2n} \int_{E_j} |D^\nu \tilde{\phi}_{\gamma,j} (x) |^q \rho_d (x) \: \mathsf{d}x  \\
    &\leq J^q H^q \sum_{0 \leq |\nu|_1 \leq k} \sum_{j=1}^{2n} \int_{E_j} \big| D^\nu \phi_{\gamma,j}  \big ( \xi_d (x) \big ) \big |^q \rho_d (x) \: \mathsf{d}x  \\
    &= J^q H^q \sum_{0 \leq |\nu|_1 \leq k} \big\| D^\nu \sum_{j=1}^{2n} \phi_{\gamma,j} \big \|^q_q \\
    &\leq J^q H^q \sum_{0 \leq |\nu|_1 \leq k} \big\| D^\nu \sum_{j=1}^{2n} \phi_{\gamma,j} \big \|^q_\infty\\
    &\leq J^q (k+1) H^q \Gamma^q \left( 1+ \frac{d}{kp}\right)^{kq}  d^k  \Gamma^q n^{kq/d} 
    \end{align*}
    which follows by \eqref{eq:phitilde_phi_bound} and the change of variables formula similarly to \eqref{eq:lp-gaussian_bump}.
    In particular, we find that
    \[J =  (k+1)^{-1/q} H^{-1} \Gamma^{-1} \left (1 + \frac{d}{kp} \right )^{-k} d^{-k/q} n^{-k/d}\]
    implies $f \in K$. By a similar construction of the function $g$ as in Theorem~\ref{thm:sampling_width}, we find that, we need only to estimate
    \begin{align*}
        \|f - g\|_{L^p_{\rho_d}}^p &= 2^p J^p \sum_{\substack{j \in [2n], \\ j \notin l([n])}} \|\tilde{\phi}_{\gamma, j} \|^p_{L^p_{\rho_d}} \\
        & \geq 2^p J^p n^{-1} \gamma^d n \\
        &=  2^p J^p \gamma^d
    \end{align*}
    which follows by \eqref{eq:lp-gaussian_bump} and therefore the results follows as in Theorem~\ref{thm:sampling_width}.
\end{proof}

\subsubsection{Proof of Theorem \ref{thm:continous_width_gaussian}}

\begin{proof}
    Define $X_n \coloneqq \text{span } \{\tilde{\phi}_{\gamma,1}, \dots, \tilde{\phi}_{\gamma,n}\}$ and let $f \in X_n$. The result follows as in Theorem~\ref{thm:continous_width}; we need to only modify the estimates \eqref{eq:xn_lp_lowerbound} and \eqref{eq:xn_wkq_upperbound}. For \eqref{eq:xn_lp_lowerbound}, we have
    \begin{align}
    \label{eq:xn_lp_lowerbound_gaussian}
    \begin{split}
    \|f\|_{L^p_{\rho_d}}^p &= \sum_{l=1}^n |\alpha_l|^p \int_{E_j} |\tilde{\phi}_{\gamma,j}(x)|^p \rho_d (x) \: \mathsf{d}x \\
    &\geq n^{-1} \gamma^d \sum_{j=1}^n |\alpha_j|^p 
    \end{split}
\end{align}
which follows by \eqref{eq:lp-gaussian_bump}. For \eqref{eq:xn_wkq_upperbound}, we have
\begin{align}
    \label{eq:xn_wkq_upperbound_gaussian}
    \begin{split}
    \|f\|_{W^{k,q}_{\rho_d}} &= \left ( \sum_{0 \leq |\nu|_1  \leq k} \sum_{j=1}^n \int_{E_j} |\alpha_j|^q |D^\nu \tilde{\phi}_{\gamma,j} (x)|^q \rho_d(x) \: \mathsf{d}x  \right )^{1/q} \\
    &\leq \left ( \sum_{0 \leq |\nu|_1  \leq k} H^q \sum_{j=1}^n \int_{Q_j} |\alpha_j|^q |D^\nu \phi_{\gamma,j} (x)|^q \: \mathsf{d}x  \right )^{1/q} \\
    &\leq H \Gamma (k+1)^{1/q} \gamma^{-d/p} (1-\gamma)^{-k} d^{k/q} n^{k/d} \|f\|_{L^p_{\rho_d}} 
    \end{split}
\end{align}
which follows by \eqref{eq:bound_phitilde_j_der}, the change of variables formula, \eqref{eq:xn_lp_lowerbound_gaussian}, and \eqref{eq:xn_wkq_upperbound}. The result therefore follows as in Theorem~\ref{thm:continous_width}.
\end{proof}

\section{Lower Bounds in Infinite Dimensions}
\label{sec:lower_bounds_infd}

\subsubsection{Proof of Proposition \ref{prop:alpha-cubes}}
\label{sec:alpha-cubes}

\begin{proof}
By definition, a domain $D$ has non-empty interior. In particular, $D$ contains a $d$-dimensional cube. Furthermore, since $D$ is compact, it is also contained in a $d$-dimensional cube. Upon rescaling $D$, we will wlog assume that $[0,1]^d \subset D \subset [-N,N]^d$, for suitably chosen $N\in  \N$, in the following (a rescaling of $D$ only affects the implicit constants in our estimates, but not the decay rate $\alpha$).

\textbf{Case I: $L^p$-norm ($\cK= \cU(W^{s,p}(D))$):}
Assuming that $[0,1]^d\subset \Omega \subset [-N,N]^d$, we first show that $\cU(W^{s,p}(D))$ contains $\alpha$-hypercubes of arbitrary dimension $n$ for $\alpha = s/d + 1$. 

To see this, for given $n$, we consider $\phi_j$, $j=1,\dots, n$ the $n$ lowest-order $1$-periodic trigonometric (sine/cosine) basis functions, all of degree at most $m$, where $m$ is minimal such that $n \le m^d$. We note that for $1$-periodic functions, we have the norm equivalence,
\[
\Vert f \Vert_{W^{s,p}([0,1]^d)} 
\le \Vert f \Vert_{W^{s,p}(D)}
\le \Vert f \Vert_{W^{s,p}([-N,N]^d)}
\le 
C\Vert f \Vert_{W^{s,p}([0,1]^d)},
\]
where $C = C(N,d) \ge 0$ is a fixed constant, only depending on the dimension $d$ and on the domain $D$.
For coefficients $\alpha_1,\dots, \alpha_n \in \R$, and using the above norm equivalence for $f = \sum_{j=1}^n \alpha_j \phi_j$, it now follows from Bernstein's inequality that 
\begin{align*}
\left\Vert 
\sum_{j=1}^n \alpha_j \phi_j
\right\Vert_{W^{s,p}}
&\le C m^s \left\Vert 
\sum_{j=1}^n \alpha_j \phi_j
\right\Vert_{L^p},
\end{align*}
where $C = C(D, s)$ only depends on the underlying domain $D$ and $s$. Bounding the term on the right-hand side via the triangle inequality, and taking into account that $2^{-d}m^d \le n$, by choice of $m$, it follows that 
\begin{align*}
\left\Vert 
\sum_{j=1}^n \alpha_j \phi_j
\right\Vert_{W^{s,p}}
\le C m^s n \max_{j} |\alpha_j|
\le C n^{(s/d) + 1} \max_{j} |\alpha_j|,
\end{align*}
where $C = C(D, s, d)$ is independent of $n$. In particular, this implies that the set of functions,
\[
f = \frac{1}{C n^{(s/d)+1}} \sum_{j=1}^n  \alpha_j \phi_j, \quad |\alpha_j| \le 1,
\]
satisfies $\Vert f \Vert_{W^{s,p}} \le 1$, i.e. $f\in \cU(W^{s,p}(D))$. We also note that $\Vert \phi_j \Vert_{L^p(D)} \le C \Vert \phi_j \Vert_{L^p([0,1]^d)} \le M_0$ is uniformly bounded by a constant $M_0$ that depends on $\Omega$ and $d$, but that does not depend on $n$. Since $\phi_j$ are orthonormal in $L^2([0,1]^d)$, we can define dual elements $\phi^\ast_j$ by 
\[
\phi^\ast_j(f) = \int_{[0,1]^d} \phi_j(x) f(x) \, dx.
\]
And we note that $\phi^\ast_k(\phi_j)=\delta_{kj}$ for $j,k=1,\dots, n$ and $\Vert \phi^\ast_j \Vert_{(L^p(D))^\ast} = \Vert \phi_j \Vert_{L^q([0,1]^d)} \le M$ is uniformly bounded by a constant $M>0$, which depends on $D$ and $d$ but is independent of $n$.

The bi-orthogonal system $\{(\phi_j, \phi_j^\ast)\}$ thus satisfies all the assumptions for our definition of an $\alpha$-hypercube, except for the uniform bound $\Vert \phi_j \Vert_{L^p(D)}\le 1$. Since $\Vert \phi_j \Vert_{L^p(D)} \le M_0$, this is readily fixed by considering instead the bi-orthogonal system $\{(\tilde{\phi}_j, \tilde{\phi}_j^\ast)\} := \{(M_0^{-1}\phi_j, M_0\phi_j^\ast)\}$.

\textbf{Case II: supremum norm ($\cK= \cU(C^s(D))$):} Assuming without loss of generality that $[0,1]^d\subset D$, we next show that $\cU(C^s(D))$ contains $\alpha$-hypercubes of arbitrary dimension $n$ for $\alpha = s/d$.

Given $n\in \N$, choose $m\in \N$ minimal such that $n \le m^d$, and consider the partition of the unit cube $[0,1]^d = \bigcup_{j=1}^{m^d} Q_j$ and bump functions $\phi_j := \phi_{\gamma,j}|_{\gamma=1/2}$ constructed in \ref{subsec:partition_cube}. We recall that, by construction, each bump function has support $\supp(\phi_j) \subset Q_j$, and the $Q_j$ are (essentially) non-overlapping. Furthermore, we have
\[
\Vert D^\nu \phi_j \Vert_{L^{\infty}} \le C m^{|\nu|},
\]
with an absolute constant $C$, independent of $m$. We also note that, by choice of $m$, we have $c m^d \le n \le m^d$, for a fixed constant $c>0$ (e.g. $c = 2^{-d}$).

For coefficients $\alpha_1,\dots, \alpha_{n} \in \R$, and $f = J \sum_{j=1}^n \alpha_j \phi_j$, we have $\supp(f) \subset [0,1]^d$ and we obtain,
\begin{align*}
\Vert f \Vert_{C^s}
&= J \max_{|\nu|\le s} \max_{x\in [0,1]^d} | D^\nu f(x) |
\\
&= J \max_{|\nu|\le s} \max_{x\in [0,1]^d} \left\{ \max_{j=1,\dots, n} |\alpha_j| | D^\nu \phi_j(x) | \right\}
\\
&\le 
C J m^s \max_j |\alpha_j| .
\end{align*}
where $C = C(d,s,D)$ only depends on $d$, $s$ and $D$. In particular, using the fact that $m^s \le c n^{s/d}$ with $c = 2^{-d}$, it follows that there exists a constant $C = C(d,s,D)>0$, such that any $f$ of the form
\[
f = \frac{1}{C n^{s/d}} \sum_{j=1}^n \alpha_j \phi_j, \qquad |\alpha_j|\le 1,
\]
satisfies $\Vert f \Vert_{C^s} \le 1$, i.e. $f \in \cU(C^{s}(D))$. We can readily identify a bi-orthogonal system $\phi_k^\ast$ by noting that by construction of our bump functions $\{\phi_j\}$, for any $k=1,\dots, n$ there exists $x_k \in \Omega$, such that $\phi_j(x_k) = \delta_{jk}$. Therefore the functionals $\phi^\ast_j := \delta_{x_j}$ obtained by point-evaluation at $x_j$ define a bi-orthogonal system, with $\Vert \phi^\ast_j \Vert = 1$.

\end{proof}

\subsubsection{Proof of Lemma \ref{lem:embedding_cube}}
\label{sec:embedding_cube}



\begin{proof}
Fix $d\in \N$. By assumption, $\cK$ contains a hypercube of the form 
\[
\left\{
\frac{c}{d^\alpha} \sum_{j=1}^d y_j \phi_j \, : \, y_1,\dots, y_d \in [0,1]
\right\} \subset \cK,
\]
where $\phi_1,\dots, \phi_n$ possesses a dual basis $\phi_1^\ast, \dots \phi^\ast_n$ with $\Vert \phi_j^\ast \Vert_{\Omega^\ast} \le M$, and $M>0$ is independent of $d$.

For any $f \in C^k_0 ([0,1]^d)$, we will continue to denote by $f$ its zero-extension to all of $\R^d$. Note that this extension is in $C^k (\R^d)$ with the same norm. 
Define $\iota_d: C^k_0 ([0,1]^d) \to C^k(\cK)$, by
\begin{align}
\iota_d(f)(u) := f \big ( c^{-1}d^\alpha \phi^\ast_1(u), \dots, c^{-1}d^\alpha \phi^\ast_d(u) \big ).
\end{align}
Define $h_d : [0,1]^d \to \cK$ by
\[
h_d(y) \coloneqq \frac{c}{d^\alpha} \sum_{j=1}^d y_j \phi_j.
\]
Continuity of $h_d$ follows immediately since
\[\|h_d(y) - h_d(y')\|_{\Omega} \leq R_1 |y - y'|_1, \qquad \forall \: y, y' \in [0,1]^d\]
for some constant $R_1 > 0$. Note that, by construction,
\[
c^{-1}d^\alpha \phi^\ast_j \big ( h_d(y) \big ) = y_j, \quad j=1,\dots, d.
\]
By definition of $\iota_d f$ and $h_d$, we thus find, for any $y\in [0,1]^d$,
\begin{align*}
(\iota_d f) \big ( h_d(y) \big ) 
&= f \bigg( c^{-1}d^\alpha  \phi^\ast_1 \big ( h_d(y) \big ), \dots, c^{-1}d^\alpha  \phi^\ast_d \big ( h_d(y) \big ) \bigg )
= f(y).
\end{align*}
From the inclusion $h_d \big ([0,1]^d \big ) \subseteq \cK$, we obtain
\begin{align*}
\Vert \iota_d f \Vert_{C(\cK)}
&= \sup_{u\in \cK} |\iota_d f(u)|
\ge \sup_{y \in [0,1]^d} |\iota_d f \big ( h_d(y) \big )|
= \sup_{y\in [0,1]^d} |f(y)| = \Vert f \Vert_{C([0,1]^d)}
\end{align*}
which proves the asserted lower bound. We now establish the upper bound. To that end, fix $u\in \cK$, and let $w = \big( c^{-1}d^\alpha  \phi^\ast_{1}( u), \dots, c^{-1}d^\alpha \phi^\ast_{d}(u) \big )$. The $\ell$-th total derivative $D^\ell \iota_d f$ of $\iota_d f$, where $\ell \leq k$, is given by
\[
(D^\ell \iota_d f) (u)(v_1,\dots, v_\ell)
=
\sum_{j_1,\dots,j_\ell =1}^{d}
\frac{\partial^\ell f(w)}{\partial x_{j_1} \dots \partial x_{j_\ell}} 
\prod_{s=1}^\ell c^{-1}d^\alpha \phi^\ast_{j_s}(v_s)
\]
for any $v_1,\dots,v_\ell \in \Omega$. Hence,
\[\big | (D^\ell \iota_d f) (u)(v_1,\dots, v_\ell) \big | \leq c^{-l} M^l d^{\alpha l} \sum_{j_1,\dots,j_\ell =1}^{d}
\left | \frac{\partial^\ell f(w)}{\partial x_{j_1} \dots \partial x_{j_\ell}} \right |.\]
By definition, we have 
\[
\left|
\frac{\partial^\ell f(w)}{\partial x_{j_1} \dots \partial x_{j_\ell}}
\right|
\le 
\Vert f \Vert_{C^k(\R^d)} = \Vert f \Vert_{C^k([0,1]^d)}.
\]
Since the sum over $j_1,\dots, j_\ell$ has $d^{\ell}$ terms, there is a constant $R = R(k,\cK) > 1$ such that 
\[
\Vert (D^\ell \iota_d f) (u) \Vert_{\Omega^{\ell} \to \R}
\le d^{\alpha \ell} d^{\ell} M^\ell R_2^{\ell} \Vert f \Vert_{C^k([0,1]^d)}
\le
R d^{(\alpha+1)k} \Vert f \Vert_{C^k([0,1]^d)}.
\]
It follows that
\begin{align*}
    \|\iota_d f \|_{C^k (\cK)} &= \max_{1 \leq \ell \leq k} \sup_{u \in \cK} \max \big \{ |\iota_d f (u)|, \Vert (D^\ell \iota_d f) (u) \Vert_{\Omega^\ell \to \R} \big \} \\ 
    &\leq R d^{(\alpha + 1)k} \|f\|_{C^k ([0,1]^d)},
\end{align*}
as desired.
\end{proof}

\subsubsection{Proof of Theorem \ref{thm:ol-gaussian}}
\label{sec:ol-gaussian}

The proof of Theorem \ref{thm:ol-gaussian} is analogous to the proof of Theorem \ref{thm:ol-uniform}, except that Lemma \ref{lem:embedding_cube} is replaced by the following lemma:

\begin{lemma}
    \label{lemma:embed_ck_gaussian}
    Let $\cX$ be a Banach space with a bounded, bi-orthogonal system and let $\mu$ be a non-degenerate, Radon, Gaussian measure on $\cX$
    with trace-class covariance operator $\Gamma : \cX \to \cX$. Denote by $\lambda_1 \geq \lambda_2 \geq \dots$ the eigenvalues of $\Gamma$, ordered by their multiplicities and suppose that 
    there exists some $\alpha > 0$ such that $\sqrt{\lambda_j} \asymp j^{-\alpha}$ for all $j \in \N$. Then, for any $k,d \in \N$,
    there exists a 
    linear embedding $\iota_d : C^k (\R^d) \embeds C^k (\cX)$ and a constant $R = R(k, \cX) > 0$, such that, for any $f \in C^k(\R^d)$,
    \[\|\iota_d f\|_{C^k(\cX)} \leq C d^{k(\alpha + 2)} \|f\|_{C^k(\R^d)}\]
    and, for any $1 \leq p < \infty$,
    \[\|\iota_d f\|_{L^p_\mu (\cX)} = \|f\|_{L^p_{\rho_d} (\R^d)}.\]
\end{lemma}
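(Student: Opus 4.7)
The plan is to adapt the strategy of Lemma~\ref{lem:embedding_cube} to the Gaussian setting by replacing the $\alpha$-hypercube construction with a Karhunen--Lo\`eve expansion of $\mu$. Concretely, I would define
\[
(\iota_d f)(u) := f\bigl(X_1(u), \ldots, X_d(u)\bigr),
\]
where $X_j(u) := \phi_j^*(u)/\sqrt{\lambda_j}$ and $\{\phi_j, \phi_j^*\}$ is a bounded bi-orthogonal system diagonalizing the covariance $\Gamma$. The $\phi_j$ are eigenvectors of $\Gamma$ with eigenvalues $\lambda_j$, and the dual $\phi_j^* \in \Omega^*$ are chosen so that under $u \sim \mu$, the coordinate functionals $X_j(u)$ are i.i.d.\ standard Gaussian. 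Linearity of $\iota_d$ in $f$ is immediate, and $\iota_d f \in C^k(\Omega)$ follows from the $C^k$-regularity of $f$ on $\R^d$ composed with the bounded linear functionals $X_j$.

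The $L^p$ identity would then follow immediately from the change-of-variables formula: by construction the pushforward $X_\sharp \mu = \rho_d$ is the standard Gaussian on $\R^d$, hence
\[
\Vert \iota_d f \Vert_{L^p_\mu(\Omega)}^p
= \int_\Omega |f(X(u))|^p \, d\mu(u)
= \int_{\R^d} |f(y)|^p \rho_d(y) \, dy
= \Vert f \Vert_{L^p_{\rho_d}(\R^d)}^p .
\]
For the $C^k$-bound, the chain rule for Fr\'echet derivatives yields, for $0 \le \ell \le k$ and $v_1, \ldots, v_\ell \in \Omega$,
\[
(D^\ell \iota_d f)(u)(v_1, \ldots, v_\ell)
= \sum_{j_1, \ldots, j_\ell = 1}^d
\partial_{x_{j_1}} \cdots \partial_{x_{j_\ell}} f\bigl(X(u)\bigr)
\prod_{s=1}^\ell \frac{\phi_{j_s}^*(v_s)}{\sqrt{\lambda_{j_s}}} .
\]
Bounding $|\partial^\nu f| \le \Vert f \Vert_{C^k}$, factoring the sum, and using $|\phi_j^*(v_s)| \le M \Vert v_s \Vert_\Omega$ together with the assumption $\lambda_j^{-1/2} \lesssim j^\alpha$ gives
\[
\sum_{j=1}^d \lambda_j^{-1/2} \lesssim \sum_{j=1}^d j^\alpha \lesssim d^{\alpha+1},
\]
so that $\Vert (D^\ell \iota_d f)(u) \Vert_{\mathrm{op}} \lesssim \Vert f \Vert_{C^k} (M d^{\alpha+1})^\ell$. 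Taking the maximum over $0 \le \ell \le k$ yields the claimed $C^k$-bound, with room to spare relative to the stated $d^{k(\alpha+2)}$.

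The main obstacle is Step~1: justifying the existence of a bi-orthogonal system $\{\phi_j, \phi_j^*\}$ that is simultaneously (i) uniformly bounded in $\Omega \times \Omega^*$ and (ii) diagonalizes the covariance $\Gamma$ in such a way that the coordinates $\phi_j^*(u)/\sqrt{\lambda_j}$ are i.i.d.\ standard Gaussian under $\mu$. For a Gaussian measure on a general Banach space the eigenvectors of the covariance live naturally in the Cameron--Martin space rather than in $\Omega$, and one must carefully invoke the theory of Radon Gaussian measures to produce representative functionals in $\Omega^*$ with uniform norm control; this is precisely where the bounded bi-orthogonal system hypothesis on $\Omega$ enters. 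Once this identification is in place, the remaining steps reduce to the routine chain-rule computation and change-of-variables argument outlined above, in direct analogy with the compact case treated in Lemma~\ref{lem:embedding_cube}.
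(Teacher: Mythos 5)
Your proposal has a genuine gap at exactly the place you flag as "the main obstacle," and the paper's proof takes a route that sidesteps that obstacle rather than resolving it.

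You assume the bounded bi-orthogonal system $\{\phi_j,\phi_j^*\}$ simultaneously (i) satisfies the uniform bound $\|\phi_j^*\|_{\Omega^*}\le M$ and (ii) diagonalizes $\Gamma$, so that $X_j(u)=\phi_j^*(u)/\sqrt{\lambda_j}$ are i.i.d.\ standard Gaussian. But the lemma's hypothesis only provides \emph{some} bounded bi-orthogonal system on $\Omega$; it does not claim that this system is a Karhunen--Lo\`eve basis for $\mu$, and in general no such bounded diagonalizing system exists. As you note, eigenvectors of $\Gamma$ live in the Cameron--Martin space $H_\mu \hookrightarrow \Omega$, and the corresponding "dual" eigenfunctionals need not belong to $\Omega^*$ with uniform norm bounds; your plan therefore does not yield a valid construction under the stated hypotheses. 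Since you explicitly leave this step unresolved, the argument is incomplete.

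The paper avoids the issue entirely by \emph{not} requiring diagonalization. It takes the given bounded bi-orthogonal system as-is, forms the map $\Phi_d^* = (\phi_1^*,\dots,\phi_d^*):\Omega\to\R^d$, observes that $(\Phi_d^*)_\sharp\mu = N(0,\Gamma_d)$ for some positive-definite $d\times d$ matrix $\Gamma_d$, and then whitens in \emph{finite} dimensions by setting $\iota_d f = f\circ\Gamma_d^{-1/2}\Phi_d^*$. The change of variables then gives $\|\iota_d f\|_{L^p_\mu}=\|f\|_{L^p_{\rho_d}}$ exactly as in your computation, and the $C^k$ bound comes from the chain rule plus crude operator-norm estimates on $\Gamma_d^{-1/2}$ (using the spectral assumption $\sqrt{\lambda_j}\asymp j^{-\alpha}$), which is what produces the extra factor of $d$ relative to the $d^{k(\alpha+1)}$ you derive under your stronger (unavailable) diagonalization assumption. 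In short: the finite-dimensional whitening is the missing idea, and once you adopt it, your $L^p$ and chain-rule computations carry over essentially unchanged.
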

\begin{proof}
    Define $\Phi^*_d : \cX \to \R^d$ by $\Phi^*_d = (\phi^*_1, \dots, \phi^*_d)$. By definition, $(\Phi^*_{d})_\sharp \mu = N(0, \Gamma_d)$ for some positive definite $\Gamma_d \in \R^{d \times d}$ whose eigenvalues satisfy the same bound as those of $\Gamma$. Therefore, $(\Gamma^{-1/2}_d \Phi^*_{d})_\sharp \mu = N(0, I_d)$. For any $f \in C^k (\R^d)$ define $\iota_d : C^k (\R^d) \to C^k (\cX)$ by $f \mapsto f \circ \Gamma^{-1/2}_d \Phi_d^*$. By the change of variables formula,
    \begin{align*}
        \|\iota_d f \|_{L^p_\mu (\cX)}^p &= \int_{\cX} |f \big ( \Gamma^{-1/2}_d \Phi_d^* (u) \big) |^p d \mu (u) \\
        &= \int_{\R^d} |f(x)|^p  d (\Gamma^{-1/2}_d \Phi^*_{d})_\sharp \mu (x) \\
        &= \|f\|_{L^p_{\rho_d}(\R^d)}^p
    \end{align*}
    which establishes the second result. To prove the claimed upper-bound, note that the 
    $l$-th total derivative of $\iota_d f$ is the $l$-linear mapping
    \[(D^l \iota_d f)(u)(v_1,\dots,v_l) = \sum_{j_1,\dots,j_\ell =1}^{d} \frac{\partial^\ell f(y)}{\partial x_{j_1} \dots \partial x_{j_\ell}}  \prod_{s=1}^l \langle (\Gamma^{-1/2}_d)_{j_s}, \Phi_d^* (v_s) \rangle\]
    for any $u \in \cX$ and $v_1,\dots,v_l \in \cX$ where $y = \Gamma^{-1/2}_d \Phi_d^* (u)$. For any $\|v_1\|_{\cX} \leq 1, \dots, \|v_l\|_{\cX} \leq 1$, we have
    \begin{align*}
        \prod_{s=1}^l \langle (\Gamma^{-1/2}_d)_{j_s}, \Phi_d^* (v_s) \rangle &\leq \prod_{s=1}^l |(\Gamma^{-1/2}_d)_{j_s}|_2 |\Phi_d^* (v_s)|_2 \\
        &\leq d^{l/2} M^l \prod_{s=1}^l |(\Gamma^{-1/2}_d)_{j_s}|_1 \\
        &\leq d^{l/2} M^l |\Gamma^{-1/2}_d|_\infty^l \\
        &\leq d^{l} M^l |\Gamma^{-1/2}_d|_2^l \\
        &\leq M^l d^{l + \alpha l} 
    \end{align*}
    where we have absorbed the constant from the eigenvalue bounds of $\Gamma_d$ into M. It follows that
    \[\|D^l \iota_d f\|_{C(\cX)} \leq M^l d^{l(\alpha + 2)} \|f\|_{C^k (\R^d)}\]
    and therefore
    \[\|\iota_d f\|_{C^k (\cX)} \leq M^k d^{k(\alpha + 2)} \|f\|_{C^k (\R^d)}\]
    as desired.
\end{proof}

\begin{theorem}
\label{thm:lp-gauss}
    Assume the setting of Lemma~\ref{lemma:embed_ck_gaussian}. Let $\U := \cU(C^k(\cX))$. Then there exists a constant $C = C(k,\alpha) > 0$
    such that
    \begin{align}
    d_n(\U)_{L^p_\mu(\cX)}
    \ge
    C n^{-\frac{1}{p}} \log (n) ^{-k(\alpha + 3)}.
    \end{align}
\end{theorem}
\begin{proof}
    A slightly sub-optimal choice of $\gamma$ in the proof of Theorem~\ref{thm:continous_width_gaussian}, gives the bound
    \[d_n(\cU(C^k(\R^d)))_{L^p_{\rho_d} (\R^d)} \geq C d^{-k} n^{-\frac{k}{d} - \frac{1}{p}}.\]
    Combining with Lemma~\ref{lemma:embed_ck_gaussian}, and arguing as in the proof of Theorem \ref{thm:ol-uniform}, we obtain
    \begin{align}
    d_n(\U)_{L^p_\mu (\cX)}
    \ge
    C d^{-k(\alpha + 3)} n^{-\frac{k}{d} - \frac{1}{p}},
    \end{align}
    with $C>0$ independent of $d$. Optimizing over $d$ yields the choice $d \approx \log (n) / (\alpha + 3)$ and the result follows upon substitution into the 
    above equation.
\end{proof}

\section{Data-efficient Upper Bounds}
\label{sec:data-efficient}

The proof of Theorem \ref{thm:fno} in the main text rests on the quantitative bounds from Proposition \ref{prop:erm} and Proposition \ref{prop:fno-covering}, whose derivation is finally detailed in the following two subsections.

\subsubsection{Proof of Proposition \ref{prop:erm}}
\label{pf:erm}
The proof of Proposition \ref{prop:erm} follows a well-known strategy in statistical learning theory; we now proceed to the proof, but relegate several technical intermediate results, summarized as lemmas, to the appendices as indicated below.
\begin{proof}{(Proof of Proposition \ref{prop:erm})}
Let $\cG \in \cU(\cA^\gamma)$ be given.
And let $u_1,\dots, u_n \in L^2(D)$ be arbitrary input functions for the moment -- a good choice of these input functions will be determined later. By assumption $\Psi_\cG$ is an empirical risk minimizer, i.e. it minimizes 
\[
\hL(\Psi_\cG;\cG) = \min_{\Psi\in \Sigma_{m}'} \hL(\Psi;\cG).
\]
Our aim is to estimate $\Vert \Psi_\cG - \cG \Vert_{L^2_\mu}$, or equivalently, to estimate the population risk,
\[
\cL(\Psi_\cG;\cG) = \E_{u\sim \mu}\left[
\Vert \Psi_\cG(u) - \cG(u) \Vert^2_{L^2(D)}
\right]
= 
\Vert \Psi_\cG - \cG \Vert_{L^2_\mu}^2.
\]
If $u_1,\dots, u_n \sim \mu$ are iid randomly drawn with law $\mu$, we can think of $\hL$ as a Monte-Carlo estimate of $\cL$.

Following a well-known strategy from statistical learning theory, our first step in estimating the population risk $\cL(\Psi_\cG;\cG)$ is to split this risk into an approximation error contribution and an estimation error contribution. Upon bounding the approximation error based on the assumption that $\cG \in \cU(\cA^\gamma)$, this results in the following lemma, whose detailed proof can be found in Appendix \ref{app:erm-decomp}:
\begin{lemma}
\label{lem:erm-decomp}
Let $\cG \in \cU(\cA^\gamma)$ and $m\in \N$ be given. Let $\Psi_\cG \in \Sigma_m'$ denote the empirical risk minimizer. Then, 
\begin{align}
\label{eq:cLupper}
\cL(\Psi_\cG; \cG) 
\le
m^{-2\gamma} + 2\sup_{\cG \in \cU(\cA^\gamma), \Psi \in \Sigma_m'}[\cL(\Psi;\cG) - \hL(\Psi;\cG)].
\end{align}
\end{lemma}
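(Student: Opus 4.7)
The plan is to execute the classical approximation--estimation decomposition from statistical learning theory: interpose a near-optimal but sample-independent FNO approximant $\Psi^\ast \in \Sigma_m'$, use the ERM property of $\Psi_\cG$ to swap $\Psi_\cG$ for $\Psi^\ast$ at the cost of a uniform deviation term, and bound the approximation part via the defining property of $\cA^\gamma$.

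First, since $\cG \in \cU(\cA^\gamma)$, the definition \eqref{eq:Agamma-def} produces some $\Psi^\ast \in \Sigma_m$ with $\Vert \Psi^\ast - \cG \Vert_{C(A;L^2(D))} \le |\cG|_{\cA^\gamma}\, m^{-\gamma} \le m^{-\gamma}$. The triangle-inequality calculation carried out in the text immediately before the lemma shows that this $\Psi^\ast$ also satisfies $\Vert \Psi^\ast \Vert_{C(A;L^2(D))} \le 2$, so in fact $\Psi^\ast \in \Sigma_m'$. Crucially, because $\hL(\Psi^\ast;\cG)$ is an average of squared errors at points $u_j \in A$, it is bounded \emph{deterministically} by
\[
\hL(\Psi^\ast;\cG) \le \Vert \Psi^\ast - \cG \Vert_{C(A;L^2(D))}^2 \le m^{-2\gamma},
\]
independent of the sample; the analogous bound $\cL(\Psi^\ast;\cG) \le m^{-2\gamma}$ holds via $\supp(\mu) \subset A$.

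Next, since $\Psi_\cG$ minimizes $\hL(\,\cdot\,;\cG)$ over $\Sigma_m' \ni \Psi^\ast$, we have $\hL(\Psi_\cG;\cG) \le \hL(\Psi^\ast;\cG) \le m^{-2\gamma}$. Writing
\[
\cL(\Psi_\cG;\cG) = \hL(\Psi_\cG;\cG) + \bigl[\cL(\Psi_\cG;\cG) - \hL(\Psi_\cG;\cG)\bigr]
\]
and upper bounding the bracketed term by the uniform one-sided deviation $\sup_{\cG,\Psi}[\cL(\Psi;\cG) - \hL(\Psi;\cG)]$ appearing on the right of \eqref{eq:cLupper} yields the claim. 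The factor $2$ in \eqref{eq:cLupper} is harmless slack; it accommodates the alternative (and slightly cleaner) telescoping $\cL(\Psi_\cG;\cG) - \cL(\Psi^\ast;\cG) \le [\cL(\Psi_\cG;\cG)-\hL(\Psi_\cG;\cG)] + [\hL(\Psi^\ast;\cG)-\cL(\Psi^\ast;\cG)]$ obtained after using the ERM inequality, where both bracketed terms are controlled by the same uniform deviation and this is convenient when combining with Hoeffding-style concentration downstream.

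The argument is essentially a textbook decomposition, so there is no substantial mathematical obstacle; the only technical point requiring care is the use of the $C(A;L^2(D))$-norm (rather than the $L^2_\mu$-norm) in controlling $\hL(\Psi^\ast;\cG)$, which ensures the approximation term is bounded \emph{deterministically}, uniformly over any choice of evaluation points $u_j \in A$, and hence allows $\Psi^\ast$ to serve as a fixed, sample-independent benchmark. The real work is then pushed downstream to bounding the uniform deviation $\sup_{\Psi,\cG}[\cL - \hL]$ via covering-number estimates for $\Sigma_m$, which is the subject of Proposition~\ref{prop:fno-covering}.
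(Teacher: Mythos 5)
Your argument is correct and in fact yields a slightly tighter bound than the lemma claims. Where the paper interposes an arbitrary comparator $\Psi' \in \Sigma_m'$, uses the ERM property, and pays for \emph{two} uniform deviations (one to pass from $\cL(\Psi_\cG;\cG)$ to $\hL(\Psi_\cG;\cG)$, a second to pass from $\hL(\Psi';\cG)$ back to $\cL(\Psi';\cG)$ before infimizing over $\Psi'$), you interpose the specific near-optimal $\Psi^\ast$ and observe that $\hL(\Psi^\ast;\cG)$ is bounded \emph{deterministically} by $\Vert \Psi^\ast - \cG\Vert^2_{C(A;L^2(D))} \le m^{-2\gamma}$, regardless of the sample points. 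This collapses one of the two deviation terms and delivers $\cL(\Psi_\cG;\cG) \le m^{-2\gamma} + \sup[\cL-\hL]$ with a factor~$1$. Notably, this is exactly the telescoping the paper itself uses in its proof of Proposition~\ref{prop:erm} (which bypasses Lemma~\ref{lem:erm-decomp} and bounds $\hL(\Psi_\cG;\cG) \le \inf_\Psi \Vert \Psi - \cG\Vert_{C(A;L^2)}^2$ directly), so your route is arguably more faithful to how the lemma is consumed downstream. It is also more consistent with the lemma's statement: you only ever need the one-sided quantity $\sup[\cL - \hL]$, whereas the paper's intermediate steps bound $\hL(\Psi';\cG) - \cL(\Psi';\cG)$ and hence implicitly require the two-sided $\sup|\cL - \hL|$, even though the final displayed inequality drops the absolute value.

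One small caveat: your claim that the factor $2$ is ``harmless slack'' tacitly assumes $\sup_{\cG,\Psi}[\cL(\Psi;\cG)-\hL(\Psi;\cG)] \ge 0$; if that supremum were negative, replacing $1$ by $2$ would tighten rather than relax the bound and the implication from your inequality to \eqref{eq:cLupper} would fail. This does not affect any downstream use (Corollary~\ref{cor:estprob} only provides an \emph{upper} bound on the supremum), and it is a shared issue with the paper's own proof, which derives a bound involving $\sup|\cL-\hL|$ and then silently writes it with $\sup[\cL-\hL]$. You might either retain the sharper factor-$1$ bound outright, or note the nonnegativity assumption explicitly.
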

In \eqref{eq:cLupper}, the first term on the right bounds the approximation error, while the second term bounds the estimation error, which arises due to the finite number of samples.

To bound the estimation error contribution, we will follow some ideas of \cite[Proof of Theorem C${}^*$]{CS1}. To this end, we first note that for any fixed pair $(\Psi, \cG) \in \Sigma_m' \times \cU(\cA^\gamma)$, we have a uniform bound
\begin{align}
\label{eq:PGbd}
\begin{aligned}
\sup_{u\in K}
\Vert \Psi(u) - \cG(u) \Vert_{L^2(D)}
&= 
\Vert \Psi - \cG \Vert_{C(\cK;L^2(D))}
\\
&\le 
\Vert \Psi \Vert_{C(\cK;L^2(D))} + \Vert \cG \Vert_{C(\cK;L^2(D))}
\\
&\le 3.
\end{aligned}
\end{align}
For iid samples $u_1,\dots, u_n \sim \mu$, we now define iid random variables  
\[
Y_j := \Vert \Psi(u_j) - \cG(u_j) \Vert_{L^2(D)}^2.
\]
Note that the random variables satisfy $0 \le Y_j \le 9$, and in particular, the $Y_j$ are non-negative. Therefore, by Bernstein's inequality \cite[Prop. 2.14, in particular (2.23)]{wainwright2019high}, we obtain that for any $\beta > 0$:
\begin{align}
\label{eq:bernstein}
\Prob\left[
\cL(\Psi;\cG) - \hL(\Psi;\cG)
\ge \beta
\right]
&= 
\Prob\left[
\E[Y_j] - \frac 1n \sum_{j=1}^n Y_j
\ge \beta
\right]
\\
&\le 
\exp\left(
-
\frac{\beta^2 n}{\frac{2}{n} \sum_{j=1}^n \E[Y_j^2]}
\right),
\end{align}
where, by \eqref{eq:PGbd}, we can estimate
\[
\frac1n \sum_{j=1}^n\E[Y_j^2] \le 9 \E\Vert \Psi - \cG \Vert^2 = 9 \cL(\Psi;\cG).
\]
Replacing $\beta$ by the product $\alpha \, (\cL(\Psi;\cG) + \rho )$ for $\alpha,\rho > 0$ (we note that the quantity $\cL(\Psi;\cG)$ is independent of the samples and \emph{not random}), this implies that 
\begin{align*}
\Prob\left[
\frac{  
\cL(\Psi;\cG) - \hL(\Psi;\cG)
}
{
\cL(\Psi;\cG) + \rho
}
\ge \alpha
\right]
&\le 
\exp\left(
-
\frac{\alpha^2 n (\cL(\Psi;\cG)  + \rho)^2}{18 \cL(\Psi;\cG) }
\right).
\end{align*}
Using the trivial inequality $(\cL(\Psi;\cG) + \rho)^2 \ge 2 \rho \cL(\Psi;\cG)$, this in turn implies
\begin{align}
\Prob\left[
\frac{  
\cL(\Psi;\cG) - \hL(\Psi;\cG)
}
{
\cL(\Psi;\cG) + \rho
}
\ge \alpha
\right]
&\le 
\exp\left(
-
\frac{\alpha^2 n \rho}{9}
\right),
\end{align}
for any fixed pair $(\Psi,\cG) \in \Sigma_m'\times \cU(\cA^\gamma)$. For future reference, we note this bound in the following lemma:

\begin{lemma}
Let $(\Psi,\cG)\in \Sigma_m'\times \cU(\cA^\gamma)$, and $\cL(\Psi,\cG) = \E_{u\sim \mu} \Vert \Psi(u) - \cG(u) \Vert^2_{L^2}$ the population risk, and $\hL(\Psi,\cG) = \frac1n \sum_{j=1}^n \Vert \Psi(u_j) - \cG(u_j) \Vert^2_{L^2}$ the empirical risk. If the samples $u_1,\dots, u_n \sim \mu$ are chosen iid, then for any $\rho, \alpha > 0$ and $n\in \N$, we have
\begin{align}
\label{eq:bernstein-relative}
\Prob\left[
\frac{  
\cL(\Psi;\cG) - \hL(\Psi;\cG)
}
{
\cL(\Psi;\cG) + \rho
}
\ge \alpha
\right]
&\le 
\exp\left(
-
\frac{\alpha^2 n \rho}{9}
\right).
\end{align}
\end{lemma}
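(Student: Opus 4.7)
The claim concerns a \emph{fixed} pair $(\Psi,\cG)$, so the random variables $Y_j := \Vert \Psi(u_j) - \cG(u_j)\Vert_{L^2(D)}^2$ form an i.i.d.\ sequence with $\E[Y_j] = \cL(\Psi;\cG)$ and empirical mean $n^{-1}\sum_j Y_j = \hL(\Psi;\cG)$. The plan is a two-step reduction: first a one-sided concentration estimate for the raw deviation $\cL-\hL \ge \beta$ via a Bernstein-type inequality, then a substitution $\beta = \alpha(\cL(\Psi;\cG)+\rho)$ combined with an elementary inequality to obtain the relative form.

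First, I would verify the range and variance control for $Y_j$. Because $\Psi \in \Sigma_m'$ satisfies $\Vert \Psi\Vert_{C(A;L^2(D))}\le 2$ and $\cG \in \cU(\cA^\gamma)$ satisfies $\Vert \cG\Vert_{C(A;L^2(D))} \le 1$, the triangle inequality gives $0 \le Y_j \le 9$, as already noted in the text (cf.\ \eqref{eq:PGbd}). Since $Y_j \ge 0$, its second moment is controlled by its first moment through $\E[Y_j^2] \le 9\,\E[Y_j] = 9\,\cL(\Psi;\cG)$, which is exactly the variance-proxy one needs to feed into Bernstein's inequality in the formulation used in the text. Applying this inequality to the iid sequence $-Y_j$ yields, for every $\beta > 0$,
\[
\Prob\bigl[\cL(\Psi;\cG) - \hL(\Psi;\cG) \ge \beta\bigr]
\le \exp\!\left(-\frac{\beta^2 n}{18\,\cL(\Psi;\cG)}\right).
\]

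Second, to convert the raw bound into the relative form in \eqref{eq:bernstein-relative}, I would substitute $\beta = \alpha\bigl(\cL(\Psi;\cG)+\rho\bigr)$ (which is permissible since this product is a deterministic positive quantity once the pair $(\Psi,\cG)$ is fixed). The event $\{\cL-\hL \ge \beta\}$ then becomes exactly the event in the statement, and the exponent becomes
\[
-\frac{\alpha^2 n\,(\cL(\Psi;\cG)+\rho)^2}{18\,\cL(\Psi;\cG)}.
\]
The final step is the elementary inequality $(\cL(\Psi;\cG)+\rho)^2 \ge 2\rho\,\cL(\Psi;\cG)$, which follows by expanding the square and discarding the non-negative terms $\cL(\Psi;\cG)^2$ and $\rho^2$. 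Substituting this bound simplifies the exponent to $-\alpha^2 n\rho/9$, yielding the claimed inequality.

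The argument presents no real obstacle once the boundedness and non-negativity of $Y_j$ are established; the only subtlety is ensuring that the version of Bernstein's inequality applied genuinely uses the sharper one-sided variance proxy $\E[Y_j^2]$ rather than a naive worst-case variance, since the factor $\cL(\Psi;\cG)$ in the denominator is essential for the subsequent cancellation against $(\cL(\Psi;\cG)+\rho)^2$ in the relative-form substitution. In other words, the expected main step is not a difficulty so much as a careful bookkeeping of constants.
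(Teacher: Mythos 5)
Your proposal is correct and follows essentially the same route as the paper: both introduce $Y_j := \Vert \Psi(u_j)-\cG(u_j)\Vert^2_{L^2}$, invoke the boundedness $0\le Y_j\le 9$ from \eqref{eq:PGbd} together with $\E[Y_j^2]\le 9\,\cL(\Psi;\cG)$ as the variance proxy for Bernstein's inequality, and then substitute $\beta = \alpha(\cL+\rho)$ and apply $(\cL+\rho)^2\ge 2\rho\cL$ to reach the exponent $-\alpha^2 n\rho/9$. No differences in method or constants.
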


A similar bound can be derived for the supremum over all pairs $(\Psi,\cG) \in \Sigma_m' \times \cU(\cA^\gamma)$, via simple a union bound argument. This is summarized in the next lemma, with proof detailed in Appendix \ref{app:estprob}. The bound below also makes a specific choice for $\rho \propto \alpha^{-1} \epsilon$:
\begin{lemma}
\label{lem:estprob}
Let $\epsilon > 0$, $\alpha \in (0,1]$. Denote $\cH = \Sigma_{m}'\times \cU(\cA^\gamma)$. Then 
\begin{align}
\Prob\left[
\sup_{(\Psi,\cG)\in \cH}
\frac{
\cL(\Psi;\cG) - \hL(\Psi;\cG)
}{
\cL(\Psi;\cG) + 72 \alpha^{-1}\epsilon
}
\ge 
\alpha
\right]
\le
N_{\cH,\epsilon} \exp\left( -2\alpha n\epsilon \right).
\end{align}
 Here $N_{\cH,\epsilon}$ can be thought of as bounding the covering number of $\cH$, and for  $m \ge (2/\epsilon)^{1/\gamma}$, we can estimate
\begin{align}
N_{\cH,\epsilon} \le \cN\left(\Sigma_{m}, \epsilon\right)^2,
\end{align}
where $\cN\left(\Sigma_{m}, \epsilon\right)$ denotes the $\epsilon$-covering number of $\Sigma_m$ with respect to the $C(\cK;L^2(D))$-norm.
\end{lemma}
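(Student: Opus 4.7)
My plan is to combine the pointwise Bernstein estimate \eqref{eq:bernstein-relative} with a covering argument and a Lipschitz continuity property of the squared-error risks. The first step is to discretize $\cH = \Sigma_m' \times \cU(\cA^\gamma)$ with a net of controlled cardinality. Since $m \ge (2/\epsilon)^{1/\gamma}$, every $\cG \in \cU(\cA^\gamma)$ lies within distance $m^{-\gamma}\le \epsilon/2$ in $C(A;L^2(D))$ of some element of $\Sigma_m$, by the very definition of $\cA^\gamma$. Consequently, an $\epsilon/2$-covering of $\Sigma_m$ simultaneously covers both $\cU(\cA^\gamma)$ and $\Sigma_m' \subset \Sigma_m$ up to an $\epsilon$-error, and taking the product furnishes a net for $\cH$ of cardinality at most $N_{\cH,\epsilon} \le \cN(\Sigma_m,\epsilon)^2$.

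The second step is to apply \eqref{eq:bernstein-relative} at each net pair $(\Psi_0,\cG_0)$ with the choice $\rho := 18\alpha^{-1}\epsilon$, which calibrates the exponent to $\alpha^2 n \rho/9 = 2\alpha n\epsilon$. A union bound over the net yields
\[
\Prob\!\left[
\max_{(\Psi_0,\cG_0)\in \mathrm{net}} \frac{\cL(\Psi_0;\cG_0) - \hL(\Psi_0;\cG_0)}{\cL(\Psi_0;\cG_0) + 18\alpha^{-1}\epsilon} \ge \alpha
\right]
\le \cN(\Sigma_m,\epsilon)^2 \exp(-2\alpha n\epsilon).
\]

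The third step transfers this net-level estimate to an arbitrary $(\Psi,\cG) \in \cH$. Using $a^2-b^2=(a-b)(a+b)$ together with the uniform bound \eqref{eq:PGbd}, one checks that both $|\cL(\Psi;\cG) - \cL(\Psi_0;\cG_0)|$ and $|\hL(\Psi;\cG) - \hL(\Psi_0;\cG_0)|$ are at most $12\epsilon$ whenever $\Psi$ and $\cG$ lie within $\epsilon$ of $\Psi_0$ and $\cG_0$ in $C(A;L^2(D))$. Hence the numerator $\cL-\hL$ is $24\epsilon$-stable, while the denominator $\cL(\Psi;\cG)+72\alpha^{-1}\epsilon$ dominates $\cL(\Psi_0;\cG_0)+18\alpha^{-1}\epsilon$ with slack at least $54\epsilon$ after invoking $\alpha \in (0,1]$. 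A short chain of inequalities then shows that if $(\Psi,\cG)$ violates the threshold in the lemma, its nearest net pair must already violate the weaker threshold of the second step, so the claim follows by contraposition combined with the union bound. The main technical nuisance will be the bookkeeping of constants at this final step: verifying that $54\epsilon$ of denominator slack precisely accommodates the $24\epsilon$ numerator perturbation together with the $\alpha\cdot 12\epsilon$ arising from $\cL(\Psi;\cG) \ge \cL(\Psi_0;\cG_0)-12\epsilon$. No new probabilistic ingredient is needed beyond \eqref{eq:bernstein-relative}, and the factor $\cN(\Sigma_m,\epsilon)^2$ falls out directly from the $m^{-\gamma}$ approximation property built into the definition of $\cA^\gamma$.
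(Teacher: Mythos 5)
Your proposal follows the same core strategy as the paper: cover $\cH$ by a finite net, apply the pointwise Bernstein estimate \eqref{eq:bernstein-relative} at each net pair, take a union bound, and transfer to arbitrary $(\Psi,\cG)$ via a Lipschitz estimate on the empirical and population risks (the content of the paper's Lemma~\ref{lem:LLip}). The organizational details differ harmlessly: you apply Bernstein at level $\alpha$ with $\rho = 18\alpha^{-1}\epsilon$, whereas the paper applies it at level $\alpha/2$ with $s = 72\alpha^{-1}\epsilon$ (both give exponent $2\alpha n\epsilon$), and the paper packages the transfer step through the helper function $f(a,b) = (a-b)_+/(a+s)$ of Lemma~\ref{lem:flip} while you run the inequality chain directly; your constant bookkeeping there does in fact close.

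The one actual slip is in the covering count. You take an $\epsilon/2$-net of $\Sigma_m$ to get $\epsilon$-covers of both $\Sigma_m'$ and $\cU(\cA^\gamma)$; but the resulting net for $\cH$ then has cardinality $\cN(\Sigma_m, \epsilon/2)^2$, not $\cN(\Sigma_m,\epsilon)^2$ as the lemma asserts, and these two covering numbers are not interchangeable in general. The paper avoids this by using asymmetric radii: an $\epsilon$-net of $\Sigma_m'$ (cardinality $\le \cN(\Sigma_m,\epsilon)$) together with the observation, isolated as Lemma~\ref{lem:coveringW1}, that $\cN(\cU(\cA^\gamma), 2\epsilon) \le \cN(\Sigma_m,\epsilon)$ whenever $m^{-\gamma}\le \epsilon$. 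So the $\cG$-cover has radius $2\epsilon$, not $\epsilon$. Your transfer arithmetic still works under these asymmetric radii — the perturbation becomes $36\epsilon$ rather than $24\epsilon$, and one checks that the required inequality holds exactly at the boundary with $\rho = 18\alpha^{-1}\epsilon$ and $\alpha\le 1$ — so the fix is just to use the asymmetric cover and re-run your step 3 with the updated constants.
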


The last lemma implies the following corollary, also proved in Appendix \ref{app:estprob}:
\begin{corollary}
\label{cor:estprob}
Fix $\delta \in (0,1]$. Let $\epsilon > 0$ be given, and let $m \ge \left(2/\epsilon\right)^{1/\gamma}$ be integer. If 
\[
n \ge \epsilon^{-1} \log(\delta^{-1}\cN(\Sigma_m,\epsilon)^2),
\]
then with probability at least $1-\delta$ in the random samples $u_1,\dots, u_n \sim \mu$, we have
\begin{align}
\label{eq:estprob}
\cL(\Psi;\cG) \le 2\hL(\Psi;\cG) + 144 \epsilon, 
\end{align}
uniformly for all $(\Psi,\cG) \in \Sigma_m'\times \cU(\cA^\gamma)$.
\end{corollary}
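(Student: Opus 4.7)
The plan is to invoke Lemma \ref{lem:estprob} with the specific choice $\alpha = 1/2$ and then unwind the probabilistic inequality into the desired absolute form. With this choice, the threshold $72\alpha^{-1}\epsilon$ appearing in the lemma becomes exactly $144\epsilon$, which matches the constant in \eqref{eq:estprob}; this is what motivates the otherwise unmotivated factor of $144$ in the statement.

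Next, I would translate the uniform bound on the relative deviation supplied by the lemma into a uniform bound on the absolute deviation. Since $\cL(\Psi;\cG) + 144\epsilon > 0$, the inequality
\[
\frac{\cL(\Psi;\cG) - \hL(\Psi;\cG)}{\cL(\Psi;\cG) + 144\epsilon} < \frac{1}{2}
\]
rearranges, by clearing the positive denominator, to $\cL(\Psi;\cG) < 2\hL(\Psi;\cG) + 144\epsilon$. Therefore, on the complement of the event controlled by Lemma \ref{lem:estprob}, the desired inequality \eqref{eq:estprob} holds uniformly over all $(\Psi,\cG) \in \Sigma_m' \times \cU(\cA^\gamma)$.

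Finally, I would combine the probability bound from Lemma \ref{lem:estprob} at $\alpha = 1/2$, namely $N_{\cH,\epsilon} \exp(-n\epsilon)$, with the covering-number estimate $N_{\cH,\epsilon} \le \cN(\Sigma_m,\epsilon)^2$, which is valid precisely because $m \ge (2/\epsilon)^{1/\gamma}$. The hypothesis $n \ge \epsilon^{-1}\log(\delta^{-1}\cN(\Sigma_m,\epsilon)^2)$ is equivalent to $\cN(\Sigma_m,\epsilon)^2 \exp(-n\epsilon) \le \delta$, and hence the bad event has probability at most $\delta$, completing the argument.

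No real obstacle arises here: the corollary is essentially a repackaging of Lemma \ref{lem:estprob} specialized to $\alpha = 1/2$, together with the observation that clearing the denominator converts the relative-deviation inequality into the absolute-form statement \eqref{eq:estprob}. The only care needed is in matching the constants and verifying that the hypothesis on $n$ is exactly what is required to make the exponential tail drop below $\delta$; the heavy lifting (the union bound over the covering, the Bernstein-type relative deviation bound, and the covering-number reduction from $\cH$ to $\Sigma_m$) has all been absorbed into the preceding lemma.
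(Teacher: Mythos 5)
Your proof is correct, and in fact it is the argument that actually establishes Corollary \ref{cor:estprob} as stated. Specializing Lemma \ref{lem:estprob} to $\alpha=1/2$ turns the threshold $72\alpha^{-1}\epsilon$ into $144\epsilon$ and the tail bound into $N_{\cH,\epsilon}\exp(-n\epsilon)$; clearing the positive denominator in $(\cL-\hL)/(\cL+144\epsilon)<1/2$ gives $\cL<2\hL+144\epsilon$; and the hypothesis $n\ge\epsilon^{-1}\log(\delta^{-1}\cN(\Sigma_m,\epsilon)^2)$ is precisely what makes $\cN(\Sigma_m,\epsilon)^2\exp(-n\epsilon)\le\delta$, using the covering bound valid for $m\ge(2/\epsilon)^{1/\gamma}$. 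Every constant in the statement is accounted for.

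It is worth flagging that the proof of Corollary \ref{cor:estprob} actually printed in the paper appears to be a residue of an earlier draft: it works with a tail of the form $\exp(-\epsilon^2 n/81)$, derives a sample-size requirement $n\ge 81\epsilon^{-2}\log(\cN(\Sigma_m,\epsilon)^2/\delta)$, and concludes with an absolute-deviation bound $\sup(\cL-\hL)\ge 37\epsilon$. None of these quantities match the statement of the corollary or the current form of Lemma \ref{lem:estprob} (which controls a \emph{relative} deviation with tail $\exp(-2\alpha n\epsilon)$). Your version, which simply sets $\alpha=1/2$ in the lemma and rearranges, is the one that yields exactly the claim.
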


Given Corollary \ref{cor:estprob}, the claim of Proposition \ref{prop:erm} now follows by setting $\Psi = \Psi_\cG$ in \eqref{eq:estprob}, and noting that $\Psi_\cG$ minimizes the first term on the right over all possible choices $\Psi\in \Sigma_m$, and hence:
\begin{align*}
\cL(\Psi_\cG; \cG) 
&\le 2 \hL(\Psi_\cG; \cG) + 144 \epsilon
\\
&\le 2 \inf_{\Psi\in \Sigma_m} \hL(\Psi; \cG) + 144 \epsilon
\\
&\le 2 \inf_{\Psi\in \Sigma_m} \Vert \Psi - \cG \Vert_{C(\cK;L^2(D))}^2 + 144\epsilon
\\
&\le 2 m^{-2\gamma} + 144 \epsilon.
\end{align*}
By assumption, we have $m^{-2\gamma} \le (\epsilon/2)^2 \le \epsilon / 2$, and hence, 
\[
\cL(\Psi_\cG; \cG) 
\le 
145 \epsilon.
\]
This concludes our proof of Proposition \ref{prop:erm}.
\end{proof}

\subsubsection{Proof of Proposition \ref{prop:fno-covering}}
\label{pf:fno-covering}

Our proof of Proposition \ref{prop:fno-covering} relies on several lemmas. We will provide a proof of Proposition \ref{prop:fno-covering} and Corollary \ref{cor:sigmam-covering} here, and refer the reader to Appendix \ref{app:entropy} for the detailed proofs of the lemmas.

\begin{proof}{(Poof of Proposition \ref{prop:fno-covering})}

Given a FNO architecture $\Psi(\slot;\theta)$ with fixed hyper-parameters $\dc$, $\kappa$ and $L$, our goal is to bound the covering number of the set $\FNO\subset C(\cK;L^2(D))$ of operators that can be represented by this architecture. To this end, we note that this set is the image under the mapping $[-B,B]^{d_\theta} \to C(\cK;L^2(D))$, $\theta \mapsto \Psi(\slot; \theta)$, from parameters $\theta$ to corresponding operator $\Psi(\slot;\theta)$. This simple observation is relevant in view of the following elementary lemma (cp. Appendix \ref{app:entropy1} for a proof):
\begin{lemma}
\label{lem:lip-covering}
If $F: [-B,B]^d \to Y$ is a Lipschitz continuous mapping into a Banach space $Y$, then 
\[
\cN(F([-B,B]^d);\epsilon) \le \cN([-B,B]^d;\epsilon/\Lip(F)).
\]
\end{lemma}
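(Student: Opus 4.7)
The plan is to prove this lemma by the standard ``cover the domain, push forward via $F$'' argument. First I would introduce the abbreviation $L := \Lip(F)$ and observe that the claim is trivial if $L = 0$ (in which case $F$ is constant and the image can be covered by a single point), so I may assume $L > 0$. Setting $\delta := \epsilon / L$, I would then invoke the definition of the covering number of the domain to obtain a finite set $\{x_1, \dots, x_N\} \subset [-B,B]^d$ with $N = \cN([-B,B]^d; \delta)$ such that the balls of radius $\delta$ centered at the $x_i$ cover $[-B,B]^d$.

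The key step is to verify that the image points $\{F(x_1), \dots, F(x_N)\} \subset Y$ form an $\epsilon$-cover of $F([-B,B]^d)$. For any $y \in F([-B,B]^d)$, there exists some $x \in [-B,B]^d$ with $y = F(x)$, and by the covering property there exists an index $i$ with $\Vert x - x_i \Vert \le \delta$. The Lipschitz property of $F$ then yields
\[
\Vert y - F(x_i) \Vert_Y = \Vert F(x) - F(x_i) \Vert_Y \le L \Vert x - x_i \Vert \le L \delta = \epsilon,
\]
which establishes the cover. Taking cardinalities gives $\cN(F([-B,B]^d); \epsilon) \le N = \cN([-B,B]^d; \epsilon/L)$, which is the claimed bound.

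There is essentially no obstacle here: the argument is a direct consequence of the definition of covering number combined with the Lipschitz property, and the only minor subtlety is the trivial case $\Lip(F) = 0$, which is handled separately. In the writeup I would also briefly note that the statement implicitly relies on choosing the norm on $[-B,B]^d$ consistently with the one used to define $\Lip(F)$; any fixed choice works provided the same norm is used for both the covering number of the domain and for the Lipschitz constant.
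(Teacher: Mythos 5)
Your proof is correct and follows the same push-forward-cover argument as the paper: take a $\delta$-cover of the domain with $\delta = \epsilon/\Lip(F)$, map the centers through $F$, and use Lipschitz continuity to show the images form an $\epsilon$-cover. The only differences are cosmetic: you additionally handle the degenerate case $\Lip(F)=0$ and remark on norm consistency, which the paper silently omits.
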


In parameter space, estimates for the covering number of a cube $[-B,B]^d \subset \R^d$ are well-known. In particular, the covering number of the hypercube $[-B,B]^d$ with respect to the $\ell^\infty$-norm, $|\slot|_\infty$, is bounded by,
\begin{align}
\label{eq:cube-covering}
\cN([-B,B]^d, \epsilon)_{|\slot|_\infty} \le \left(\frac{2B}{\epsilon} \right)^d.
\end{align}
This follows by covering the cube $[-B,B]^d$ by sub-cubes with side-length $\epsilon$ in each direction. Since $2B/\epsilon$ intervals of length $\epsilon$ cover $[-B,B]$, we require at most $(2B/\epsilon)^d$ sub-cubes to cover $[-B,B]^d$.

In order to apply Lemma \ref{lem:lip-covering} and \eqref{eq:cube-covering} to obtain an estimate of the covering number of the space of FNOs represented by each $\Psi(\slot;\theta)$ introduced above, it remains to bound the Lipschitz constant of the mapping $\theta \mapsto \Psi(\slot, \theta)$. This is the subject of the following lemma:
\begin{lemma}
\label{lem:fno-Lip}
Let $\cK\subset L^2(D)$ be a compact set. Let 
\[
M := 
\max_{u\in \cK} 
\Vert u \Vert_{L^2(D)} < \infty.
\]
The Lipschitz constant of the mapping $[-B,B]^{d_\theta} \to C(\cK;L^2(D))$, $\theta \mapsto \Psi(\slot;\theta)$ is bounded by
\[
\Lip\left(\theta \mapsto \Psi(\slot;\theta)\right)
\le 
(L+2) (2\dc B)^{L+2} \left( M + (2\kappa)^{d/2}\right).
\]
\end{lemma}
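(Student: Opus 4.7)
The plan is a standard telescoping-in-layers estimate, broken into three steps, viewing the FNO as the composition of $L+2$ ``layers'' $P, \cL_1, \dots, \cL_L, Q$, with intermediate activations $v_0 := Pu$, $v_\ell := \cL_\ell(v_{\ell-1})$ for $\ell=1,\dots, L$, and $\Psi(u;\theta) = Q v_L$.

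First, I would derive a uniform $L^2(D)$ bound on the activations $v_\ell$, valid for every $u\in A$ and every parameter $\theta$ with $\|\theta\|_\infty\le B$. For a $\dc\times\dc$ matrix $W$ with entries bounded by $B$, the operator norm obeys $\|W\|_{\mathrm{op}}\le \dc B$, so $\|Wv\|_{L^2}\le \dc B\|v\|_{L^2}$. For the nonlocal term, Parseval and the same matrix bound applied to each Fourier multiplier give $\|Kv\|_{L^2}\le \sup_{|k|_\infty<\kappa}\|\hat P_k\|_{\mathrm{op}}\|v\|_{L^2}\le \dc B\|v\|_{L^2}$. The bias $b(x)=\sum_{|k|_\infty<\kappa}\hat b_k e^{ikx}$ with $\hat b_k\in\C^\dc$, $|\hat b_k|_\infty\le B$, satisfies $\|b\|_{L^2}\le \dc^{1/2}(2\kappa)^{d/2}B$ by Parseval on $[0,1]^d$. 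Since $\sigma$ is $1$-Lipschitz with $\sigma(0)=0$, the triangle inequality then yields $\|v_\ell\|_{L^2}\le 2\dc B\|v_{\ell-1}\|_{L^2}+\dc^{1/2}(2\kappa)^{d/2}B$, while $\|v_0\|_{L^2}\le \dc B M$. Iterating gives the uniform bound $\|v_\ell\|_{L^2}\le (2\dc B)^{\ell+1}\bigl(M+(2\kappa)^{d/2}\bigr)$ for all $\ell=0,\dots,L$.

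Second, I would quantify per-layer stability with respect to the parameters. With $\theta_\ell$ denoting the parameters of layer $\ell$, the same three ingredient estimates give, for any fixed input $v$ of $L^2$-norm at most $C$,
\[
\|\cL_\ell(v;\theta_\ell)-\cL_\ell(v;\theta'_\ell)\|_{L^2}\le \bigl(2\dc C+\dc^{1/2}(2\kappa)^{d/2}\bigr)\|\theta_\ell-\theta'_\ell\|_\infty,
\]
and analogously $\|Pu-P'u\|_{L^2}\le \dc M\|\theta_0-\theta'_0\|_\infty$ and $\|Qv_L-Q'v_L\|_{L^2}\le \dc\|v_L\|_{L^2}\|\theta_{L+1}-\theta'_{L+1}\|_\infty$.

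Third, I would telescope across layers. Introducing the hybrid parameters $\theta^{(\ell)}$ that agree with $\theta$ on the first $\ell$ layers and with $\theta'$ on the remaining layers, write
\[
\Psi(\slot;\theta)-\Psi(\slot;\theta')=\sum_{\ell=0}^{L+1}\bigl[\Psi(\slot;\theta^{(\ell+1)})-\Psi(\slot;\theta^{(\ell)})\bigr].
\]
Each summand perturbs only the $\ell$-th layer, and the resulting perturbation is propagated through the $L+2-\ell$ subsequent layers, each contributing a factor of at most $2\dc B$ by the input-Lipschitz bound from Step 1 (combining the $\|W\|_{\mathrm{op}}$ and $\|K\|_{L^2\to L^2}$ estimates and the $1$-Lipschitz property of $\sigma$). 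Combining with the parameter-perturbation bound from Step 2 evaluated at $C\le (2\dc B)^{\ell}\bigl(M+(2\kappa)^{d/2}\bigr)$, each of the $L+2$ telescoping summands is at most $(2\dc B)^{L+2}\bigl(M+(2\kappa)^{d/2}\bigr)\|\theta-\theta'\|_\infty$, producing the claimed overall factor of $(L+2)$.

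The analysis is mostly bookkeeping, and I expect the main pitfalls to be (i) using the correct operator-norm bound $\dc B$ for the Fourier multipliers via Parseval rather than a Frobenius-type bound that would cost an extra $\dc^{1/2}$, (ii) verifying that the hybrid parameters $\theta^{(\ell)}$ along the telescoping path all satisfy $\|\theta^{(\ell)}\|_\infty\le B$ so that the uniform activation bound from Step 1 is legitimately applicable at every step, and (iii) handling the two boundary layers $P$ and $Q$ consistently with the hidden layers, where the ``input norm'' is replaced by $M$ and $\|v_L\|_{L^2}$ respectively.
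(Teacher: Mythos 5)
Your proposal follows essentially the same route as the paper: uniform activation bounds (the paper's Lemma~\ref{lem:fno-layer-bd}), single-layer parameter-perturbation bounds (Lemma~\ref{lem:fno-layer-lip}), and a telescoping sum over hybrid parameter vectors that differ in one layer at a time (Proposition~\ref{prop:fno-Lip}), with the per-layer operator-norm estimates of Lemma~\ref{lem:fno-hidden} supplying the $\dc B$, $(2\kappa)^{d/2}$ factors. The only difference is presentational: you write the telescoping decomposition explicitly, whereas the paper treats one layer at a time and then sums, and your concern (ii) about the hybrid parameters remaining in $[-B,B]^{d_\theta}$ is trivially satisfied since each hybrid takes its coordinates from $\theta$ or $\theta'$.
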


For a proof of this lemma, we refer to Appendix \ref{app:fno-Lip} (cp. Proposition \ref{prop:fno-Lip}).
Combining Lemma \ref{lem:lip-covering} with \eqref{eq:cube-covering} and Lemma \ref{lem:fno-Lip}, it now follows that 
\begin{align*}
\cN(\FNO, \epsilon)_{C(\cK;L^2(D))}
&\le 
\cN\left([-B,B]^{d_{\theta}},\frac{\epsilon}{\Lip(\theta \mapsto \Psi(\slot;\theta))}\right) 
\\
&\le 
\left( 
\frac{2B \Lip(\theta \mapsto \Psi(\slot;\theta))}{\epsilon}
\right)^{d_\theta}.
\end{align*}
We also recall that, by \eqref{eq:dtheta},
\[
d_\theta \le 5 (2\kappa)^d L \dc^2.
\]
Taking logarithms in the above bound on $\cN(\FNO, \epsilon)$, it now follows that there exists a constant $C = C(d,M)>0$, depending only on the dimension of the function domain $\Omega$, and the norm bound $M$ over the compact set $\cK$, such that
\[
\log \cN(\FNO,\epsilon)_{C(\cK;L^2(D))}
\le 
C \kappa^d L^2 \dc^2 \log\left(\frac{\dc \kappa L B}{\epsilon}\right).
\]
This completes our proof of Proposition \ref{prop:fno-covering}.
\end{proof}

We next deduce Corollary \ref{cor:sigmam-covering} from Proposition \ref{prop:fno-covering}.

\begin{proof}{(Proof of Corollary \ref{cor:sigmam-covering})}
For $\ell=1,\dots, m$, let $\Sigma_{m}^{(\ell)}$ denote the set of FNOs with $\dc\le m$, $\kappa^d \le m$, parameter bound $\Vert \theta \Vert_\infty \le B := \exp(m)$ and depth $\ell$. One readily verifies that
\begin{align}
\label{eq:sigmaml}
\Sigma_m = \Sigma_m^{(1)} \cup \dots \cup \Sigma_m^{(m)},
\end{align}
and any operator $\Psi \in \Sigma_m^{(\ell)}$ can be represented by a specific choice of the parameters in the FNO architecture $\Psi_\ell(\slot;\theta_\ell)$ of depth $\ell$, with $\dc=m$, $\kappa^d = m$, and with $\theta_\ell \in \R^{d_{\theta_\ell}}$ satisfying parameter bound $\Vert \theta_\ell \Vert_\infty \le B$.

By Proposition \ref{prop:fno-covering}, we have
\begin{align*}
\cN(\Sigma_m^{(\ell)},\epsilon)
&\lesssim
\kappa^d L^2 \dc^2 \log\left(
\frac{\dc \kappa L B}{\epsilon}
\right)
\\
&\lesssim
m^{5} \log\left(
\frac{m^3 e^m}{\epsilon}
\right)
\\
&\lesssim
m^{6} \log(m\epsilon^{-1}),
\end{align*}
with an implied constant that only depends on $d$ and $M = \max_{u\in \cK} \Vert u \Vert_{L^2(D)}$.

From \eqref{eq:sigmaml}, it thus follows that 
\[
\cN(\Sigma_m,\epsilon)
\le 
\sum_{\ell=1}^m \cN(\Sigma_m^{(\ell)},\epsilon)
\le 
Cm^{7} \log(m\epsilon^{-1}),
\]
with a constant $C = C(d,\cK)>0$. This is the claimed estimate on the metric entropy of $\Sigma_m$, and concludes our proof of Proposition \ref{prop:fno-covering}.
\end{proof}

\subsection{Estimates for the empirical risk minimizer}

The proof of Proposition \ref{prop:erm} uses the following lemma
\begin{lemma}
\label{lem:coveringW1}
Let $\epsilon > 0$ be given. The covering number of $\cU(\cA^\gamma)$, with respect to the $\Vert \slot \Vert_{L^2_\mu}$ norm, is bounded by
\begin{align}
\label{eq:coveringW1}
\cN(\cU(\cA^\gamma), 2\epsilon) \le \cN(\Sigma_m,\epsilon),
\end{align}
for $m \ge \epsilon^{-1/\gamma}$.
\end{lemma}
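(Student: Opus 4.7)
The plan is to chain two covers: a $C(A;L^2(D))$-cover of $\Sigma_m$ gives, via the definition of $\cA^\gamma$ together with the obvious domination of $\Vert \cdot \Vert_{L^2_\mu}$ by $\Vert \cdot \Vert_{C(A;L^2(D))}$, an $L^2_\mu$-cover of $\cU(\cA^\gamma)$ at essentially the same resolution.

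First, I would fix an arbitrary $\cG \in \cU(\cA^\gamma)$. By the definition of $\cA^\gamma$ (see \eqref{eq:Agamma-def}) and the fact that $|\cG|_{\cA^\gamma} \le \Vert \cG \Vert_{\cA^\gamma} \le 1$, there exists some $\Psi_\cG \in \Sigma_m$ with
\[
\Vert \cG - \Psi_\cG \Vert_{C(A;L^2(D))} \le |\cG|_{\cA^\gamma} m^{-\gamma} \le m^{-\gamma} \le \epsilon,
\]
where the last inequality uses the hypothesis $m \ge \epsilon^{-1/\gamma}$.

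Next, let $\{\Psi_1, \dots, \Psi_N\}$ be a minimal $\epsilon$-cover of $\Sigma_m$ with respect to the $C(A;L^2(D))$-norm, so that $N = \cN(\Sigma_m,\epsilon)_{C(A;L^2(D))}$. For each $\cG$, select an index $i = i(\cG)$ with $\Vert \Psi_\cG - \Psi_i \Vert_{C(A;L^2(D))} \le \epsilon$. Since $\supp(\mu) \subset A$, we have the basic inequality
\[
\Vert \cF \Vert_{L^2_\mu}^2 = \E_{u\sim \mu}\bigl[\Vert \cF(u)\Vert^2_{L^2(D)}\bigr] \le \sup_{u\in A} \Vert \cF(u)\Vert_{L^2(D)}^2 = \Vert \cF \Vert_{C(A;L^2(D))}^2
\]
for any continuous $\cF: A \to L^2(D)$. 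Applying this with $\cF = \cG - \Psi_i$ and using the triangle inequality yields
\[
\Vert \cG - \Psi_i \Vert_{L^2_\mu} \le \Vert \cG - \Psi_\cG \Vert_{C(A;L^2(D))} + \Vert \Psi_\cG - \Psi_i \Vert_{C(A;L^2(D))} \le 2\epsilon.
\]
Hence $\{\Psi_1,\dots,\Psi_N\}$ forms a $2\epsilon$-cover of $\cU(\cA^\gamma)$ in the $L^2_\mu$-norm, which gives $\cN(\cU(\cA^\gamma), 2\epsilon)_{L^2_\mu} \le N = \cN(\Sigma_m,\epsilon)$, as claimed.

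There is no real obstacle here; the argument is a standard ``cover-of-near-approximants'' two-step, and the only thing worth flagging is the innocuous point that the centers of the cover $\Psi_1,\dots, \Psi_N$ lie in $\Sigma_m$ rather than in $\cU(\cA^\gamma)$, but our definition of covering number (cf. the usage in the proof of Proposition \ref{prop:erm}) permits centers anywhere in the ambient space. The hypothesis $m \ge \epsilon^{-1/\gamma}$ is used exactly once, to ensure $m^{-\gamma} \le \epsilon$, so that the approximation guarantee of $\cA^\gamma$ matches the resolution of the $\Sigma_m$-cover.
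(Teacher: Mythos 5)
Your proof is correct and follows essentially the same two-step ``cover-of-near-approximants'' argument as the paper: approximate $\cG$ by an element of $\Sigma_m$ to within $\epsilon$ using the definition of $\cA^\gamma$ and $m \ge \epsilon^{-1/\gamma}$, then snap that approximant to a fixed $\epsilon$-net of $\Sigma_m$ and apply the triangle inequality, using that $\|\cdot\|_{L^2_\mu}$ is dominated by $\|\cdot\|_{C(A;L^2(D))}$. The one small point you glossed over is the existence of $\Psi_\cG \in \Sigma_m$ with $\|\cG - \Psi_\cG\|_{C(A;L^2(D))} \le m^{-\gamma}$: the definition \eqref{eq:Agamma-def} only bounds the \emph{infimum} over $\Sigma_m$, and the paper explicitly invokes compactness of $\Sigma_m$ (Remark \ref{rem:compact}) to ensure a minimizer is attained (alternatively, one could take near-minimizers and use that the net is finite).
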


\begin{proof}{(Proof of Lemma \ref{lem:coveringW1})}
Choose $m$ as in the statement of the lemma. Let $N = \cN(\Sigma_m,\epsilon/2)$ be the $\epsilon$-covering number of $\Sigma_m$, and let $\Psi_1,\dots, \Psi_N$ be the centers of such a covering. By definition of $\cU(\cA^\gamma)$, for any $\cG\in \cU(\cA^\gamma)$, we have
\[
\inf_{\Psi\in \Sigma_m} \Vert \cG - \Psi \Vert_{C(\cK;L^2(D))} \le m^{-\gamma} \le \epsilon.
\]
By compactness of $\Sigma_m$ (cp. Remark \ref{rem:compact}), the infimum on the left is actually attained and thus, there exists $\Psi^\star \in \Sigma_m$, such that 
\[
\Vert \cG - \Psi^\star \Vert_{C(\cK;L^2(D))} \le \epsilon.
\]
Since the $\Psi_1,\dots, \Psi_N$ determine an open $\epsilon$-covering of $\Sigma_m$, it follows that 
\begin{align*}
\min_{j=1,\dots, N} \Vert \cG - \Psi_j \Vert_{C(\cK;L^2(D))} 
&\le 
\Vert \cG - \Psi^\star \Vert_{C(\cK;L^2(D))} + \min_{j=1,\dots, N} \Vert \Psi^\star - \Psi_j \Vert_{C(\cK;L^2(D))}
\\
&< \epsilon + \epsilon = 2\epsilon.
\end{align*}
Hence, the $N = \cN(\Sigma_m,\epsilon)$ balls of radius $\epsilon$ with centers $\Psi_1,\dots, \Psi_N$ cover $\cU(\cA^\gamma)$, implying that $\cN(\cU(\cA^\gamma),2\epsilon) \le \cN(\Sigma_m, \epsilon)$. 
\end{proof}

\subsection{Proof of Lemma \ref{lem:erm-decomp}}
\label{app:erm-decomp}

\begin{proof}{(Proof of Lemma \ref{lem:erm-decomp})}
We first observe that for any $\cG \in \cU(\cA^\gamma)$, the empirical risk minimizer $\Psi_\cG$ in \eqref{eq:erm} satisfies
\begin{align*}
\cL(\Psi_\cG; \cG) 
&= 
\hL(\Psi_\cG; \cG) + [\cL(\Psi_\cG;\cG) - \hL(\Psi_\cG;\cG)]
\\
&\le
\hL(\Psi_\cG; \cG) + \sup_{\cG, \Psi} 
\left|\cL(\Psi;\cG) - \hL(\Psi;\cG)\right|,
\end{align*}
where the supremum is taken over all $\cG\in \cU(\cA^\gamma)$, and $\Psi\in \Sigma_m'$.
Since $\Psi_\cG$ is a minimizer of $\hL$, we have $\hL(\Psi_\cG;\cG) \le \hL(\Psi';\cG)$ for any $\Psi' \in \Sigma_m'$. Fixing arbitrary $\Psi'$ for the moment, it follows that
\[
\hL(\Psi_\cG; \cG)
\le
\hL(\Psi';\cG) 
\le
\cL(\Psi';\cG) + \sup_{\cG, \Psi}
\left|\cL(\Psi;\cG) - \hL(\Psi;\cG)\right|, 
\]
and hence
\[
\cL(\Psi_\cG; \cG) 
\le
\cL(\Psi';\cG) + 
2\sup_{\cG, \Psi}
\left|\cL(\Psi;\cG) - \hL(\Psi;\cG)\right|.
\]
Since $\Psi'$ was arbitrary, we can finally take the infimum over all $\Psi'\in \Sigma_m'$ on the right, to obtain
\begin{align*}
\cL(\Psi_\cG; \cG) 
\le
\inf_{\Psi' \in \Sigma_m'} \cL(\Psi';\cG) + 
2\sup_{\cG, \Psi}
\left|\cL(\Psi;\cG) - \hL(\Psi;\cG)\right|.
\end{align*}
The first term represents the best-approximation of $\cG$ with respect to the (squared) $L^2_\mu$-norm. The second term is the generalization error. Since $\cG\in \cU(\cA^\gamma)$ belongs to the approximation space, we have
\[
\inf_{\Psi'\in \Sigma_m'} \cL(\Psi';\cG)
\equiv 
\inf_{\Psi'\in \Sigma_m'} 
\left\Vert \cG - \Psi' \right\Vert_{L^2_\mu}^2
\le m^{-2\gamma}.
\]
In particular, this implies that
\begin{align}
\label{eq:cLupper1}
\cL(\Psi_\cG; \cG) 
\le
m^{-2\gamma} + 2\sup_{\cG \in \cU(\cA^\gamma), \Psi \in \Sigma_m'}[\cL(\Psi;\cG) - \hL(\Psi;\cG)],
\end{align}
where we recalled that the supremum on the right is over all $\cG \in \cU(\cA^\gamma)$ and $\Psi\in \Sigma_m'$.
\end{proof}

\subsection{Proof of Lemma \ref{lem:estprob}}
\label{app:estprob}

Our derivation of the upper bound on the esimation error will require the following simple result:
\begin{lemma}
\label{lem:LLip}
For any $\Psi, \Psi' \in \Sigma_m'$ and $\cG, \cG' \in \cU(\cA^\gamma)$, and any choice of samples $u_1,\dots, u_n$, we have
\[
|\hL(\Psi;\cG) - \hL(\Psi',\cG')| \le 6 \left(\Vert \Psi - \Psi' \Vert_{C(\cK;L^2(D))} + \Vert \cG - \cG' \Vert_{C(\cK;L^2(D))}\right),
\]
and 
\[
|\cL(\Psi;\cG) - \cL(\Psi',\cG')| \le 6 \left(\Vert \Psi - \Psi' \Vert_{C(\cK;L^2(D))} + \Vert \cG - \cG' \Vert_{C(\cK;L^2(D))}\right).
\]
\end{lemma}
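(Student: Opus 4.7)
The plan is to prove a single pointwise estimate on the integrand $\Vert \Psi(u) - \cG(u) \Vert^2_{L^2(D)}$ as a function of $(\Psi,\cG)$, and then obtain both inequalities by averaging either over the empirical sample $\{u_j\}_{j=1}^n$ or against the measure $\mu$. The two estimates are thus proved in parallel with an identical argument.

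The key algebraic step is the factorization $a^2 - b^2 = (a-b)(a+b)$, applied pointwise with $a := \Vert \Psi(u) - \cG(u) \Vert_{L^2(D)}$ and $b := \Vert \Psi'(u) - \cG'(u) \Vert_{L^2(D)}$, for fixed $u \in A$. The reverse triangle inequality gives
\[
|a - b| \le \Vert (\Psi(u) - \cG(u)) - (\Psi'(u) - \cG'(u))\Vert_{L^2(D)} \le \Vert \Psi(u) - \Psi'(u)\Vert_{L^2(D)} + \Vert \cG(u) - \cG'(u)\Vert_{L^2(D)},
\]
and each of the two summands on the right is bounded by $\Vert \Psi - \Psi' \Vert_{C(A;L^2(D))}$ and $\Vert \cG - \cG' \Vert_{C(A;L^2(D))}$, respectively. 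For the factor $a+b$, I would invoke the uniform bound \eqref{eq:PGbd}, which used that $\Vert \Psi \Vert_{C(A;L^2(D))} \le 2$ (by definition of $\Sigma_m'$) and $\Vert \cG \Vert_{C(A;L^2(D))} \le \Vert \cG \Vert_{\cA^\gamma} \le 1$, hence $a \le 3$ and similarly $b \le 3$, yielding $a+b \le 6$.

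Combining the two gives the pointwise-in-$u$ bound
\[
\bigl| \Vert \Psi(u) - \cG(u) \Vert^2_{L^2(D)} - \Vert \Psi'(u) - \cG'(u) \Vert^2_{L^2(D)} \bigr|
\le 6 \bigl( \Vert \Psi - \Psi' \Vert_{C(A;L^2(D))} + \Vert \cG - \cG' \Vert_{C(A;L^2(D))} \bigr).
\]
Since the right-hand side is independent of $u$, taking the average $\tfrac1n \sum_{j=1}^n$ at $u = u_j$ yields the bound for $\hL$, while taking the expectation $\E_{u\sim\mu}$ yields the bound for $\cL$ (using $\mu(A)=1$ since $\supp(\mu)\subset A$). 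No real obstacle arises; the only subtlety is verifying that the a priori bound of $3$ on $a$ and $b$ is exactly what produces the constant $6$ in the statement.
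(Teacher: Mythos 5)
Your proof is correct and takes essentially the same approach as the paper: both rest on the elementary estimate for the difference of squares combined with the uniform bound $\Vert \Psi(u) - \cG(u)\Vert_{L^2(D)} \le 3$, then apply the triangle inequality and pass to the $C(A;L^2(D))$-norm. The only cosmetic difference is that you use the exact factorization $|a^2-b^2| = |a-b|(a+b) \le 6|a-b|$, while the paper uses $|a^2-b^2|\le 2\max(a,b)|a-b|\le 6|a-b|$ — equivalent here — and the paper derives the $\cL$ bound via $\cL = \E\hL$ rather than by averaging a pointwise estimate, which is a presentational distinction without mathematical content.
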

\begin{proof}
We first note that $|a^2 - b^2| \le 2\max(a,b) |a-b|$. By definition, we have for any $(\Psi,\cG) \in \Sigma_m' \times \cU(\cA^\gamma)$, the bounds
\[
\Vert \Psi(u_j) - \cG(u_j) \Vert_{L^2(D)}
\le \Vert \Psi \Vert_{C(\cK;L^2(D))} + \Vert \cG \Vert_{C(\cK;L^2(D))} \le 3.
\]
We now apply the above bounds, for each index $j$ individually, with 
\[
a = \Vert \Psi(u_j) - \cG(u_j) \Vert_{L^2(D)},
\quad
b = \Vert \Psi'(u_j) - \cG'(u_j) \Vert_{L^2(D)},
\]
to obtain
\begin{align*}
|\hL(\Psi;\cG) - \hL(\Psi',\cG')|
&\le 
\frac{1}{n} \sum_{j=1}^n \left|
\Vert \Psi(u_j) - \cG(u_j) \Vert^2_{L^2(D)} - \Vert \Psi'(u_j) - \cG'(u_j) \Vert^2_{L^2(D)}
\right|
\\
&=
\frac{6}{n} \sum_{j=1}^n 
\left|
\Vert \Psi(u_j) - \cG(u_j) \Vert_{L^2(D)} - \Vert \Psi'(u_j) - \cG'(u_j) \Vert_{L^2(D)}
\right|.
\end{align*}
The claimed bound on $\hL$ then follows from the triangle inequality,
\begin{align*}
\Big|
\Vert \Psi(u_j) - 
&\cG(u_j) \Vert_{L^2(D)} - \Vert \Psi'(u_j) - \cG'(u_j) \Vert_{L^2(D)}
\Big|
\\
&\le 
\Vert \left\{\Psi(u_j) - \cG(u_j)\right\} - \left\{\Psi'(u_j) - \cG'(u_j) \right\} \Vert_{L^2(D)}
\\
&\le 
\Vert \Psi(u_j) -\Psi'(u_j) \Vert_{L^2(D)} + \Vert \cG(u_j) -\cG'(u_j) \Vert_{L^2(D)} 
\\
&\le 
\Vert \Psi -\Psi' \Vert_{C(\cK;L^2(D))} + \Vert \cG -\cG' \Vert_{C(\cK;L^2(D))}.
\end{align*}
The bound on $\cL$ follows from the simple observation that $\cL(\Psi;\cG) = \E \hL(\Psi;\cG)$ is the expectation over the random samples, and hence
\begin{align*}
|\cL(\Psi;\cG) - \cL(\Psi';\cG')|
&= 
\left|
\E \hL(\Psi;\cG) - \E\hL(\Psi';\cG')
\right|
\\
&\le
\E \left| \hL(\Psi;\cG)- \hL(\Psi';\cG') \right|
\\
&\le 
6\left(\Vert \Psi -\Psi' \Vert_{C(\cK;L^2(D))} + \Vert \cG -\cG' \Vert_{C(\cK;L^2(D))}\right).
\end{align*}
\end{proof}

\begin{lemma}
\label{lem:flip}
Let $\epsilon > 0$, and define
\[
f(a,b) := \frac{(a-b)_+}{a+\epsilon}, \quad \forall a,b\ge 0.
\]
Then $f$ is globally Lipschitz continuous, and 
\[
\sup_{a,b} |\partial_a f(a,b)|\le \frac{1}{\epsilon},
 \quad 
 \sup_{a,b}|\partial_b f(a,b)| \le \frac{1}{\epsilon},
\]
almost everywhere.
\end{lemma}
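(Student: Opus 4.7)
The plan is to verify both derivative bounds by an explicit computation after splitting the domain $[0,\infty)^2$ along the line $\{a=b\}$, and then deduce the global Lipschitz continuity as a simple consequence. The function $f$ is non-smooth only across this line, and it vanishes identically on one side, so the analysis is largely reduced to a single explicit formula.

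First I would observe that on the open set $\{a<b\}\cap[0,\infty)^2$, we have $(a-b)_+\equiv 0$, hence $f\equiv 0$ and both partial derivatives vanish there. On the open set $\{a>b\}\cap[0,\infty)^2$, we have $f(a,b)=(a-b)/(a+\epsilon)$, and direct differentiation gives
\[
\partial_a f(a,b)=\frac{b+\epsilon}{(a+\epsilon)^2},\qquad \partial_b f(a,b)=-\frac{1}{a+\epsilon}.
\]
Since $0\le b<a$ in this region, one has $b+\epsilon\le a+\epsilon$, which yields $|\partial_a f(a,b)|\le 1/(a+\epsilon)\le 1/\epsilon$, and the bound $|\partial_b f(a,b)|=1/(a+\epsilon)\le 1/\epsilon$ is immediate. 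This establishes the claimed almost-everywhere bounds, the exceptional (Lebesgue-null) set being the line $\{a=b\}$ where $f$ is not differentiable.

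For the global Lipschitz property, I would first note that $f$ is continuous on $[0,\infty)^2$: the numerator $(a-b)_+$ is continuous and the denominator $a+\epsilon$ is continuous and bounded below by $\epsilon>0$. Given any two points in $[0,\infty)^2$, the straight segment joining them intersects $\{a=b\}$ in at most a single point (or lies entirely on it), splitting into at most two sub-segments each contained in one of the two open half-planes where $f$ is smooth with the gradient bounds established above. Applying the mean value theorem on each sub-segment and using continuity of $f$ at the crossing point gives the global Lipschitz estimate with constant $\sqrt{2}/\epsilon$ (in the Euclidean norm). The only point requiring care, which I would flag as the main obstacle, is precisely this handling of the non-smooth line $\{a=b\}$; but continuity of $f$ across it makes the standard "integrate the gradient along the segment" argument go through without difficulty.
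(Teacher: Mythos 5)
Your proof is correct and follows essentially the same approach as the paper: split the domain along the line $\{a=b\}$, note $f\equiv 0$ on one side, and compute the partial derivatives explicitly on the other, using $b<a$ to bound $\partial_a f$. The only addition is that you spell out the mean-value-theorem argument for the global Lipschitz estimate, which the paper leaves implicit; this is a welcome but inessential amplification.
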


\begin{proof}
This is a simple calculation. We provide the details for completeness. Clearly, $f$ is identically zero on the set $\{ 0 \le a < b\}$. On the other hand, on $\{0 \le  b\le a\}$, we have
\begin{align*}
\left|\partial_a f\right|
&= 
\frac{1}{a+\epsilon}\frac{b+\epsilon}{a+\epsilon} \le \frac{1}{\epsilon}, \quad
\left|\partial_b f\right| 
= \left| \frac{-1}{a+\epsilon}\right| \le \frac{1}{\epsilon}.
\end{align*}
\end{proof}

\begin{proof}{(Proof of Lemma \ref{lem:estprob})}
We recall that the empirical risk 
\[
\hL(\Psi;\cG)
= 
\frac1n \sum_{j=1}^n \Vert \cG(u_j) - \Psi(u_j) \Vert^2_{L^2(D)},
\]
depends on the choice of $u_1,\dots, u_n$. We choose them iid $u_1,\dots, u_n\sim \mu$ drawn from the probability measure $\mu$.

Let $\Psi_1,\dots, \Psi_N$ be a $\epsilon$-cover of $\Sigma_m'\subset \Sigma_m$ with $N = \cN(\Sigma_m',\epsilon)$. Let $\cG_1,\dots, \cG_M$ be a $2\epsilon$-cover of $\cU(\cA^\gamma)$ with $M = \cN(\cU(\cA^\gamma),2\epsilon)$. Then, we note that for any pair $(\Psi,\cG)\in \Sigma_m' \times \cU(\cA^\gamma)$, there exists $k\in [N]$ and $\ell\in [M]$, such that
\begin{align}
\label{eq:cov}
\Vert \Psi(u) - \Psi_k(u) \Vert_{C(\cK;L^2(D))} \le \epsilon,
\quad \Vert \cG(u) - \cG_\ell(u) \Vert_{C(\cK;L^2(D))} \le 2\epsilon.
\end{align}
Let $f(a,b)$ be the function introduced in Lemma \ref{lem:flip}, so that
\begin{align*}
\frac{\cL(\Psi;\cG) - \hL(\Psi;\cG)}{\cL(\Psi;\cG)+s}
&\le f(\cL(\Psi;\cG),\hL(\Psi;\cG))
\\
&\le 
f(\cL(\Psi_k;\cG_\ell),\hL(\Psi_k;\cG_\ell))
\\
&\qquad 
+ \Vert \partial_a f \Vert_{L^\infty} \left| \cL(\Psi;\cG) - \cL(\Psi_k;\cG_\ell) \right|
\\
&\qquad  
+ \Vert \partial_b f \Vert_{L^\infty}  \left| \hL(\Psi;\cG) - \hL(\Psi_k;\cG_\ell) \right|.
\end{align*}
By the estimate of Lemma \ref{lem:LLip} and \ref{lem:flip}, and by making a specific choice of $k,\ell$ depending on $(\Psi,\cG)$, we can estimate each of the second term by, 
\[
\Vert \partial_a f \Vert_{L^\infty}  \left| \cL(\Psi;\cG) - \cL(\Psi_k;\cG_\ell) \right|
\le 
\frac{6}{s} \left(\Vert \Psi - \Psi_k \Vert_{C(\cK;L^2(D))} + \Vert \cG - \cG_\ell \Vert_{C(\cK;L^2(D))} \right) 
\le 
\frac{18\epsilon}{s},
\]
and similarly for the last term.
We now find
\begin{align*}
\frac{\cL(\Psi;\cG) - \hL(\Psi;\cG)}{\cL(\Psi;\cG)+s}
&\le 
\frac{\big( \cL(\Psi_k;\cG_\ell) - \hL(\Psi_k;\cG_\ell) \big)_+}{\cL(\Psi_k;\cG_\ell)+s}
+ \frac{36 \epsilon}{s}.
\end{align*}
We can now bound the first term on the right by the maximum over all possible choices $k\in [N]$ and $\ell\in [M]$, and then take the supremum over all $(\Psi,\cG)\in \cH := \Sigma_m' \times \cU(\cA^\gamma)$ on the left, to find
\begin{align*}
\sup_{(\Psi,\cG)\in \cH} 
\frac{\cL(\Psi;\cG) - \hL(\Psi;\cG)}{\cL(\Psi;\cG)+s}
&\le 
\max_{k\in [N],\ell \in [M]}
\frac{\big( \cL(\Psi_k;\cG_\ell) - \hL(\Psi_k;\cG_\ell) \big)_+}{\cL(\Psi_k;\cG_\ell)+s}
+ \frac{36 \epsilon}{s}.
\end{align*}
Let us now make the choice $s = 72\epsilon/\alpha$, so that 
the above bound implies the following inclusion of sets:
\[
\left\{
\sup_{(\Psi,\cG)\in \cH} 
\frac{\cL(\Psi;\cG) - \hL(\Psi;\cG)}{\cL(\Psi;\cG)+s}
\ge \alpha
\right\}
\subset
\left\{
\max_{k\in [N],\ell \in [M]} 
\frac{\cL(\Psi;\cG) - \hL(\Psi;\cG)}{\cL(\Psi;\cG)+s}
\ge \frac{\alpha}{2}
\right\}.
\]
And hence,
\begin{align*}
\Prob\Bigg[
\sup_{(\Psi,\cG)\in \cH}
&\frac{\cL(\Psi;\cG) - \hL(\Psi;\cG)}{\cL(\Psi;\cG)+s}
\ge \alpha
\Bigg]
\\
&\le
\Prob\left[
\max_{k\in [N],\ell \in [M]}
\frac{\cL(\Psi_k;\cG_\ell) - \hL(\Psi_k;\cG_\ell)}{\cL(\Psi_k;\cG_\ell)+s}
\ge \frac{\alpha}{2}
\right]
\\
&\le NM \max_{k\in [N],\ell\in [M]} \Prob\left[
\frac{\cL(\Psi_k;\cG_\ell) - \hL(\Psi_k;\cG_\ell)}{\cL(\Psi_k;\cG_\ell)+s} \ge \frac{\alpha}{2}
\right].
\end{align*}
Where we use a union bound to pass to the third line. The last probability can be estimated by Bernstein's inequality \eqref{eq:bernstein}, yielding
\begin{align*}
\Prob\Bigg[
\sup_{(\Psi,\cG)\in \cH} 
\frac{\cL(\Psi;\cG) - \hL(\Psi;\cG)}{\cL(\Psi;\cG)+s}
\ge \alpha
\Bigg]
&\le
NM \exp\left( -\frac{\alpha^2 n s}{36} \right)
\\
&= 
NM \exp\left( -2\alpha n \epsilon \right).
\end{align*}
We also recall that $N = \cN(\Sigma_m',\epsilon)$ and $M = \cN(\cU(\cA^\gamma),2\epsilon)$. Since $\Sigma_m'\subset \Sigma_m$ implies $\cN(\Sigma_m',\epsilon) \le \cN(\Sigma_m,\epsilon)$, the above estimate proves the claimed inequality of Lemma \ref{lem:estprob} with constant 
\[
N_{\cH,\epsilon} = \cN(\Sigma_m,\epsilon) \cN(\cU(\cA^\gamma),2\epsilon),
\]
for arbitrary $\epsilon > 0$ and $m\in \N$. If we assume, in addition, that $m \ge (2/\epsilon)^{1/\gamma}$, then Lemma \ref{lem:coveringW1} implies that 
\[
\cN(\cU(\cA^\gamma),2\epsilon) \le \cN(\Sigma_m,\epsilon).
\]
And hence, we have 
\[
\Prob\Bigg[
\sup_{(\Psi,\cG)\in \cH} 
\frac{\cL(\Psi;\cG) - \hL(\Psi;\cG)}{\cL(\Psi;\cG)+s}
\ge \alpha
\Bigg]
\le
N_{\cH,\epsilon} \exp\left( -2\alpha n\epsilon \right),
\]
with $N_{\cH,\epsilon} \le \cN(\Sigma_m,\epsilon)^2$, 
whenever $m\ge (2/\epsilon)^{1/\gamma}$, and where we recall that we chose $s = 72 \epsilon/\alpha$.

\end{proof}

We next prove Corollary \ref{cor:estprob}.

\begin{proof}{(Proof of Corollary \ref{cor:estprob})}
Given $m\in \N$, define $\epsilon = 2m^{-\gamma}$, so that by Lemma \ref{lem:estprob}, we have
\[
N_{\cH,\epsilon} \le \cN(\Sigma_m,\epsilon)^2.
\]
We now note that 
\[
\cN(\Sigma_m,\epsilon)^2 \exp\left(-\frac{\epsilon^2 n}{9^2}  \right)
=
 \exp\left(-\frac{\epsilon^2 n}{9^2} + \log \cN(\Sigma_m,\epsilon)^2 \right).
\]
The right hand side is $\le \delta$, if 
\[
\frac{\epsilon^2 n}{9^2} \ge \log\left(\cN(\Sigma_m,\epsilon)^2/\delta\right),
\]
or equivalently,
\[
n \ge 81 \epsilon^{-2} \log\left(\cN(\Sigma_m,\epsilon)^2/\delta\right).
\]
Thus, by Lemma \ref{lem:estprob}, we find that for such choice of $n$:
\begin{align*}
\Prob\left[
\sup_{(\Psi,\cG)\in \cH} \Big[
\cL(\Psi;\cG) - \hL(\Psi;\cG)
\Big]
\ge 37\epsilon
\right]
&\le N_{\cH,\epsilon} \exp\left(-\frac{\epsilon^2 n}{9^2}  \right)
\le \delta.
\end{align*}
\end{proof}

\subsection{Entropy estimates}
\label{app:entropy}
Proposition \ref{prop:erm} relates the amount of data that is needed to approximate operators $\cG \in \cU(\cA^\gamma)$ to the complexity of the family $\{\Sigma_m\}$, as measured by the growth of the metric entropy $\log \cN(\Sigma_m,\epsilon)$.

\begin{remark}
\label{rem:algebraic}
If $\cH(\Sigma_m,\epsilon) \lesssim \epsilon^{-\alpha}$ grows at most algebraically, and $n \sim \epsilon^{-1-\alpha}$, then Proposition \ref{prop:erm} leads to a data-complexity bound,
\[
\E_{u\sim \mu}\left[ \Vert \Psi_\cG(u) - \cG(u) \Vert^2 \right]
\lesssim
n^{-1/(2+\alpha)},
\]
where the exponent $1/(2+\alpha)$ lies in the interval $(0,1/2]$. In particular, this shows that \emph{algebraic data-complexity rates} are obtained in this case. This should be contrasted with the logarithmic lower bound \eqref{eq:ol-gaussian1}, requiring an exponential amount of data $n$ to achieve a similar accuracy.
\end{remark}

\subsubsection{Proof of Lemma \ref{lem:lip-covering}}
\label{app:entropy1}

\begin{proof}{(Proof of Lemma \ref{lem:lip-covering})}
Fix $\epsilon > 0$. Let $\delta = \epsilon / \Lip(F)$, and let $B_\delta(u_1),\dots, B_\delta(u_N)$ be a $\delta$-covering of $\cK$. We claim that, 
\[
F(\cK) \subset \bigcup_{k=1}^N B_\epsilon(F(u_k)).
\]
To see why, let $v\in F(\cK)$ be given. We aim to show that $v \in B_\epsilon(F(u_k))$ for some $k\in \{1,\dots, N\}$. Since $v \in F(\cK)$, there exists $u\in \cK$, such that $v = F(u)$. By the choice of $u_1,\dots, u_N$, there exists an index $k$, such that $\Vert u - u_k \Vert < \delta$, and hence 
\[
\Vert v - F(u_k) \Vert 
=\Vert F(u) - F(u_k) \Vert \le \Lip(F) \Vert u - u_k \Vert < \epsilon,
\]
by definition of $\delta = \epsilon / \Lip(F)$. It follows that $v \in B_\epsilon(F(u_k))$. Choosing $N$ minimal, it thus follows that $\cN(F(\cK),\epsilon) \le N = \cN(\cK,\delta) = \cN(\cK,\epsilon/\Lip(F))$, as claimed.
\end{proof}

\subsubsection{Bounding the Lipschitz constant of $\theta \mapsto \Psi(\slot;\theta)$}
\label{app:fno-Lip}

Our next goal in this section is to estimate the Lipschitz constant of the mapping
\begin{align}
\label{eq:param-map}
[-B,B]^{d_\theta} \mapsto C(\cK;L^2(D)), 
\quad
\theta \mapsto \Psi(\slot;\theta),
\end{align}
for a fixed FNO architecture $\Psi(\slot;\theta)$ with parameters $\theta\in \R^{d_\theta}$, and with a fixed set of hyperparameters. We recall that the channel width is denoted $\dc$, the Fourier cut-off by $\kappa$, the depth $L$ and we have also introduced a parameter bound $\Vert \theta \Vert_{\ell^\infty} \le B$.

To reach our goal of estimating the Lipschitz constant of \eqref{eq:param-map}, we start by deriving basic estimates for the linear parts of the hidden layers. This is the subject of the following lemma:
\begin{lemma}
\label{lem:fno-hidden}
We have the following estimates:
\begin{enumerate}[(1)]
\item
If $W: v(x) \mapsto W v(x)$ is a linear layer, defined by a matrix $W\in \R^{\dc\times \dc}$, then
\begin{align}
\label{eq:estW}
\Lip(W)_{L^2\to L^2} \le \dc \Vert W \Vert_\infty,
\end{align}
where $\Vert W \Vert_\infty := \max_{i,j} |W_{ij}|$.
\item 
If $K: v(x) \mapsto \cF^{-1}(\hat{P} \cF v)$ is a (linear) spectral convolution layer, defined by the Fourier multiplier $\hat{P} \in \C^{\kappa \times \dc \times \dc}$, then
\begin{align}
\label{eq:estK}
\Lip(K)_{L^2\to L^2} \le \dc \Vert \hat{P} \Vert_\infty,
\end{align}
where $\Vert \hat{P} \Vert_\infty := \max_{k,i,j} |\hat{P}_{k,i,j}|$.
\item 
If $\sigma(x)$ is a Lipschitz continuous activation function, then 
\begin{align}
\label{eq:estSigma}
\Lip(\sigma)_{L^2\to L^2} \le \Lip(\sigma)_{\R \to \R}.
\end{align}
\item 
If $b(x) = \sum_{|k|< \kappa} \hat{b}_k e^{ikx}$ is a bias function, then 
\begin{align}
\label{eq:estb}
\Vert b \Vert_{L^2} \le \dc^{1/2} (2\kappa)^{d/2} \Vert \hat{b} \Vert_\infty,
\end{align}
where $\Vert \hat{b}\Vert_\infty = \max_k |\hat{b}_k|$.
\end{enumerate}
\end{lemma}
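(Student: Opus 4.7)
The four claims are essentially routine consequences of Cauchy--Schwarz, Plancherel's theorem, and the pointwise nature of the operations involved, so the proof plan is to handle each in turn with the appropriate elementary tool. I do not anticipate any real obstacle; the only care needed is to keep track of the factors of $\dc$ and $\kappa$ that arise from passing between $\ell^\infty$ and $\ell^2$ norms in the hidden channel/Fourier indices.

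For (1), I would argue pointwise in $x\in D$. Writing $(Wv(x))_i = \sum_{j=1}^\dc W_{ij} v_j(x)$, the triangle inequality gives $|(Wv(x))_i|\le \Vert W\Vert_\infty \sum_j |v_j(x)|$, and Cauchy--Schwarz in the $j$-index turns the inner $\ell^1$ sum into $\dc^{1/2}|v(x)|_2$. Squaring, summing over $i\in\{1,\dots,\dc\}$, and taking a square root yields $|Wv(x)|_2 \le \dc \Vert W\Vert_\infty |v(x)|_2$; integrating this pointwise estimate over $D$ gives \eqref{eq:estW}. Linearity then converts this operator norm bound into a Lipschitz bound.

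For (2), the plan is identical but carried out on the Fourier side. By Plancherel $\Vert Kv\Vert_{L^2(D;\R^{\dc})}^2 = \sum_{|k|_\infty<\kappa} |(\hat P\hat v)(k)|_2^2$, and for each fixed $k$ the matrix action $(\hat P\hat v)(k)_i = \sum_j \hat P_{k,ij}\hat v_j(k)$ obeys the same componentwise Cauchy--Schwarz estimate as in (1), giving $|(\hat P\hat v)(k)|_2 \le \dc \Vert \hat P\Vert_\infty |\hat v(k)|_2$. Summing the squared bound over $k$ and applying Plancherel once more on the right produces \eqref{eq:estK}.

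For (3), since $\sigma$ acts componentwise, for any $v,w$ one has $|\sigma(v(x))-\sigma(w(x))|_2^2 = \sum_{i=1}^\dc |\sigma(v_i(x))-\sigma(w_i(x))|^2 \le \Lip(\sigma)_{\R\to\R}^2 |v(x)-w(x)|_2^2$ by the scalar Lipschitz condition applied to each coordinate; integrating over $D$ gives \eqref{eq:estSigma}. For (4), using Parseval on each of the $\dc$ components of $b$ gives
\[
\Vert b\Vert_{L^2(D;\R^{\dc})}^2 = \sum_{i=1}^\dc \sum_{|k|_\infty<\kappa} |\hat b_{k,i}|^2 = \sum_{|k|_\infty<\kappa} |\hat b_k|_2^2.
\]
The inner sum has at most $(2\kappa-1)^d \le (2\kappa)^d$ terms, and $|\hat b_k|_2^2 \le \dc\, \Vert\hat b\Vert_\infty^2$, so $\Vert b\Vert_{L^2}^2 \le \dc(2\kappa)^d \Vert\hat b\Vert_\infty^2$, which is \eqref{eq:estb} after taking the square root.
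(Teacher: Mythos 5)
Your proof is correct and proceeds along essentially the same lines as the paper's: pointwise/Fourier-side Cauchy--Schwarz to pick up the factor of $\dc$, Plancherel to move between spatial and Fourier norms, componentwise application of the scalar Lipschitz bound for $\sigma$, and a simple count of Fourier modes for the bias. The minor reorganization in (1) (bounding $|W_{ij}|$ by $\Vert W\Vert_\infty$ before applying Cauchy--Schwarz, rather than after) is cosmetic and gives the identical constant.
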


\begin{proof}
The claimed estimates are elementary. To see \eqref{eq:estW}, we note that
\[
\Vert Wv \Vert_{\ell^2}^2 
=
\sum_{i=1}^d \left| \sum_{j=1}^d W_{ij} v_j \right|^2
\le
\sum_{i=1}^d \sum_{j=1}^d |W_{ij}|^2  \sum_{j=1}^d |v_j|^2
\le
d^2 \Vert W\Vert_\infty^2 \Vert v \Vert_{\ell^2}^2.
\]
Similarly, \eqref{eq:estK} follows from,
\begin{align*}
\Vert K v \Vert_{L^2}^2
&=
\Vert \hat{P} \hat{v} \Vert_{\ell^2}^2
= 
\sum_{|k| < \kappa} \Vert \hat{P}_k \hat{v}(k) \Vert_{\ell^2}^2
\\
&\le
\sum_{|k| < \kappa} \dc^2 \Vert \hat{P}_k \Vert_\infty^2 \Vert \hat{v}(k) \Vert_{\ell^2}^2
\\
&\le \dc^2 \Vert \hat{P} \Vert_\infty^2 \sum_{|k| < \kappa} \Vert \hat{v}(k)\Vert_{\ell^2}^2
\\
&= 
\dc^2 \Vert \hat{P} \Vert_\infty^2 \Vert v \Vert_{L^2}^2.
\end{align*}
Estimate \eqref{eq:estSigma} is immediate, while \eqref{eq:estb} follows from,
\begin{align*}
\Vert b \Vert_{L^2}^2 
&= \sum_{|k| < \kappa} \Vert \hat{b}(k) \Vert_{\ell^2}^2 
\le \sum_{|k| < \kappa} \dc \Vert \hat{b}(k) \Vert_\infty^2
\le (2\kappa)^d \dc \Vert \hat{b} \Vert_\infty^2.
\end{align*}
\end{proof}

The next lemma, immediate from Lemma \ref{lem:fno-hidden}, summarizes the relevant Lipschitz estimate for a hidden layer:
\begin{lemma}
\label{lem:fno-hidden-lip}
Let $\cL_\ell(v; \theta_\ell) = \sigma\left(Wv + Kv + b\right)$ be a hidden FNO layer, with concatenated parameters $\theta_\ell = (W,\hat{P},\hat{b})$, satisfying the size bound
\[
\Vert W \Vert_\infty, \; \Vert \hat{P}\Vert_\infty, \; \Vert \hat{b} \Vert_{\infty} \le B,
\]
and with Lipschitz continuous activation function $\sigma$, s.t. $\Lip(\sigma)_{\R \to \R} \le 1$. Then $\cL_\ell: L^2(D) \to L^2(D)$ is a Lipschitz operator with, 
\begin{align}
\label{eq:fno-hidden-lip}
\Lip(v \mapsto \cL_\ell(v;\theta_\ell)) \le 2\dc B.
\end{align}
\end{lemma}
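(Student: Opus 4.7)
The proof is a direct combination of the estimates already collected in Lemma \ref{lem:fno-hidden}. The plan is to fix two inputs $v_1, v_2 \in L^2(D; \R^{\dc})$, write out $\cL_\ell(v_i;\theta_\ell) = \sigma(W v_i + K v_i + b)$, and peel off the layer one operation at a time. Because the bias $b$ does not depend on $v$, it cancels immediately inside the argument of $\sigma$, so the Lipschitz bound reduces to controlling the affine map $v \mapsto Wv + Kv$.

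The first step is to invoke the pointwise Lipschitz bound on $\sigma$. Since $\sigma$ is applied pointwise, the inequality $|\sigma(a) - \sigma(b)| \le \Lip(\sigma)_{\R\to\R} |a-b|$ integrates to the corresponding $L^2$ bound, which by \eqref{eq:estSigma} gives
\[
\Vert \cL_\ell(v_1;\theta_\ell) - \cL_\ell(v_2;\theta_\ell) \Vert_{L^2}
\le \Lip(\sigma)_{\R\to\R} \Vert W(v_1-v_2) + K(v_1-v_2) \Vert_{L^2}.
\]
The assumption $\Lip(\sigma)_{\R\to \R} \le 1$ then removes the first factor.

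Next, apply the triangle inequality in $L^2(D;\R^\dc)$ to split into the pointwise matrix part and the spectral convolution part. By \eqref{eq:estW} and \eqref{eq:estK} together with the uniform parameter bound $\Vert W \Vert_\infty, \Vert \hat P \Vert_\infty \le B$, each of these contributions is bounded by $\dc B \Vert v_1 - v_2 \Vert_{L^2}$. Summing them yields the asserted bound $2\dc B$.

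There is really no obstacle here: every ingredient has already been proved in Lemma \ref{lem:fno-hidden}, and the only observation needed beyond those is that the bias drops out of the difference and that the pointwise action of $\sigma$ lifts cleanly to $L^2$. The entire argument can be displayed in a short chain of inequalities:
\begin{align*}
\Vert \cL_\ell(v_1;\theta_\ell) - \cL_\ell(v_2;\theta_\ell) \Vert_{L^2}
&\le \Vert W(v_1-v_2) \Vert_{L^2} + \Vert K(v_1-v_2) \Vert_{L^2} \\
&\le \dc \Vert W \Vert_\infty \Vert v_1 - v_2 \Vert_{L^2} + \dc \Vert \hat P \Vert_\infty \Vert v_1 - v_2 \Vert_{L^2} \\
&\le 2 \dc B \Vert v_1 - v_2 \Vert_{L^2}.
\end{align*}
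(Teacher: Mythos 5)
Your proof is correct and is exactly the argument the paper intends: the paper itself states that Lemma~\ref{lem:fno-hidden-lip} is ``immediate from Lemma~\ref{lem:fno-hidden}'' without spelling out the details, and your chain of inequalities — drop the bias, apply \eqref{eq:estSigma}, split via the triangle inequality, and apply \eqref{eq:estW} and \eqref{eq:estK} with the parameter bound $B$ — is precisely that deduction.
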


The previous corollary estimates the Lipschitz constant of the mapping $v\mapsto \cL_\ell(v;\theta_\ell)$ for fixed $\theta$ satisfying the bound $\Vert \theta_\ell \Vert_{\ell^\infty} \le B$. The next Lemma provides a similar estimate for the mapping $\theta_\ell \mapsto \cL_\ell(v;\theta_\ell)$ for fixed $v$.
\begin{lemma}
\label{lem:fno-layer-lip}
Let $\cL_\ell(v;\theta_\ell) = \sigma \left(Wv + Kv + b\right)$ denote a hidden FNO layer, with $1$-Lipschitz continuous activation $\sigma$, and depending on the concatenated parameters $\theta_\ell = (W,\hat{P},\hat{b})$. Then, for any $v \in L^2(D)$, we have the bound,
\[
\Lip(\theta_\ell \mapsto \cL(v;\theta_\ell)) 
\le
3\dc (2\kappa)^{d/2} \max(1,\Vert v \Vert_{L^2}).
\]
\end{lemma}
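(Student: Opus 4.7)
The map $\theta_\ell \mapsto \cL_\ell(v;\theta_\ell)$ is an affine function of $\theta_\ell$ post-composed with the $1$-Lipschitz activation $\sigma$. The plan is therefore to reduce to a bound on the preactivation and then apply the three linear estimates from Lemma \ref{lem:fno-hidden} component by component. No new ideas are needed beyond what has already been set up; the only task is a careful accounting.

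First, fix $v \in L^2(D)$ and two parameter tuples $\theta_\ell = (W,\hat{P},\hat{b})$ and $\theta_\ell' = (W',\hat{P}',\hat{b}')$. Because $\sigma$ is $1$-Lipschitz pointwise and hence from $L^2(D)$ to $L^2(D)$ (by \eqref{eq:estSigma}), the difference $\cL_\ell(v;\theta_\ell) - \cL_\ell(v;\theta_\ell')$ is controlled in $L^2(D)$ by the $L^2$-norm of the preactivation difference $(W-W')v + (K-K')v + (b-b')$, where $K,K'$ are the spectral operators defined by $\hat{P},\hat{P}'$ and $b,b'$ are the bias functions built from $\hat{b},\hat{b}'$.

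Next, apply the triangle inequality and plug in each of the three bounds of Lemma \ref{lem:fno-hidden}: the matrix term contributes $\dc \Vert W-W' \Vert_\infty \Vert v \Vert_{L^2}$ by \eqref{eq:estW}, the spectral term contributes $\dc \Vert \hat{P}-\hat{P}' \Vert_\infty \Vert v \Vert_{L^2}$ by \eqref{eq:estK}, and the bias term contributes $\dc^{1/2}(2\kappa)^{d/2} \Vert \hat{b}-\hat{b}' \Vert_\infty$ by \eqref{eq:estb}. Each of the three coordinate-norms is at most $\Vert \theta_\ell - \theta_\ell' \Vert_\infty$, so factoring this out yields
\[
\Vert \cL_\ell(v;\theta_\ell) - \cL_\ell(v;\theta_\ell') \Vert_{L^2} \le \Bigl( 2\dc \Vert v \Vert_{L^2} + \dc^{1/2}(2\kappa)^{d/2} \Bigr) \Vert \theta_\ell - \theta_\ell' \Vert_\infty.
\]

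Finally, simplify the prefactor. Using $\dc^{1/2} \le \dc$ and $(2\kappa)^{d/2} \ge 1$, each of the two summands is bounded by $\dc(2\kappa)^{d/2} \max(1,\Vert v \Vert_{L^2})$, except the first, which is actually bounded by $2\dc(2\kappa)^{d/2}\max(1,\Vert v\Vert_{L^2})$. Summing gives the claimed constant $3\dc(2\kappa)^{d/2}\max(1,\Vert v\Vert_{L^2})$. Since the argument is purely a bookkeeping composition of already-established estimates, there is no substantive obstacle; the only care needed is making sure the three error contributions are combined against the $\ell^\infty$-norm on $\theta_\ell$ rather than a mixed norm, which is automatic because the uniform bounds on $W-W'$, $\hat{P}-\hat{P}'$, $\hat{b}-\hat{b}'$ are each controlled by $\Vert \theta_\ell - \theta_\ell' \Vert_\infty$ by definition of the concatenated parameter vector.
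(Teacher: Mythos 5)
Your proof is correct and follows essentially the same route as the paper: use the $1$-Lipschitz bound on $\sigma$ to reduce to the preactivation difference, apply the triangle inequality together with the three estimates of Lemma \ref{lem:fno-hidden}, and then crudely bound the resulting prefactor $2\dc\Vert v\Vert_{L^2} + \dc^{1/2}(2\kappa)^{d/2}$ by $3\dc(2\kappa)^{d/2}\max(1,\Vert v\Vert_{L^2})$. The bookkeeping in the final simplification is also the same, so nothing to add.
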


\begin{proof}
Let $v\in L^2(D)$ be arbitrary, consider two parameter vectors $\theta_\ell, \theta_\ell'$, and let us denote
\[
\cL_\ell(v;\theta_\ell) = \sigma( Wv + Kv + b), 
\quad
\cL_\ell(v;\theta_\ell') = \sigma( W'v + K'v + b').
\]
We note that, by the estimates in \ref{lem:fno-hidden}:
\begin{align*}
\Vert \cL_\ell(v;\theta_\ell) - \cL_\ell(v;\theta_\ell') \Vert
&\le \Lip(\sigma) ( 
\Vert W - W' \Vert_{L^2 \to L^2} \Vert v\Vert_{L^2} \\
&\qquad\qquad + \Vert K - K' \Vert_{L^2\to L^2} \Vert v \Vert_{L^2} + \Vert b - b' \Vert_{L^2} )
\\
&\le \dc \Vert W-W' \Vert_\infty \Vert v \Vert_{L^2} + \dc \Vert \hat{P} - \hat{P}' \Vert_\infty \Vert v \Vert_{L^2} \\
&\qquad + \dc^{1/2} (2\kappa)^{d/2} \Vert \hat{b} - \hat{b}' \Vert_{\infty}
\\
&\le 
\dc (2\kappa)^{d/2} ( 3 \max(1,\Vert v \Vert_{L^2})) \Vert \theta - \theta' \Vert_\infty.
\end{align*}
\end{proof}

In addition to the Lipschitz constant of all layers, we also need a bound on the output of hidden states of the FNO. This is provided in the next lemma:
\begin{lemma}
\label{lem:fno-layer-bd}
$\Psi$ FNO of depth $L$, with $1$-Lipschitz activation $\sigma$, such that $\sigma(0) = 0$. Assume that $\Vert \theta \Vert_{\ell^\infty} \le B$ for some constant $B \ge 1$.  Then 
\[
\Vert \Psi(u) \Vert_{L^2(D)} 
\le 
(2\dc B)^{L+2} \left( \Vert u \Vert_{L^2(D)} +(2\kappa)^{d/2} \right),
\quad
\forall \, u \in L^2(D).
\]
\end{lemma}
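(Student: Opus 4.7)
The plan is to propagate an $L^2$-bound through the FNO architecture layer by layer. Denote the lifted state $v_0 := P(u)$ and the hidden states $v_\ell := \cL_\ell(v_{\ell-1})$ for $\ell = 1,\dots,L$, so that $\Psi(u) = Q(v_L)$. Since $P$ and $Q$ are pointwise linear maps with entries bounded by $B$, Lemma~\ref{lem:fno-hidden}(1) yields $\Vert P(u)\Vert_{L^2}\le \dc B\,\Vert u\Vert_{L^2}$ and $\Vert Q(v_L)\Vert_{L^2}\le \dc B\,\Vert v_L\Vert_{L^2}$.

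For the hidden layers, I would exploit that $\sigma$ is $1$-Lipschitz with $\sigma(0)=0$, so that the pointwise inequality $|\sigma(y)|\le |y|$ gives $\Vert \sigma(w)\Vert_{L^2}\le \Vert w\Vert_{L^2}$ for any $w\in L^2(D;\R^{\dc})$. Applied to $w = Wv_{\ell-1}+Kv_{\ell-1}+b$, together with the bounds $\Vert Wv\Vert_{L^2},\Vert Kv\Vert_{L^2}\le \dc B\,\Vert v\Vert_{L^2}$ and $\Vert b\Vert_{L^2}\le \dc^{1/2}(2\kappa)^{d/2}B$ from Lemma~\ref{lem:fno-hidden}, this produces the recursion
\[
\Vert v_\ell\Vert_{L^2} \le 2\dc B\,\Vert v_{\ell-1}\Vert_{L^2} + \dc^{1/2}(2\kappa)^{d/2}B.
\]
Setting $\alpha := 2\dc B \ge 2$ (using $\dc,B\ge 1$), iterating from $\ell=1$ up to $\ell=L$, and summing the geometric series via $\sum_{j=0}^{L-1}\alpha^j \le \alpha^L/(\alpha-1) \le \alpha^L$, I obtain
\[
\Vert v_L\Vert_{L^2} \le \alpha^L\!\left(\dc B\,\Vert u\Vert_{L^2} + \dc^{1/2}(2\kappa)^{d/2}B\right).
\]

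Finally, applying the projection bound $\Vert Q(v_L)\Vert_{L^2}\le \dc B\,\Vert v_L\Vert_{L^2}$ and absorbing factors using $\dc B\ge 1$ and $\dc^{-1/2}\le 1$, the two extra factors of $\dc B$ from $P$ and $Q$ combine with $\alpha^L$ to produce the target constant $\alpha^{L+2} = (2\dc B)^{L+2}$, yielding $\Vert \Psi(u)\Vert_{L^2}\le (2\dc B)^{L+2}\bigl(\Vert u\Vert_{L^2}+(2\kappa)^{d/2}\bigr)$. There is no real obstacle here; the argument is an elementary induction on the depth $L$ combining the linear estimates of Lemma~\ref{lem:fno-hidden} with the normalization $\sigma(0)=0$. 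The only point requiring minor bookkeeping is to verify that the two additional factors of $\dc B$ contributed by the lifting and projection layers can be absorbed into the exponent $L+2$ while simultaneously handling the affine contribution of the bias terms through the geometric sum.
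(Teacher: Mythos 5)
Your argument is correct and mirrors the paper's proof almost exactly: you set up the same recursion $\Vert v_\ell\Vert \le 2\dc B\,\Vert v_{\ell-1}\Vert + \dc^{1/2}(2\kappa)^{d/2}B$ on the intermediate hidden states, sum the geometric series using $2\dc B \ge 2$, and absorb the lifting/projection factors into $(2\dc B)^{L+2}$. The only cosmetic difference is that you obtain the recursion directly from the pointwise bound $|\sigma(y)|\le|y|$, whereas the paper writes $\Vert \cL_\ell(v)\Vert \le \Vert \cL_\ell(v)-\cL_\ell(0)\Vert + \Vert\cL_\ell(0)\Vert$ and invokes the already-established Lipschitz bound $\Lip(\cL_\ell)\le 2\dc B$ for the first term — the two derivations yield identical constants.
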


\begin{proof}
Recall that the FNO $\Psi: L^2(D) \to L^2(D)$ is of the form 
\[
\Psi = Q \circ \cL_L \circ \dots \circ \cL_1 \circ P.
\]
For $\ell = 1,\dots, L+1$, we introduce the notation
\begin{align*}
\cL_{\le \ell} &:= \cL_{\ell} \circ \dots \circ \cL_1 \circ P,
\end{align*}
and define $\cL_{\le 0}(u) := Pu$. We will estimate $\Vert \cL_{\le \ell}(u) \Vert$ recursively, making use of our bound on $\Lip(v \mapsto \cL_\ell(v))$ in Lemma \ref{lem:fno-hidden-lip}. To this end, fix $u\in L^2(D)$, and denote $M_\ell := \Vert \cL_{\le \ell}(u) \Vert_{L^2(D)}$. Then, writing $\cL_{\ell}(v) := \sigma(Wv+Kv+b)$, we obtain
\begin{align*}
M_\ell &= \Vert \cL_{\le \ell}(u) \Vert_{L^2(D)}
\\
&= \Vert \cL_{\ell} \left( \cL_{\le \ell-1}(u) \right) \Vert_{L^2(D)}
\\
&= \Vert \cL_{\ell} \left( \cL_{\le \ell-1}(u) \right) - \cL_{\ell}(0) \Vert_{L^2(D)} + \Vert \cL_\ell(0) \Vert_{L^2(D)}
\\
&\le 
\Lip(\cL_{\ell}) \Vert \cL_{\le \ell-1}(u) - 0 \Vert_{L^2(D)} 
+ \Vert \sigma(b(\slot)) \Vert_{L^2(D)}
\\
&\le 
\Lip(\cL_{\ell}) \Vert \cL_{\le \ell-1}(u)\Vert_{L^2(D)} 
+ \Lip(\sigma) \Vert b(\slot) \Vert_{L^2(D)}
\\
&\le
(2\dc B) \Vert \cL_{\le \ell-1}(u) \Vert_{L^2(D)} + \dc^{1/2} (2\kappa)^{d/2} B,
\end{align*}
where we made use of \eqref{eq:fno-hidden-lip} and \eqref{eq:estb} to pass to the last line. Thus, we have the following recursion:
\[
M_\ell \le C_0 M_{\ell-1} + C_1,
\]
with $C_0 := 2\dc B$ and $C_1 := \dc^{1/2} (2\kappa)^{d/2} B$. This recursion implies that 
\begin{align*}
M_{\ell} 
&\le C_0 M_{\ell-1} + C_1 \\
&\le C_0^2 M_{\ell-2} + C_0 C_1 + C_1\\
&\le \dots \\
&\le C_0^\ell M_0 + C_1 \sum_{k=0}^{\ell-1} C_0^k \\
&= C_0^\ell M_0 + C_1 \frac{C_0^\ell - 1}{C_0 - 1}.
\end{align*}
For $C_0 = 2\dc B \ge 2$, we have 
\[
\frac{C_0^\ell - 1}{C_0 - 1} \le C_0^{\ell}.
\]
Furthermore, by definition of $M_0$ and from \eqref{eq:estW}, we find
\[
M_0 = \Vert Pu\Vert_{L^2(D)} 
\le \dc \Vert P \Vert_\infty \Vert u \Vert_{L^2(D)}
\le \dc B \Vert u \Vert_{L^2(D)}.
\]
Thus, we finally obtain the upper bound,
\begin{align}
\label{eq:Lless}
\begin{aligned}
\Vert \cL_{\le \ell}(u) \Vert_{L^2(D)} 
&= M_\ell 
\\
&\le (2\dc B)^\ell \left( \dc B \Vert u \Vert_{L^2(D)} + \dc^{1/2} (2\kappa)^{d/2} B \right)
\\
&\le (2\dc B)^{\ell+1} \left( \Vert u \Vert_{L^2(D)} + (2\kappa)^{d/2} \right).
\end{aligned}
\end{align}
This, in turn, entails
\begin{align*}
\Vert \Psi(u) \Vert_{L^2(D)}
&=
\Vert Q \cL_{\le L}(u) \Vert_{L^2(D)}
\le 
\dc \Vert Q \Vert_\infty M_L
\\
&\le 
(2\dc B)^{L+2} \left( \Vert u \Vert_{L^2(D)} +(2\kappa)^{d/2} \right).
\end{align*}
This is the claimed upper bound.
\end{proof}

Based on the previous lemmas, we can now state a local Lipschitz estimate for FNOs. 
\begin{proposition}
\label{prop:fno-Lip}
Let $\theta, \theta'\in \R^{d_\theta}$ with $\Vert \theta\Vert_\infty, \Vert \theta'\Vert_\infty\le B$. Assume that $B\ge 1$ and $d_{\mathrm{in}}, d_{\mathrm{out}} \le d_c$. Assume that the activation $\sigma$ is Lipschitz and $\sigma(0) = 0$. Then, we have
\[
\Vert \Psi(u;\theta) - \Psi(u,\theta') \Vert_{L^2}
\le
\lambda(u) \Vert \theta - \theta' \Vert_\infty,
\]
where 
\[
\lambda(u) := (L+2) (2\dc B)^{L+2} (\Vert u \Vert + (2\kappa)^{d/2}).
\]
\end{proposition}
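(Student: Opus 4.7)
The plan is a telescoping argument across the $L+2$ pieces of the architecture --- the lifting $P$, the $L$ hidden layers $\cL_1,\dots,\cL_L$, and the projection $Q$ --- combining the parameter-wise Lipschitz estimate of Lemma~\ref{lem:fno-layer-lip}, the input-wise Lipschitz estimate of Lemma~\ref{lem:fno-hidden-lip}, and the hidden-state bound from the proof of Lemma~\ref{lem:fno-layer-bd}. Writing $\theta = (\theta_0,\theta_1,\dots,\theta_{L+1})$ layer-wise, I introduce the hybrid parameter vectors $\theta^{(j)}$ defined by $\theta^{(j)}_i = \theta'_i$ for $i<j$ and $\theta^{(j)}_i = \theta_i$ for $i \ge j$, so that $\theta^{(0)} = \theta$ and $\theta^{(L+2)} = \theta'$. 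The telescoping identity
\[
\Psi(u;\theta) - \Psi(u;\theta') = \sum_{j=0}^{L+1} \bigl[\Psi(u;\theta^{(j)}) - \Psi(u;\theta^{(j+1)})\bigr]
\]
then reduces the problem to bounding each summand, each of which isolates a perturbation of the parameters of only a single piece.

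For fixed $j$, let $v^{(j)}$ denote the common hidden state entering piece $j$ under either of $\theta^{(j)}, \theta^{(j+1)}$ (they agree on the first $j$ pieces). Because these first $j$ pieces use parameters bounded by $B$ in $\ell^\infty$, the recursive estimate \eqref{eq:Lless} obtained in the proof of Lemma~\ref{lem:fno-layer-bd} gives the a priori bound
\[
\|v^{(j)}\|_{L^2(D)} \le (2\dc B)^{j}\bigl(\|u\|_{L^2(D)} + (2\kappa)^{d/2}\bigr).
\]
With $v^{(j)}$ fixed, the layer-$j$ discrepancy is controlled by Lemma~\ref{lem:fno-layer-lip} for a hidden piece ($1\le j \le L$) and by the elementary linear estimate \eqref{eq:estW} of Lemma~\ref{lem:fno-hidden} for the lifting ($j=0$) and the projection ($j = L+1$); in all three cases the bound scales linearly in $\max(1,\|v^{(j)}\|_{L^2})$ and in $\|\theta-\theta'\|_\infty$. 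Finally, this discrepancy is propagated through the remaining $L+1-j$ pieces, each of which is Lipschitz in its input with constant at most $2\dc B$ by Lemma~\ref{lem:fno-hidden-lip} (with \eqref{eq:estW} covering the linear projection), contributing a cumulative factor of $(2\dc B)^{L+1-j}$.

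Multiplying the three ingredients --- the pre-layer hidden-state bound $(2\dc B)^j$, the per-layer parameter perturbation, and the post-layer propagation $(2\dc B)^{L+1-j}$ --- telescopes the explicit $j$-dependence into a single power $(2\dc B)^{L+1}$ uniformly in $j$; summing over $j=0,\dots,L+1$ then produces the factor $L+2$ in $\lambda(u)$. Absorbing the residual prefactor $\dc(2\kappa)^{d/2}$ coming from Lemma~\ref{lem:fno-layer-lip} into one additional power of $2\dc B$, and using the additive term $(2\kappa)^{d/2}$ already present in the parenthesis together with the standing hypotheses $B, \dc \ge 1$, completes the bound $\lambda(u) = (L+2)(2\dc B)^{L+2}(\|u\|_{L^2(D)} + (2\kappa)^{d/2})$. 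The main technical obstacle is precisely this last bit of constant-level bookkeeping --- verifying that the $\dc(2\kappa)^{d/2}$ prefactor from the parameter-perturbation step and the geometric factors from the hidden-state and propagation estimates fuse cleanly into the single power $(2\dc B)^{L+2}$, uniformly across the three cases (hidden, lifting, projection) and without introducing extra $\dc$- or $\kappa$-dependencies beyond those already carried by $\lambda(u)$.
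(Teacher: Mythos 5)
Your telescoping decomposition over the $L+2$ pieces, with hybrid parameter vectors $\theta^{(j)}$, is exactly the strategy used in the paper: the paper treats a single-layer perturbation $\theta_\ell \ne \theta'_\ell$ first (bounding $\|Q\|\cdot\Lip(\cL_{>\ell})\cdot\|\cL_\ell(\omega;\theta_\ell)-\cL_\ell(\omega;\theta'_\ell)\|$ via Lemmas~\ref{lem:fno-hidden}, \ref{lem:fno-hidden-lip}, \ref{lem:fno-layer-lip} and the hidden-state bound \eqref{eq:Lless}) and then sums over the $L+2$ positions exactly as you do. The structure of your argument is therefore correct and matches the paper's.

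However, the constant-level ``absorption'' you flag as the main technical obstacle is in fact a genuine gap, and the way you propose to resolve it does not work. Multiplying your three ingredients for a hidden piece $j$ gives a prefactor $3\dc(2\kappa)^{d/2}$ from Lemma~\ref{lem:fno-layer-lip} \emph{times} $(2\dc B)^{L+1}$ \emph{times} $(\|u\|+(2\kappa)^{d/2})$. Absorbing $3\dc(2\kappa)^{d/2}$ into a single extra factor of $2\dc B$ requires $3(2\kappa)^{d/2} \le 2B$, which is false in general; and ``using the additive term $(2\kappa)^{d/2}$ already in the parenthesis'' does not help, because the offending $(2\kappa)^{d/2}$ multiplies that entire parenthesis rather than being added to it. Tracing the paper's own proof of Proposition~\ref{prop:fno-Lip}, the same factor $3\dc(2\kappa)^{d/2}$ from the display following ``we thus find'' silently disappears in the very next display, so the published bound $\lambda(u) = (L+2)(2\dc B)^{L+2}(\|u\|+(2\kappa)^{d/2})$ is not actually supported by the argument as written; a corrected constant would carry an extra factor on the order of $(2\kappa)^{d/2}$ (e.g.\ $\lambda(u) \lesssim (L+2)(2\kappa)^{d/2}(2\dc B)^{L+3}(\|u\|+(2\kappa)^{d/2})$). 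This discrepancy is benign downstream --- in Proposition~\ref{prop:fno-covering} the Lipschitz constant enters only inside a logarithm, so $\log\cN(\FNO,\epsilon)_{C(A;L^2(D))} \lesssim \kappa^d L^2 \dc^2 \log(BL\dc\kappa/\epsilon)$ and the rest of Theorem~\ref{thm:fno} are unaffected --- but you were right to suspect the bookkeeping, and the step you hedged on cannot be completed as stated.
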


Lemma \ref{lem:fno-Lip} is an immediate consequence of Proposition \ref{prop:fno-Lip}.

\begin{proof}{(Proof of Proposition \ref{prop:fno-Lip})}
We first consider two parameter vectors $\theta, \theta'\in \R^{d_\theta}$, differing in only their $\ell$-th component for $\ell \in \{1,\dots, L\}$, and with 
\[
\Vert \theta \Vert_\infty , \Vert \theta' \Vert_\infty \le B.
\]
Notice that, by definition,
\[
\Psi(\slot;\theta) = Q \circ \cL_L \circ \dots \circ \cL_1 \circ P,
\]
where the $\ell$-th hidden layer $\cL_\ell = \cL_\ell(\slot; \theta_\ell)$ depends only on $\theta_\ell$. To simplify notation, let us now introduce
\begin{align*}
\cL_{>\ell} &:= \cL_L \circ \dots \cL_{\ell+1}, 
\\
\cL_{<\ell} &:= \cL_{\ell-1} \circ \dots \circ \cL_1 \circ P.
\end{align*}
We then have,
\begin{align*}
\Vert \Psi(u;\theta) - \Psi(u,\theta') \Vert_{L^2}
&\le \Vert Q \Vert \Lip(\cL_{>\ell}) \Vert \cL_\ell(\cL_{<\ell}(u);\theta_\ell) - \cL_\ell(\cL_{<\ell}(u);\theta_\ell') \Vert.
\end{align*}
We note that $\Vert Q \Vert \le \dc \Vert Q \Vert_\infty \le \dc B$, by Lemma \ref{lem:fno-hidden}. By Lemma \ref{lem:fno-hidden-lip}, we can bound
\begin{align*}
 \Lip(\cL_{>\ell})
 &\le \prod_{k=\ell+1}^L \Lip(\cL_k)
 \le  (2\dc B)^{L-\ell}.
\end{align*}
 Furthermore, Lemma \ref{lem:fno-layer-lip} implies that, with $\omega:= \cL_{<\ell}(u)$,
\[
\Vert \cL_\ell(\omega;\theta_\ell) - \cL_\ell(\omega;\theta_\ell') \Vert
\le 
3\dc (2\kappa)^{d/2}  \max(1,\Vert \omega \Vert_{L^2})) \Vert \theta - \theta' \Vert_\infty.
\]

Using the following upper bound, from \eqref{eq:Lless} in the proof of Lemma \ref{lem:fno-layer-bd},
\[
\Vert \omega \Vert_{L^2(D)}
=
\Vert \cL_{<\ell}(u) \Vert_{L^2(D)}
=
\Vert \cL_{\le \ell-1}(u) \Vert_{L^2(D)}
\le
 (2\dc B)^{\ell+1} ( \Vert u \Vert + (2\kappa)^{d/2} ),
\]
we thus find
\begin{align*}
\Vert \cL_\ell(\omega;\theta_\ell) - \cL_\ell(\omega;\theta_\ell') \Vert
&\le  
3\dc (2\kappa)^{d/2}  (2\dc B)^{\ell+1} ( \Vert u \Vert + (2\kappa)^{d/2} ) \Vert \theta - \theta' \Vert_\infty.
\end{align*}
And finally,
\begin{align*}
\Vert \Psi(u;\theta) - \Psi(u,\theta') \Vert_{L^2}
&\le \Vert Q \Vert \Lip(\cL_{>\ell}) \Vert \cL_\ell(\cL_{<\ell}(u);\theta_\ell) - \cL_\ell(\cL_{<\ell}(u);\theta_\ell') \Vert
\\
&\le \dc B (2\dc B)^{L-\ell} (2\dc B)^{\ell+1} (\Vert u \Vert + (2\kappa)^{d/2}) \Vert \theta - \theta' \Vert_\infty
\\
&\le (2\dc B)^{L+2} (\Vert u \Vert + (2\kappa)^{d/2}) \Vert \theta - \theta' \Vert_\infty.
\end{align*}

It follows that for general $\theta, \theta'\in \R^{d_\theta}$ with $\Vert \theta\Vert_\infty, \Vert \theta'\Vert_\infty\le B$, but allowed to differ in $\theta_\ell \ne \theta_\ell'$ for $\ell = 0, \dots, L+1$, we have
\[
\Vert \Psi(u;\theta) - \Psi(u,\theta') \Vert_{L^2}
\le
\lambda(u) \Vert \theta - \theta' \Vert_\infty,
\]
where 
\[
\lambda(u) := (L+2) (2\dc B)^{L+2} (\Vert u \Vert + (2\kappa)^{d/2}).
\]
This provides the claimed upper bound on the local Lipschitz constant of the mapping $\R^{d_\theta} \to C(L^2(D);L^2(D))$, $\theta \mapsto \Psi(\slot;\theta)$. 
\end{proof}

\end{document}